\documentclass{article}

\usepackage{microtype}
\usepackage{graphicx}
\usepackage{subcaption}
\usepackage{booktabs} 
\usepackage{subcaption}
\usepackage{hyperref}

\usepackage[accepted]{configuration/icml2026}

\usepackage{amsmath,amssymb,amsthm}
\newtheorem{theorem}{Theorem}[section]
\newtheorem{lemma}[theorem]{Lemma}
\newtheorem{corollary}[theorem]{Corollary}
\newtheorem{definition}[theorem]{Definition}
\newtheorem{example}[theorem]{Example}
\newtheorem{proposition}[theorem]{Proposition}

\newtheorem{assumption}{Assumption}

\providecommand{\E}{{\mathbb E}}
\providecommand{\E}[1]{{\mathbb E}\left.#1\right. }        %
\newenvironment{talign*}
{\csname align*\endcsname}
{\endalign}

\usepackage[utf8]{inputenc}         %
\usepackage[T1]{fontenc}            %
\usepackage{url}                    %
\usepackage{booktabs}               %
\usepackage{amsfonts}               %
\usepackage{nicefrac}               %
\usepackage{microtype}              %
\usepackage{xcolor}                 %
\usepackage{algorithm}
\usepackage{algorithmic}
\usepackage{graphicx}
\usepackage{subcaption}
\usepackage[flushleft]{threeparttable}
\usepackage{float}
\usepackage{multirow}
\usepackage{xspace}
\usepackage{natbib}
\usepackage{enumitem}
\usepackage[font=small]{caption}
\usepackage{autobreak}
\usepackage{sidecap}
\usepackage{wrapfig}
\usepackage{bbding}
\usepackage[toc, page, header]{appendix}
\usepackage{tikz}
\usepackage{xcolor}
\usepackage{pifont}
\usepackage{mdframed}
\usepackage{xurl}
\usepackage{mathtools}
\usepackage{bm}
\usepackage{colortbl}
\usepackage[capitalize,noabbrev]{cleveref}
\usepackage{tcolorbox}
\tcbuselibrary{breakable, skins}
\newtcolorbox{takeaway}{
	colback=gray!10,
	colframe=gray!50,
	boxrule=0.4pt,
	left=2pt,right=2pt,top=2pt,bottom=2pt,
}
\newtcolorbox{taskbox}{
	breakable,
	colback=gray!10,
	colframe=black,
	boxrule=0.5pt,
	arc=0pt,
	outer arc=0pt,
	left=10pt,right=10pt,top=5pt,bottom=5pt,
	boxsep=0pt,
	width=0.9\columnwidth,
	center,
}

\hypersetup{
	colorlinks=true,
	linkcolor=blue,
	citecolor=blue,
	urlcolor=blue
}

\definecolor{coral}{RGB}{255,127,80}
\definecolor{darkgreen}{RGB}{0,100,0}
\definecolor{darkyellow}{RGB}{204,153,0}
\definecolor{salmon}{RGB}{250,128,114}
\definecolor{active_principles}{RGB}{133, 20, 44}
\definecolor{working_set}{RGB}{115, 23, 37}
\definecolor{anomalies}{RGB}{237, 71, 100}
\definecolor{coherent_augmentation}{RGB}{37, 87, 175}
\definecolor{true_principle}{RGB}{131, 107, 183}

\definecolor{superior}{HTML}{7D2AD1}
\definecolor{inferior}{HTML}{D14D4D}
\definecolor{headercolor}{gray}{0.90}
\definecolor{rowcolor}{gray}{0.97}

\usepackage{xspace}
\newcommand{\method}{\textsc{PiEvo}\xspace}
\newcommand{\pa}{\textsf{Principle Agent}\xspace}
\newcommand{\ha}{\textsf{Hypothesis Agent}\xspace}
\newcommand{\ea}{\textsf{Experiment Agent}\xspace}

\usepackage[textsize=tiny]{todonotes}

\icmltitlerunning{\method: Principle-Evolvable Scientific Discovery via Uncertainty Minimization}

\begin{document}

\twocolumn[
  \icmltitle{Principle-Evolvable Scientific Discovery via Uncertainty Minimization}



  \icmlsetsymbol{equal}{*}

  \begin{icmlauthorlist}
    \icmlauthor{Yingming Pu}{zju,westlake,email}
    \icmlauthor{Tao Lin}{westlake}
    \icmlauthor{Hongyu Chen}{westlake}
  \end{icmlauthorlist}

  \icmlaffiliation{westlake}{\,Westlake University, Hangzhou, Zhejiang, China.}
  \icmlaffiliation{zju}{\,Zhejiang University, Hangzhou, Zhejiang, China.}
  \icmlaffiliation{email}{\,Email to: puyingming@westlake.edu.cn}

  \icmlcorrespondingauthor{Tao Lin}{lintao@westlake.edu.cn}

  \icmlkeywords{Machine Learning, ICML}

  \vskip 0.3in
]



\printAffiliationsAndNotice{}  

\begin{abstract}
	Large Language Model (LLM)-based scientific agents have accelerated scientific discovery, yet they often suffer from significant inefficiencies due to adherence to fixed initial priors.
	Existing approaches predominantly operate within a static hypothesis space, which restricts the discovery of novel phenomena, resulting in computational waste when baseline theories fail.
	To address this, we propose shifting the focus from searching hypotheses to evolving the underlying scientific principles.
	We present \textbf{\method}, a principle-evolvable framework that treats scientific discovery as Bayesian optimization over an expanding principle space. By integrating Information-Directed Hypothesis Selection via Gaussian Process and an anomaly-driven augmentation mechanism, $\method$ enables agents to autonomously refine their theoretical worldview.
	Evaluation across four benchmarks demonstrates that $\method$ (1) achieves an average solution quality of up to 90.81\%$\sim$93.15\%, representing a 29.7\%$\sim$31.1\% improvement over the state-of-the-art, (2) attains an 83.3\% speedup in convergence step via significantly reduced sample complexity by optimizing the compact principle space, and (3) maintains robust performance across diverse scientific domains and LLM backbones. Code is publicly available at \hyperlink{https://github.com/amair-lab/PiEvo}{github.com/amair-lab/PiEvo}.
\end{abstract}

\section{Introduction}
Large Language Model (LLM)-based scientific agents have significantly accelerated the pace of automated scientific discovery~\citep{Gridach2025AgenticAF, Wei2025FromAF, Zhang2025TheER, Tang2025AIResearcherAS, Lu2024TheAS, Yamada2025TheAS}. By automating complex reasoning and hypothesis generation, these systems have shown potential to transform research methodologies across fundamental fields, including chemistry~\citep{Boiko2023AutonomousCR}, biology~\citep{Mitchener2025KosmosAA}, physics~\citep{Ghafarollahi2024SciAgentsAS}, and material science~\citep{Panapitiya2025AutoLabsCM}.
Despite these advancements, existing approaches predominantly maximize exploitation within a fixed search space.

\begin{figure}[t!]
	\centering
	\includegraphics[width=0.95\linewidth]{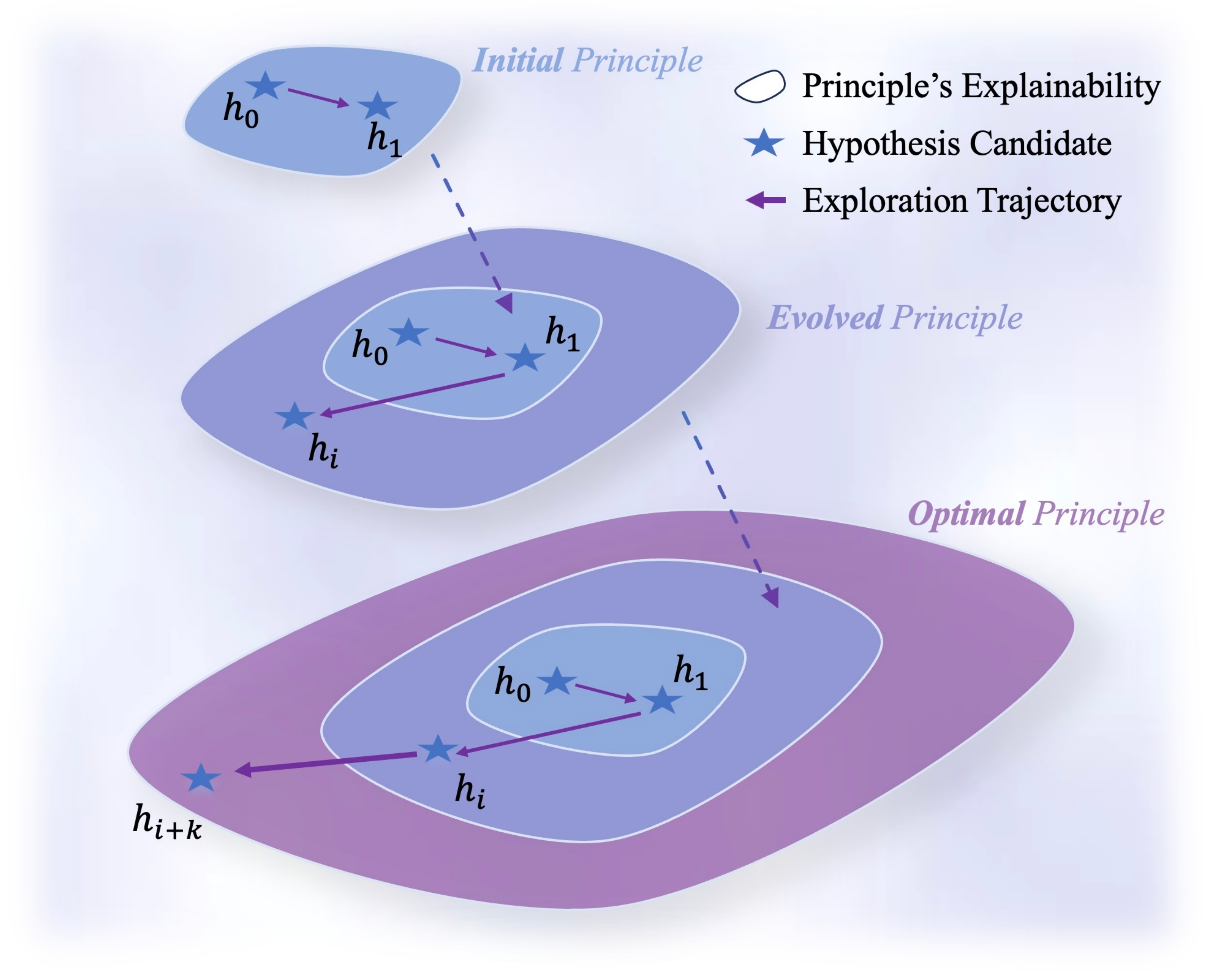}
	\caption{\textbf{Illustration of principle evolution.} \method evolves the optimizable principle space by exploring hypothesis candidates progressively, and the system seeks a principle capable of explaining or predicting the observed phenomena (e.g., empirical observations).}
	\label{fig:semantic_illustration}
	\vspace{-0.5cm}
\end{figure}

While recent frameworks have begun to incorporate scientific principles as guidance~\citep{Pu2025PiFlowPS, Lai2025PriMPM, Zhao2025DeepID}, they typically treat these principles, or the prior knowledge, as immutable constraints. Consequently, when agents encounter experimental evidence that contradicts their prior, they often discard valid observations as errors rather than recognizing them as signals for theoretical refinement and extension.
This rigidity leads to significant inefficiencies: \emph{agents explore the hypothesis space constrained by fixed priors without updating the ``cognition'' of real-world knowledge to emerging experimental evidence.}

These limitations culminate in three primary challenges: (a) \textbf{restricted scientific insight discovery}, where agents are blinded to novel phenomena that lie outside their initial prior; (b) \textbf{inefficiency in hypothesizing}, caused by the repetitive application of limited and inadequate prior knowledge; and (c) \textbf{inability to handle anomalies in discovery process}, where the system fails to leverage unexpected results to expand its conceptual boundaries.

To bridge this gap, we propose \textbf{\method}, which reformulates scientific discovery as \emph{Bayesian optimization over an evolving principle space}. Departing from the standard paradigm of static hypothesis searching~\citep{Pu2025PiFlowPS,Tang2025AIResearcherAS}, \textbf{\method} models scientific principles (Definition~\ref{def:scientific_principles}) as learnable probabilistic priors that actively guide exploration. 
A principle is higher-level than a hypothesis: it constrains a family of plausible hypotheses rather than specifying a single testable candidate. 
Crucially, \textbf{\method} employs an \emph{anomaly-driven epistemic evolution} mechanism (Figure~\ref{fig:semantic_illustration}): high-surprisal experimental outcomes---typically discarded as noise---are leveraged to catalyze \emph{principle space expansion} and belief updates. This fosters a \emph{dual-loop optimization} cycle: refined principles constrain the search to high-utility manifolds, while anomalies drive the structural evolution of the principles themselves, ensuring the agent's internal model scales in complexity with empirical data.

\begin{definition}[Scientific Principles]
	\label{def:scientific_principles}
	In this paper, we use the term \emph{principle} to denote a high-level scientific mechanism that constrains which hypotheses are plausible, such as a textual statement describing a correlation, causal relation, or structural rule in a domain. See Example~\ref{ex:concrete_example} for an instantiated principle-hypothesis pair.
\end{definition}

We evaluate \textbf{\method} across four diverse benchmarks in physics, chemistry, biology, and materials science. Empirical results demonstrate that \textbf{\method}: \ding{202} \textbf{Establishes state-of-the-art performance}, achieving an average solution quality of up to 93.15\% and surpassing top-tier baseline~\citep{Pu2025PiFlowPS} by up to 31.06\%; \ding{203} \textbf{Enhances efficiency}, attaining 83.3\% higher sample efficiency than PiFlow while reducing inference time by ${\sim}$16\%; \ding{204} \textbf{Exhibits robust transferability} across tasks and LLMs; and \ding{205} \textbf{Uncovers fundamental mechanisms}, identifying novel principles in sub-wavelength chiral optics.

In summary, our contributions are:
\begin{enumerate}[nosep, leftmargin=16pt]
	\item[(a)] We introduce the concept of \textbf{Principle Evolution}, a paradigm shift that transforms scientific discovery from searching in a static hypothesis space to optimizing the underlying scientific principle space, which constrains vast regions of potential hypotheses.
	\item[(b)] We propose \textbf{\method}, a framework that automatically evolves tentative principles via Bayesian updates and adaptively extends the principle space based on experimental feedback, turning anomalies into drivers of scientific discovery.
	\item[(c)] We conduct \textbf{extensive evaluations} across four real-world tasks, demonstrating that \method improves discovery efficiency with 20\%$\sim$30\% higher performance compared to SOTA, while validating its ability to aid the discovery of physical principles in a blind case study.
\end{enumerate}

\begin{figure*}[t!]
	\centering
	\includegraphics[width=0.80\linewidth]{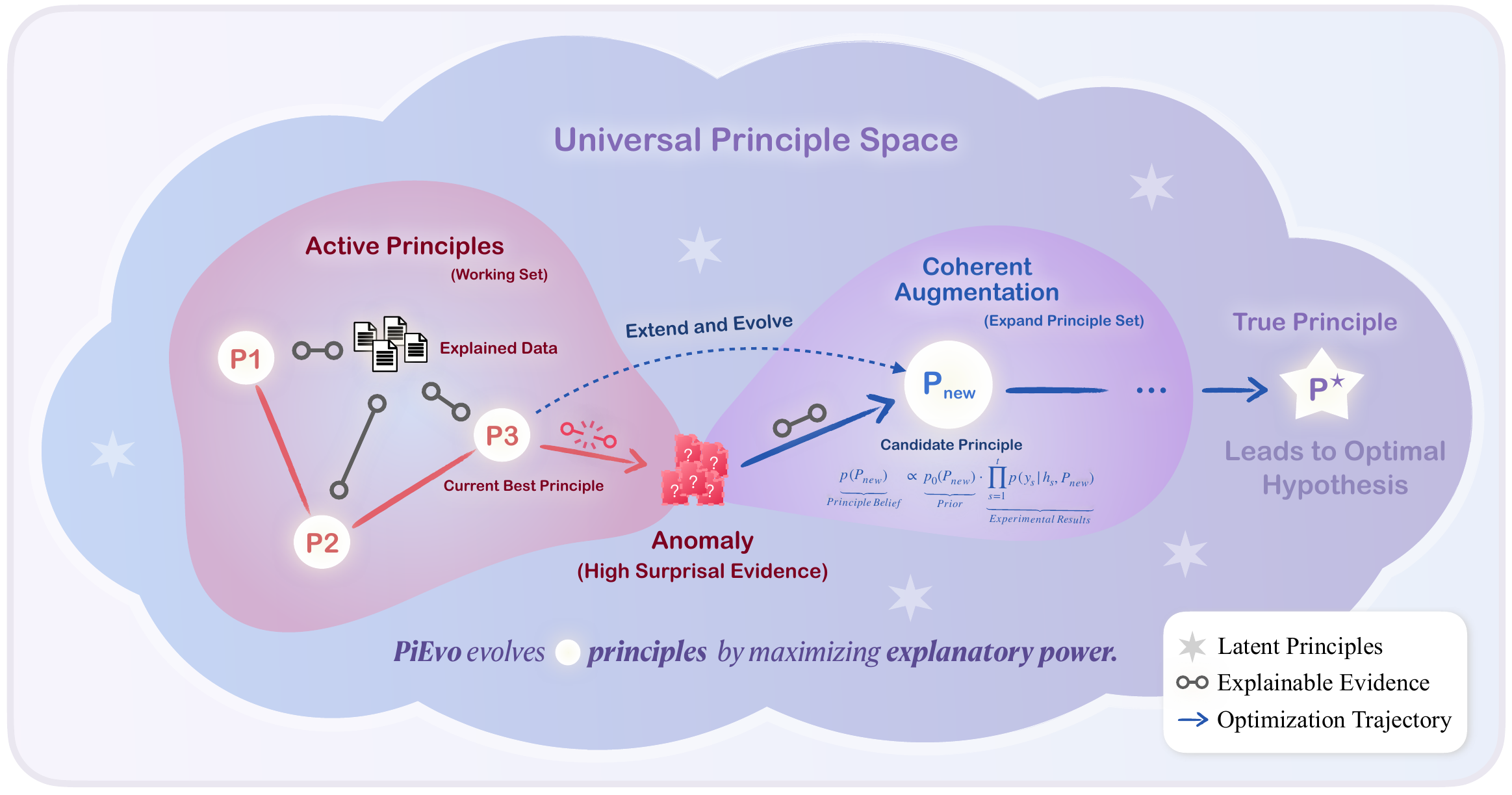}
	\caption{\textbf{Evolutionary trajectory of the principle space via Coherent Augmentation.} The system guided by \method operates within a restricted \emph{Active Principles}; however, when \emph{high-surprisal} anomalies suggest epistemic stagnation, the system triggers a discovery phase. \method integrates \emph{new candidate principles} ($P_{new}$) to reconcile these anomalies while maintaining consistency with the whole historical evidence through \emph{Coherent Augmentation}. This iterative evolution drives the optimization trajectory toward the True Principle $P^\star$, enabling the identification of the optimal hypothesis candidate.}
	\label{fig:pievo_architecture}
\end{figure*}

%
%

\section{Related Work}

\subsection{Autonomous Agents for Scientific Discovery}
\label{subsec:related_agents}
The integration of Large Language Models (LLMs) into scientific workflows has catalyzed a paradigm shift from manual experimentation to autonomous discovery~\citep{Gridach2025AgenticAF, Wei2025FromAF, Zheng2025FromAT}.
Initial efforts focused on specialized assistants for specific domains, such as organic synthesis~\citep{Bran2023ChemCrowAL} and material design~\citep{Panapitiya2025AutoLabsCM}.
Recent advancements have expanded into general-purpose \textit{AI Scientists} capable of end-to-end research management. Systems like The-AI-Scientist~\citep{Lu2024TheAS, Yamada2025TheAS}, AI-Researcher~\citep{Tang2025AIResearcherAS}, and Agent Laboratory~\citep{Schmidgall2025AgentLU} demonstrate proficiency in ideation, coding, and manuscript generation.
Concurrently, multi-agent frameworks have emerged to enhance reliability through collaborative reasoning~\citep{Su2024ManyHA, Zhang2025TheER} and comprehensive workflow management~\citep{Chai2025SciMasterTG, Mitchener2025KosmosAA}.
However, a distinct limitation persists: these systems largely operate within the bounds of their pre-trained knowledge or static external retrievals. While principle-aware frameworks~\citep{Pu2025PiFlowPS} introduce scientific rules as constraints, they treat these principles as immutable priors. This rigidity prevents agents from challenging foundational assumptions, restricting their ability to discover phenomena that contradict established theories~\citep{Wang2023HypothesisSI, Xie2025HowFA}.

\subsection{Epistemic Evolution and Active Learning}
\label{subsec:related_evolution}
To transcend static capabilities, recent research focuses on \textit{self-evolving} agents that refine their behavior through interaction~\citep{Zhang2025TheER, Zheng2025FromAT}.
Frameworks like RAGEN~\citep{Wang2025RAGENUS}, Agent0~\citep{Xia2025Agent0US}, and EvolveR~\citep{Wu2025EvolveRSL} utilize reinforcement learning and self-generated curricula to optimize procedural strategies from experience.
Parallel to this, Bayesian inference and Active Learning (AL) serve as the theoretical backbone for data-efficient decision-making in uncertainty-laden environments~\citep{Zhou2025CombiningBI, Xia2025FromST}.
These methodologies have been successfully adapted for LLM-driven planning~\citep{He2024FromWT} and curiosity-driven world modeling~\citep{Levy2025WorldLLMIL}, enabling agents to maximize information gain with limited feedback.
Despite these successes, a critical gap remains in the \textit{target} of optimization. Existing evolutionary and Bayesian frameworks predominantly optimize \textit{procedural parameters} (e.g., policy weights, prompts) or \textit{fact accumulation}. They do not apply Bayesian updates to the \textit{semantic principles} themselves. \method bridges this gap by formalizing scientific principles not as fixed constraints, but as probabilistic beliefs to be iteratively optimized, thereby enabling the discovery of new physical laws through anomaly-driven active learning.

%
%

\section{Methodology}
\label{sec:Methodology}
\subsection{Overview}
\label{subsec:methodology_overview}
Figure~\ref{fig:pievo_architecture} illustrates the general concept of \emph{principle-evolving}. Unlike existing methods that operate with fixed hypothesis space~\citep{Tang2025AIResearcherAS,Lu2024TheAS,Yamada2025TheAS,Pu2025PiFlowPS}, \method actively optimizes \textit{scientific principles}, i.e., the natural-language proposition encoding a candidate scientific regularity, mechanism, or correlation. Leveraging Bayesian inference, it updates the posterior over principles using experimental evidence and extends its Active Principles to efficiently search hypotheses.

As illustrated in Figure~\ref{fig:pievo_injection}, \method is a strategic layer that connects to a minimal scientific agent system through prompt injection per iteration: (a) \pa is guided to expand the Active Principles scope (see Section~\ref{subsec:coherent_principle_augmentation}) or wait for other two agents to collect enough evidence, (b) \ha is guided by current Maximum A Posteriori (MAP) principle as a conditional prompt to propose hypothesis candidates (see Appendix~\ref{sec:agent_prompts}), and (c) \ea is guided to test the only one candidate selected by \method (see Section~\ref{subsec:Information_Directed_Hypothesis_Selection}). These modules establish the iteration-level directing, allowing to explore hypotheses from evolvable principle scope. 
We illustrate this principle-to-hypothesis-to-evidence pipeline with a nanohelix design example in Example~\ref{ex:concrete_example}.

\begin{example}
	\label{ex:concrete_example}
	A candidate \textbf{scientific principle ($P$)} states that ``Maximum g-factor arises from Toroidal Anapole Resonance, which requires a tight helix radius, high turn density, and a short pitch length to enable destructive multipolar interference.'' 
	A \textbf{hypothesis $h$} is proposed: ``A nanohelix with fiber radius of 20.0 nm, helix radius of 60.0 nm, 6.0 turns, pitch length of 20.0 nm, in the wavelength of 400.0 nm through the light source of Z axis and forward light direction, is expected to yield a high circular dichroism and a high g-factor response.''
	\textbf{External validation function $f^\star$} is the true underlying physical mapping, represented in our benchmarks by a high-fidelity surrogate model. It takes the concrete hypothesis $h$ (i.e., structure parameters) as input and outputs the scalar observation $y$.
\end{example}

\begin{figure}[t!]
	\centering
	\includegraphics[width=0.88\linewidth]{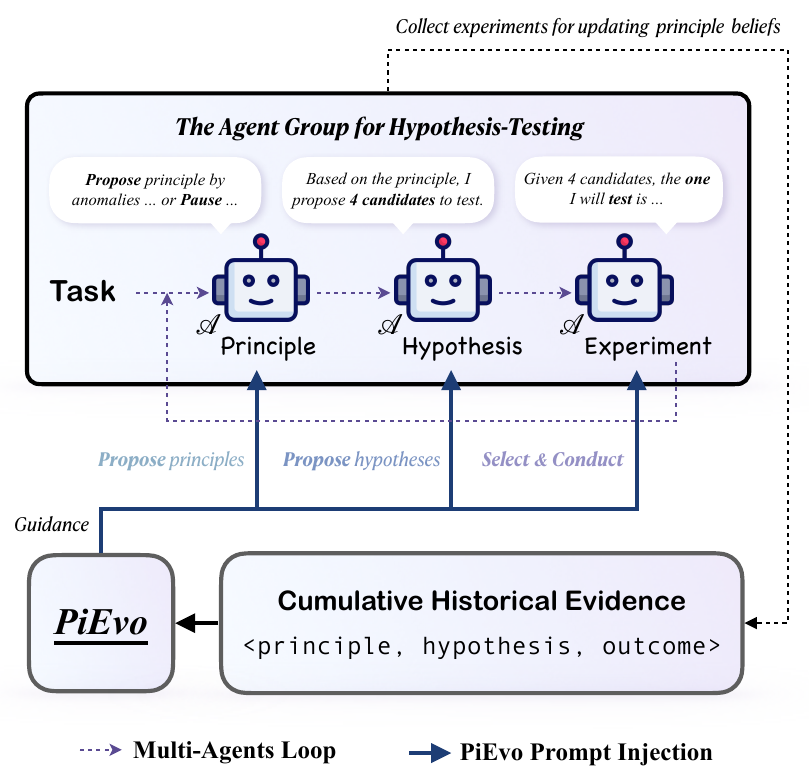}
	\caption{\textbf{Guidance Injection Mechanism.} \method generates strategic guidance with a prompt template (see Appendix~\ref{sec:agent_prompts}) via backend optimization: (a) Active Principles, (b) identified high-surprisal anomalies and (c) hypothesis selection, injecting it into agent's context to guide reasoning.}
	\label{fig:pievo_injection}
	\vspace{-12pt}
\end{figure}

\subsection{Theoretical Framework}
\label{subsec:theoretical_framework}
\begin{definition}[\textbf{Universal Principle Space}]
	\label{def:universal_principle_space}
	Let $\bar{\mathcal{P}}$ be a countable, universal set of potential scientific principles, endowed with a fixed prior distribution $p_0$ with finite entropy $H(p_0)$. At any time step $t$, the agent maintains a finite \textit{Active Principle} $\mathcal{P}_t \subseteq \bar{\mathcal{P}}$. The goal is to identify the unknown True Principle $P^\star$ while maximizing the outcome of hypotheses generated under it.
	\vspace{-7pt}
\end{definition}
In this section, we first define the interaction between internal \method optimization and the external physical validation, and then formulate the dual uncertainty minimization architecture in \method, i.e., \textbf{Principle $\xrightarrow{\scriptsize (a)}$ Hypothesis $\xrightarrow{\scriptsize (b)}$ Evidence}.

\noindent \textbf{Hypothesis \& Principle Space.} Let $\mathcal{H}$ denote the space of all possible experimental hypotheses (e.g., molecular structures) and $\bar{\mathcal{P}}$ denote the \textit{universal set of scientific principles} (Definition~\ref{def:universal_principle_space}). At any time $t$, the system maintains a set of \textit{Active Principles} $\mathcal{P}_t = \{P_1, \dots, P_n\} \subset \bar{\mathcal{P}}$.

\noindent \textbf{External Validation Function $f^\star$.}
An experiment consists of selecting a hypothesis $h \in \mathcal{H}$ and observing a real-valued outcome $y \in \mathbb{R}$ (e.g., bio-activity of a molecule). We model this as $y = f^\star(h) + \epsilon$, where $f^\star$ is the true underlying physical mapping (e.g., wet experiment, simulation, etc.) and $\epsilon$ is observational noise. In our study, $f^\star$ corresponds to surrogate model evaluations per task.

\noindent \textbf{Internal Principle Beliefs and Likelihoods.}
The epistemic state of \method in principle optimization is formalized as a \emph{belief} distribution $p_t(P)$ for each principle $P \in \mathcal{P}_t$, updated via Bayesian inference given the interaction history $H_{t-1} = \{(h_s, y_s)\}_{s=1}^{t-1}$. To enable belief update, we model the \emph{likelihood} $p(y \mid h, P)$ of observing an outcome $y$ given a hypothesis $h$ and a governing principle $P$ as a generative distribution, describing the expected experimental outcomes if $P$ were the true governing principle. With this \emph{likelihood}, \method can perform both hypothesis selection (see Section~\ref{subsec:Information_Directed_Hypothesis_Selection}) and coherent principle augmentation (see Section~\ref{subsec:coherent_principle_augmentation}).

To navigate $\bar{\mathcal{P}}$, \method models two distinct uncertainties that obstruct the identification of $P^\star$ as a closed loop:
\begin{enumerate}[nosep, leftmargin=18pt]
	\item[(a)] \textbf{Principle$\rightarrow$Hypothesis Uncertainty.} Given a principle as hypothesizing direction, this term captures the stochasticity in translating a high-level principle into a concrete, testable hypothesis $h$. In our study, this term is implemented via a well-instructed LLM to propose $h$ conditioned on one dynamically optimized principle.
	\item[(b)] \textbf{Evidence$\rightarrow$Principle Uncertainty.} Based on agent's history, this term represents the epistemic uncertainty regarding which principle in the current Active Principle scope $\mathcal{P}_t$ is the gold choice. This term is grounded by hypothesis selection and principles optimization.
\end{enumerate}

\begin{takeaway}
	\noindent \textbf{Overview of theoretical framework.} We formulate scientific discovery as a dual uncertainty problem over \textit{hypothesis} and \textit{principle} space, respectively:\\
	\textbf{(a)} \textbf{For \textit{hypothesis} space}, \method selects possible high-quality hypothesis to reduce the uncertainty of principle beliefs (see Section~\ref{subsec:Information_Directed_Hypothesis_Selection}).\\
	\textbf{(b)} \textbf{For \textit{principle} space}, \method acts as a monitor to maintain a posterior distribution over the working principles, iteratively refining a set of principle-conditioned likelihood models, i.e., Gaussian Process Experts, to converge to the true principle (see Section~\ref{subsec:coherent_principle_augmentation}).
\end{takeaway}

\subsection{Information-Directed Hypothesis Selection}
\label{subsec:Information_Directed_Hypothesis_Selection}
Scientific discovery inherently requires balancing \emph{\textbf{exploitation}} with \emph{\textbf{exploration}}. We illustrate this necessity through a motivating case in superconductor discovery:

\begin{example}[\textbf{Reward Trap of Paradigm Exploitation}]
	\label{ex:motivation_of_hypothesis_selection}
	Consider the search for room-temperature superconductors: Outcome-driven agents (e.g., PiFlow) rigorously hill-climb within established systems like \emph{Cuprate} manifolds.
	While capable of local optimization, they remain blind to counter-intuitive regimes (e.g., \emph{Iron-based} families) that appear suboptimal under the current paradigm.
\end{example}

To address this gap, we introduce hypothesis selection, while \ha is only responsible for suggesting diverse candidates and \ea for executing the one selected by \method. By valuing \emph{information gain} alongside \emph{yield}, the system validates emerging principles that eventually unlock superior hypothesis regions.



\noindent \textbf{Information-Directed Hypothesis Selection.} By deducing from our decomposition of the scientific discovery problem, we adopt the Information-Directed Sampling (IDS) \citep{Russo2014LearningTO} to select the next hypothesis $h_t$ (details refer to Appendix ).  IDS minimizes the objective:
\begin{equation}
	h_t = \arg\min_{h \in \mathcal{H}} \nicefrac{\big( \Delta_t(h) \big)^2}{I_t(h)} \,,
	\label{eq:IDS_hypothesis_selection}
\end{equation}
where $\Delta_t(h) = \mathbb{E}_{p_t(P)}[v^\star(P)] - \mathbb{E}_{p_t(P)}[\mu_P(h)]$ is the expected regret relative to the estimated optimal value $v^\star(P)$ under current beliefs. The denominator $I_t(h) = I(P; Y \mid h, H_{t-1})$ is the \textit{Information Gain}, defined as the expected reduction in the entropy of the posterior $p_t(P)$. This ratio ensures that the agent only incurs high regret (exploration costs) if the expected information gain regarding the principle $P$ is commensurately high.

\noindent \textbf{Modeling Likelihoods by GP Experts.}
\label{paragraph:GP_experts}
To compute the required likelihoods $p(y \mid h, P)$ in a sample-efficient manner, we utilize distinct \textit{Gaussian Process (GP) experts} $\{\mathcal{M}_P\}_{P \in \mathcal{P}_t}$ rather than neural alternatives for three reasons.
(i) \textbf{Data efficiency}: GPs provide robust generalization and exact Bayesian inference in \emph{sparse-data regimes} where experimental budgets are severely constrained (e.g., $T < 30$).
(ii) \textbf{Calibrated uncertainty}: Unlike typical deep models, GPs yield properly calibrated predictive variances $\sigma_P^2(h)$, which are essential for the information-theoretic selection in Equation~\ref{eq:IDS_hypothesis_selection}.
(iii) \textbf{Analytical computation}: GPs allow for the analytical computation of $\Delta_t(h)$ and $I_t(h)$ via the BALD approximation \citep{Houlsby2011BayesianAL}, bypassing the need for computationally expensive ensemble retraining.

To bridge the modality gap between textual propositions and numerical GP kernels, we employ semantic alignment as a theoretically rigorous proxy for physical likelihoods (proof in Appendix~\ref{sec:gp_expert_implementation}):
\begin{equation}
	\phi(h, P) = \big[ \bm{e}_h \cdot \bm{e}_P, \quad \|\bm{e}_h - \bm{e}_P\|_2 \big]^\top \,,
	\label{eq:embedding_gp_features}
\end{equation}
where $\bm{e}_h$ and $\bm{e}_P$ are normalized embeddings of the hypothesis and principle, respectively. This allows us to leverage light efforts to achieve likelihood estimation, enabling efficient GPs training and inference.

\begin{takeaway}
	\noindent \textbf{Takeaway (Hypothesis Selection in \method).} Overall, we use GPs to model likelihoods $p(y | h, P)$ for updating the principle beliefs and selecting the exploration-exploitation balanced hypothesis for testing.
\end{takeaway}


\subsection{Coherent Augmentation for Principles}
\label{subsec:coherent_principle_augmentation}
In scientific discovery, the initial conception of the external world is often incomplete. If the true principle is missing, the agent may fall into epistemic stagnation. \method resolves this through an anomaly-driven expansion of the principle space, including (a) identifying anomalies and (b) reconciling anomalies by proposing a new principle, as detailed below.
\looseness=-1

\noindent \textbf{Identifying Anomalies.} The system monitors the validity of principles by first computing the \emph{Anomaly Score} $S_s$, a bounded metric proxying the epistemic surprisal of experimental evidence. This score quantifies the degree of mismatch between observation and GP prediction, accounting for both model uncertainty and observation noise. For a new observation $(h_s, y_s)$, the \emph{Anomaly Score} under the current Maximum A Posteriori (MAP) principle $P_t^{\text{MAP}}$ is:
\begin{equation}
	S_s = 1 - \exp \left( -\sqrt{ \frac{(y_s - \mu_{\text{MAP}}(h_s))^2}{\sigma_{\text{MAP}}^2(h_s) + \sigma_{\text{obs}}^2} } \right) > \theta_t \,,
	\label{eq:identify_anomalies}
\end{equation}
where $\theta_t \in (0, 1)$ is anomaly threshold, $\mu_{\text{MAP}}(h_s)$ and $\sigma_{\text{MAP}}^2(h_s)$ are the predictive mean and variance under the MAP principle, and $\sigma_{\text{obs}}^2$ is the observation noise variance. An observation is flagged as anomalous if $S_s > \theta_t$. A collection of high-surprisal hypotheses forms the \textit{Anomaly Set} $\mathcal{U}_t$, signaling that the current principle space $\mathcal{P}_t$ is insufficient to explain the observed phenomena.

\noindent \textbf{Reconciling Anomaly with Coherent Principle Augmentation.} Upon detected anomalies, \method triggers a discovery phase where the \pa proposes a new candidate $P_{\text{new}} \in \bar{\mathcal{P}}$ to resolve the contradictions in $\mathcal{U}_t$. Finally, the posterior belief of Active Principles $\mathcal{P}_t$ is re-calculated over the augmented set $\mathcal{P}_{t+1} = \mathcal{P}_t \cup \{P_{\text{new}}\}$ by:
\begin{equation}
	p_{t+1}(P) \propto p_0(P) \prod_{s=1}^t p(y_s \mid h_s, P) \,.
	\label{eq:update_principle_belief}
\end{equation}
The new principle is only adopted if it provides superior explanatory power for $H_t$.

\begin{theorem}[Convergence and Efficiency, Informal]
	\label{thm:methodology_main_convergence}
	Under coherent principle augmentation and a calibrated generator, the \method optimization loop guarantees (formally stated in Theorem~\ref{thm:main}): (a) \textbf{sublinear cumulative regret}, where the cumulative regret follows $R(T) = \tilde{\mathcal{O}}(\sqrt{T})$, and (b) \textbf{posterior consistency}, where the system belief $p_t(P)$ concentrates on the optimal principle $P^\star$.
\end{theorem}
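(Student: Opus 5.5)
The plan is to reduce the \method loop to a Bayesian learner over a countable hypothesis class, here the universal principle space $\bar{\mathcal{P}}$ with prior $p_0$ (Definition~\ref{def:universal_principle_space}). The regret bound will then come from the information-ratio analysis of Information-Directed Sampling \citep{Russo2014LearningTO}, and consistency from a likelihood-ratio martingale argument. The key modeling step is to treat the evolving Active Principles $\mathcal{P}_t$ as a truncation of $\bar{\mathcal{P}}$. Under \emph{coherent augmentation}, the posterior in Equation~\ref{eq:update_principle_belief} is always recomputed from $p_0$ on the full history $H_t$. So once $P^\star\in\mathcal{P}_{t}$, the belief $p_t$ coincides with the restriction of the full Bayesian posterior on $\bar{\mathcal{P}}$, renormalized over $\mathcal{P}_t$.

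First I will formalize the assumptions. The \emph{calibrated generator} assumption means that the GP experts satisfy $p(y\mid h,P^\star)=$ the true law of $y=f^\star(h)+\epsilon$, with sub-Gaussian noise. It also means that the candidate set proposed by the \ha under the MAP principle contains, with nonvanishing probability, a near-optimal hypothesis for that principle. The \emph{coherent augmentation} assumption means that anomalies persist, i.e. $S_s>\theta_t$ recurs, only while $P^\star\notin\mathcal{P}_t$. Each such episode forces the \pa to add a principle with strictly larger marginal likelihood on $H_t$. Because $p_0(P^\star)>0$ and the log-likelihood advantage of $P^\star$ grows linearly under distinguishability, only finitely many augmentations can occur before $P^\star$ enters. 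I will bound this burn-in time $\tau$ in expectation by a constant depending on $\log(1/p_0(P^\star))$ and a separation gap. Its regret contribution is then $\mathcal{O}(\tau)$, which is absorbed into $\tilde{\mathcal{O}}(\sqrt{T})$.

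After time $\tau$, I will apply Russo--Van Roy: $\mathbb{E}[R(T)]\le\sqrt{\bar\Gamma\, T\, H(p_\tau)}$, where $\bar\Gamma$ bounds the information ratio $\Delta_t(h)^2/I_t(h)$ attained by the IDS choice in Equation~\ref{eq:IDS_hypothesis_selection}. Since $H(p_\tau)\le H(p_0)+\mathcal{O}(1)$ is finite by Definition~\ref{def:universal_principle_space}, the entropy factor is a constant. For bounding $\bar\Gamma$, I would first try the generic bound $\Gamma\le |\mathcal{A}|/2$ for finite action sets, using the finite candidate set proposed at each round. Failing that, I would use a GP-style bound that gives $\Gamma=\tilde{\mathcal{O}}(1)$ via the maximal information gain $\gamma_T$ of the kernel on the two-dimensional embedding feature $\phi(h,P)$ from Equation~\ref{eq:embedding_gp_features}. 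Because $\phi$ is low-dimensional, $\gamma_T$ is polylogarithmic for an RBF kernel, and this yields the $\tilde{\mathcal{O}}(\sqrt{T})$ rate of part (a).

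For part (b), posterior consistency, I will write $\log\frac{p_t(P)}{p_t(P^\star)}=\log\frac{p_0(P)}{p_0(P^\star)}-\sum_s \log\frac{p(y_s\mid h_s,P^\star)}{p(y_s\mid h_s,P)}$. The sum is a submartingale whose drift is the KL divergence between the predictive laws of $P^\star$ and $P$ at $h_s$. IDS guarantees that informative hypotheses are sampled: if all sampled $I_t$ vanished, then the regret $\Delta_t$ would vanish too, so the remaining principles would be indistinguishable at the optimum and consistency would hold modulo an equivalence class. An Azuma/sub-Gaussian concentration bound then gives $p_t(P)\to0$ for each $P\neq P^\star$. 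For countably many $P$, I will handle the tail uniformly using the summability of $p_0$ via a Barron/Schwartz-type argument.

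I expect the main obstacle to be controlling the information ratio uniformly. The action set is LLM-generated and varies over time, and $\Delta_t,I_t$ are computed from GP surrogates rather than the exact posterior. This step is where the calibrated-generator assumption must carry the weight, and I would state it explicitly as a bounded-$\bar\Gamma$ hypothesis if it cannot be derived.
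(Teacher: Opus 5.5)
\textbf{Part (a), discovery time.} The step that fails is the finiteness of your burn-in $\tau$. Coherent augmentation is only a bookkeeping property of the posterior: it is always Bayes from $p_0$ on the full history. It says nothing about \emph{which} principles the \pa proposes. Requiring each augmentation to have strictly larger marginal likelihood on $H_t$ does not steer the working set toward $P^\star$. A countable $\bar{\mathcal P}$ contains infinitely many wrong principles, including ones that fit the realized noise in $H_t$ better than $P^\star$ does. So an endless sequence of ``improving'' augmentations that never contains $P^\star$, or anything $\varepsilon$-close to it, is consistent with everything you assume. Neither $p_0(P^\star)>0$ nor the linear growth of $P^\star$'s log-likelihood advantage matters until $P^\star$ is actually proposed. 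What is missing is a per-trigger success probability, and it cannot be derived from coherence. The paper assumes it (Assumption~\ref{assump:discover}: a trigger at least every $S$ rounds that lands in the $\varepsilon$-TV-ball of $P^\star$ with probability at least $\rho$, so $\mathbb E[\tau_{\mathrm{disc}}]\le S/\rho$). It then pays $G\,\mathbb E[\tau_{\mathrm{disc}}]$ in Theorem~\ref{thm:main}.

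\textbf{Part (a), execution error.} Russo--Van Roy applied to the LLM-generated candidate sets controls regret only against the best element of $\mathcal C_t$, not against $v^\star=\max_{h\in\mathcal H}r(h,P^\star)$. A generator that contains a near-optimal hypothesis merely ``with nonvanishing probability'' leaves a constant fraction of rounds with $\Theta(1)$ regret, hence linear regret. The paper's device is the split $\Delta_t=\Delta_t^{(\mathrm{ID})}+\Delta_t^{(\mathrm{PH})}$ (Lemma~\ref{lem:decomp}). The information-ratio bound is imposed only on the identification part, and it is simply assumed (Assumption~\ref{assump:ids}). Note that the $|\mathcal A|/2$ bound you cite holds for randomized IDS, not for the deterministic argmin in Equation~\ref{eq:IDS_hypothesis_selection}. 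The execution part is bounded by $\Delta_t^{(\mathrm{PH})}\le\delta_t+G\varphi_{N_t}(\varepsilon_t)$, using low generator entropy, pool approximation and top-mass calibration (Lemma~\ref{lem:ph_entropy}). The $\mathcal O(\sqrt T)$ conclusion then requires $\sum_t\mathbb E[\psi_{N_t}(\varepsilon_t,\delta_t)]<\infty$.

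Coherence is also what lets the paper avoid restarting at $\tau$ or freezing $\mathcal P_t$. Lemma~\ref{lem:info_budget} telescopes $\sum_{t\le T}I_t=H(p_0)-H(P\mid H_T)$ over the universal space, however and whenever principles are added. Your pathwise claim $H(p_\tau)\le H(p_0)+\mathcal O(1)$ is not justified: posterior entropy is only a supermartingale under the Bayesian model, and the renormalized restriction to $\mathcal P_\tau$ need not have entropy controlled by $H(p_0)$ at all.

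\textbf{Part (b).} IDS does not supply the persistent information your martingale argument needs. The IDS condition only yields $I_t=0\Rightarrow\Delta_t^{(\mathrm{ID})}=0$. The selected actions may separate $P^\star$ from some wrong principles but not others, or have finite cumulative KL against a given wrong $P$. In that case $\log\frac{p_t(P)}{p_t(P^\star)}$ has no drift to $-\infty$ and the Azuma step gives nothing. You then obtain consistency only modulo an equivalence class, which is weaker than the claimed concentration on $P^\star$. The paper closes this by hypothesis, not derivation. Identifiability and persistent information (Assumptions~\ref{assump:ident} and~\ref{assump:persistent}), combined with the information budget, give $\mathbb E[H(P\mid H_T)]\to0$, and Fano's inequality handles finite $\bar{\mathcal P}$ (Theorem~\ref{thm:concentration}). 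Your likelihood-ratio route is essentially the paper's Proposition~\ref{prop:exp_decay}, which likewise has to assume $\kappa_t(P)\ge\underline\kappa$ infinitely often.
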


In summary, we detailed the implementation of \method in Algorithm~\ref{alg:pievo} with prompt templates in Appendix~\ref{sec:agent_prompts}, and the optimization in \method provides strong theoretical guarantees (Theorem~\ref{thm:methodology_main_convergence}, with proof in Appendix~\ref{sec:theory_coherent}). Additionally, its superior efficiency demonstrates faster convergence than PiFlow~\citep{Pu2025PiFlowPS} (see proof in Appendix~\ref{sec:compare_learning_efficiency_with_piflow}), as also empirically validated in Figure~\ref{fig:pievo_piflow_efficiency_comparison}.

\begin{takeaway}
	\noindent \textbf{Takeaway (Coherent Augmentation in \method).} The surprisal-based anomaly detection allows both sensitivity control by threshold and dynamic principle reconciling. This self-closed updating ensures that the scientific cognition evolves coherently with empirical evidence.
\end{takeaway}

\begin{table*}[t]
	\centering
	\caption{\textbf{Performance comparison with baselines on 4 benchmarks under the Solution Quality (SQ \%) metric.} We categorize the results by LLMs, i.e., \texttt{Qwen3-32B} and \texttt{Gemini-2.5-Flash} with both non-thinking mode, and highlight \textcolor[HTML]{7D2AD1}{superior} or \textcolor[HTML]{D14D4D}{inferior} SQ compare to Vanilla MAS. Our \method outperforms all baselines, and achieves \textcolor[HTML]{7D2AD1}{1.5 $\sim$ 1.7} times SQ of Vanilla MAS in average.}
	\label{tab:sq_metrics_grouped}
	\resizebox{\textwidth}{!}{
		\begin{tabular}{l|cccc|l}
			\toprule[1.5pt]
			\textbf{Model / Method}                      & \bfseries MBO                                                                                                   & \bfseries NHO                                                                                                 & \bfseries SPO                                                                                                 & \bfseries TMC                                                                                                 & \bfseries Average                                                       \\
			\midrule
			\multicolumn{6}{l}{\textbf{Qwen3-32B} (\text{No Thinking})}                                                                                                                                                                                                                                                                                                                                                                                                                                                                                                                              \\
			\midrule
			\rowcolor{rowcolor}
			Vanilla MAS                                  & 57.57 {\scriptsize \color{gray} $\pm$ 3.50}                                                                     & 58.22 {\scriptsize \color{gray} $\pm$ 8.35}                                                                   & 28.71 {\scriptsize \color{gray} $\pm$ 3.20}                                                                   & 63.98 {\scriptsize \color{gray} $\pm$ 4.79}                                                                   & 52.12                                                                   \\
			ReAct~\citep{Yao2022ReActSR}                 & 69.15 {\scriptsize \color{gray} $\pm$ 8.88}\, {\scriptsize \textcolor[HTML]{7D2AD1}{$\uparrow$1.2x}}            & 67.02 {\scriptsize \color{gray} $\pm$ 3.81}\, {\scriptsize \textcolor[HTML]{7D2AD1}{$\uparrow$1.2x}}          & 30.51 {\scriptsize \color{gray} $\pm$ 0.05}\, {\scriptsize \textcolor[HTML]{7D2AD1}{$\uparrow$1.1x}}          & 58.07 {\scriptsize \color{gray} $\pm$ 5.08}\, {\scriptsize \textcolor[HTML]{D14D4D}{$\downarrow$ 0.9x}}       & 56.19\, {\scriptsize \textcolor[HTML]{7D2AD1}{$\uparrow$1.1x}}          \\
			\rowcolor{rowcolor}
			The AI Scientist v1~\citep{Lu2024TheAS}      & 66.41 {\scriptsize \color{gray} $\pm$ 12.48}\, {\scriptsize \textcolor[HTML]{7D2AD1}{$\uparrow$1.2x}}           & 79.76 {\scriptsize \color{gray} $\pm$ 0.58}\, {\scriptsize \textcolor[HTML]{7D2AD1}{$\uparrow$1.4x}}          & 30.52 {\scriptsize \color{gray} $\pm$ 0.26}\, {\scriptsize \textcolor[HTML]{7D2AD1}{$\uparrow$1.1x}}          & 72.36 {\scriptsize \color{gray} $\pm$ 8.72}\, {\scriptsize \textcolor[HTML]{7D2AD1}{$\uparrow$1.1x}}          & 62.26\, {\scriptsize \textcolor[HTML]{7D2AD1}{$\uparrow$1.2x}}          \\
			The AI Scientist v2~\citep{Yamada2025TheAS}  & 76.56 {\scriptsize \color{gray} $\pm$ 2.04}\, {\scriptsize \textcolor[HTML]{7D2AD1}{$\uparrow$1.3x}}            & 71.50 {\scriptsize \color{gray} $\pm$ 2.97}\, {\scriptsize \textcolor[HTML]{7D2AD1}{$\uparrow$1.2x}}          & 30.01 {\scriptsize \color{gray} $\pm$ 0.76}\, {\scriptsize \textcolor[HTML]{7D2AD1}{$\uparrow$1.0x}}          & 71.45 {\scriptsize \color{gray} $\pm$ 1.59}\, {\scriptsize \textcolor[HTML]{7D2AD1}{$\uparrow$1.1x}}          & 62.38\, {\scriptsize \textcolor[HTML]{7D2AD1}{$\uparrow$1.2x}}          \\
			\rowcolor{rowcolor}
			AI Researcher~\citep{Tang2025AIResearcherAS} & 69.72 {\scriptsize \color{gray} $\pm$ 11.46}\, {\scriptsize \textcolor[HTML]{7D2AD1}{$\uparrow$1.2x}}           & 79.05 {\scriptsize \color{gray} $\pm$ 4.28}\, {\scriptsize \textcolor[HTML]{7D2AD1}{$\uparrow$1.4x}}          & 30.24 {\scriptsize \color{gray} $\pm$ 1.21}\, {\scriptsize \textcolor[HTML]{7D2AD1}{$\uparrow$1.1x}}          & 74.17 {\scriptsize \color{gray} $\pm$ 3.27}\, {\scriptsize \textcolor[HTML]{7D2AD1}{$\uparrow$1.2x}}          & 63.29\, {\scriptsize \textcolor[HTML]{7D2AD1}{$\uparrow$1.2x}}          \\
			PiFlow~\citep{Pu2025PiFlowPS}                & 96.10 {\scriptsize \color{gray} $\pm$ 4.85}\, {\scriptsize \textcolor[HTML]{7D2AD1}{$\uparrow$1.7x}}            & 79.68 {\scriptsize \color{gray} $\pm$ 0.97}\, {\scriptsize \textcolor[HTML]{7D2AD1}{$\uparrow$1.4x}}          & 33.99 {\scriptsize \color{gray} $\pm$ 2.65}\, {\scriptsize \textcolor[HTML]{7D2AD1}{$\uparrow$1.2x}}          & 70.35 {\scriptsize \color{gray} $\pm$ 10.03}\, {\scriptsize \textcolor[HTML]{7D2AD1}{$\uparrow$1.1x}}         & 70.03\, {\scriptsize \textcolor[HTML]{7D2AD1}{$\uparrow$1.3x}}          \\
			\rowcolor{rowcolor}
			\textbf{\method (ours)}                      & \textbf{149.06 {\scriptsize \color{gray} $\pm$ 16.02}}\, {\scriptsize \textcolor[HTML]{7D2AD1}{$\uparrow$2.6x}} & \textbf{96.36 {\scriptsize \color{gray} $\pm$ 1.70}}\, {\scriptsize \textcolor[HTML]{7D2AD1}{$\uparrow$1.7x}} & \textbf{37.33 {\scriptsize \color{gray} $\pm$ 1.36}}\, {\scriptsize \textcolor[HTML]{7D2AD1}{$\uparrow$1.3x}} & \textbf{80.49 {\scriptsize \color{gray} $\pm$ 1.93}}\, {\scriptsize \textcolor[HTML]{7D2AD1}{$\uparrow$1.3x}} & \textbf{90.81}\, {\scriptsize \textcolor[HTML]{7D2AD1}{$\uparrow$1.7x}} \\
			\midrule
			\multicolumn{6}{l}{\textbf{Gemini-2.5-Flash-No Thinking}}                                                                                                                                                                                                                                                                                                                                                                                                                                                                                                                                \\
			\midrule
			\rowcolor{rowcolor}
			Vanilla MAS                                  & 72.72 {\scriptsize \color{gray} $\pm$ 17.45}                                                                    & 63.77 {\scriptsize \color{gray} $\pm$ 22.22}                                                                  & 32.57 {\scriptsize \color{gray} $\pm$ 3.51}                                                                   & 79.36 {\scriptsize \color{gray} $\pm$ 4.86}                                                                   & 62.10                                                                   \\
			ReAct~\citep{Yao2022ReActSR}                 & 61.24 {\scriptsize \color{gray} $\pm$ 14.71}\, {\scriptsize \textcolor[HTML]{D14D4D}{$\downarrow$ 0.8x}}        & 73.70 {\scriptsize \color{gray} $\pm$ 9.63}\, {\scriptsize \textcolor[HTML]{7D2AD1}{$\uparrow$1.2x}}          & 32.57 {\scriptsize \color{gray} $\pm$ 3.46}                                                                   & 78.71 {\scriptsize \color{gray} $\pm$ 10.60}\, {\scriptsize \textcolor[HTML]{D14D4D}{$\downarrow$ 1.0x}}      & 61.56\, {\scriptsize \textcolor[HTML]{D14D4D}{$\downarrow$ 1.0x}}       \\
			\rowcolor{rowcolor}
			The AI Scientist v1~\citep{Lu2024TheAS}      & 50.37 {\scriptsize \color{gray} $\pm$ 7.97}\, {\scriptsize \textcolor[HTML]{D14D4D}{$\downarrow$ 0.7x}}         & 77.40 {\scriptsize \color{gray} $\pm$ 1.26}\, {\scriptsize \textcolor[HTML]{7D2AD1}{$\uparrow$1.2x}}          & 36.57 {\scriptsize \color{gray} $\pm$ 0.01}\, {\scriptsize \textcolor[HTML]{7D2AD1}{$\uparrow$1.1x}}          & 84.14 {\scriptsize \color{gray} $\pm$ 2.19}\, {\scriptsize \textcolor[HTML]{7D2AD1}{$\uparrow$1.1x}}          & 62.12                                                                   \\
			The AI Scientist v2~\citep{Yamada2025TheAS}  & 59.83 {\scriptsize \color{gray} $\pm$ 8.64}\, {\scriptsize \textcolor[HTML]{D14D4D}{$\downarrow$ 0.8x}}         & 77.85 {\scriptsize \color{gray} $\pm$ 4.44}\, {\scriptsize \textcolor[HTML]{7D2AD1}{$\uparrow$1.2x}}          & 36.74 {\scriptsize \color{gray} $\pm$ 0.07}\, {\scriptsize \textcolor[HTML]{7D2AD1}{$\uparrow$1.1x}}          & 84.86 {\scriptsize \color{gray} $\pm$ 0.65}\, {\scriptsize \textcolor[HTML]{7D2AD1}{$\uparrow$1.1x}}          & 64.82\, {\scriptsize \textcolor[HTML]{7D2AD1}{$\uparrow$1.0x}}          \\
			\rowcolor{rowcolor}
			AI Researcher~\citep{Tang2025AIResearcherAS} & 61.22 {\scriptsize \color{gray} $\pm$ 21.80}\, {\scriptsize \textcolor[HTML]{D14D4D}{$\downarrow$ 0.8x}}        & 72.46 {\scriptsize \color{gray} $\pm$ 15.13}\, {\scriptsize \textcolor[HTML]{7D2AD1}{$\uparrow$1.1x}}         & 36.95 {\scriptsize \color{gray} $\pm$ 0.28}\, {\scriptsize \textcolor[HTML]{7D2AD1}{$\uparrow$1.1x}}          & 83.79 {\scriptsize \color{gray} $\pm$ 2.50}\, {\scriptsize \textcolor[HTML]{7D2AD1}{$\uparrow$1.1x}}          & 63.61\, {\scriptsize \textcolor[HTML]{7D2AD1}{$\uparrow$1.0x}}          \\
			PiFlow~\citep{Pu2025PiFlowPS}                & 80.18 {\scriptsize \color{gray} $\pm$ 7.55}\, {\scriptsize \textcolor[HTML]{7D2AD1}{$\uparrow$1.1x}}            & 79.32 {\scriptsize \color{gray} $\pm$ 4.60}\, {\scriptsize \textcolor[HTML]{7D2AD1}{$\uparrow$1.2x}}          & 37.31 {\scriptsize \color{gray} $\pm$ 0.71}\, {\scriptsize \textcolor[HTML]{7D2AD1}{$\uparrow$1.1x}}          & 87.45 {\scriptsize \color{gray} $\pm$ 1.54}\, {\scriptsize \textcolor[HTML]{7D2AD1}{$\uparrow$1.1x}}          & 71.07\, {\scriptsize \textcolor[HTML]{7D2AD1}{$\uparrow$1.1x}}          \\
			\rowcolor{rowcolor}
			\textbf{\method (ours)}                      & \textbf{153.53 {\scriptsize \color{gray} $\pm$ 6.35}}\, {\scriptsize \textcolor[HTML]{7D2AD1}{$\uparrow$2.1x}}  & \textbf{87.98 {\scriptsize \color{gray} $\pm$ 1.13}}\, {\scriptsize \textcolor[HTML]{7D2AD1}{$\uparrow$1.4x}} & \textbf{37.85 {\scriptsize \color{gray} $\pm$ 0.35}}\, {\scriptsize \textcolor[HTML]{7D2AD1}{$\uparrow$1.2x}} & \textbf{93.25 {\scriptsize \color{gray} $\pm$ 0.79}}\, {\scriptsize \textcolor[HTML]{7D2AD1}{$\uparrow$1.2x}} & \textbf{93.15}\, {\scriptsize \textcolor[HTML]{7D2AD1}{$\uparrow$1.5x}} \\
			\bottomrule[1.5pt]
		\end{tabular}
	}
	\vspace{-10pt}
\end{table*}

\begin{figure*}[htbp!]
	\centering
	\begin{minipage}[b]{0.24\textwidth}
		\centering
		\includegraphics[width=\textwidth]{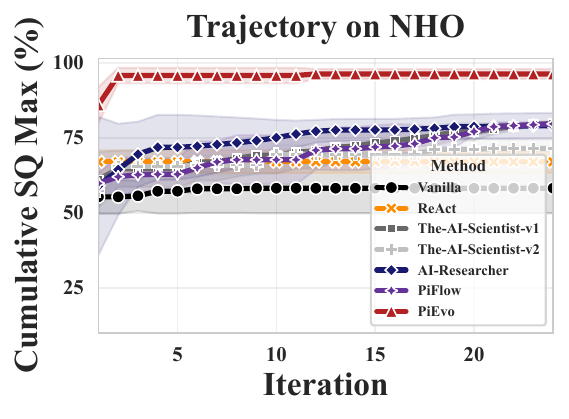}
		\label{fig:fig1}
	\end{minipage}
	\hfill
	\begin{minipage}[b]{0.24\textwidth}
		\centering
		\includegraphics[width=\textwidth]{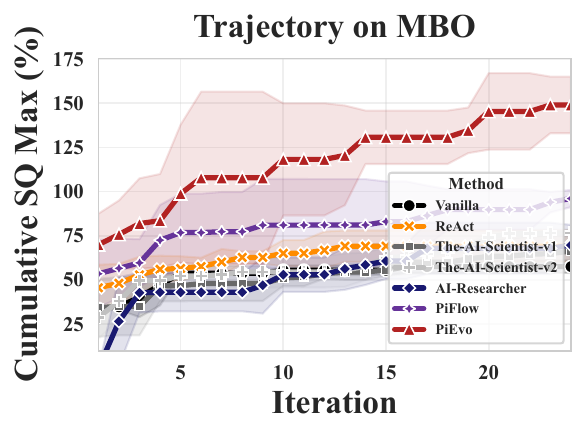}
		\label{fig:fig2}
	\end{minipage}
	\hfill
	\begin{minipage}[b]{0.24\textwidth}
		\centering
		\includegraphics[width=\textwidth]{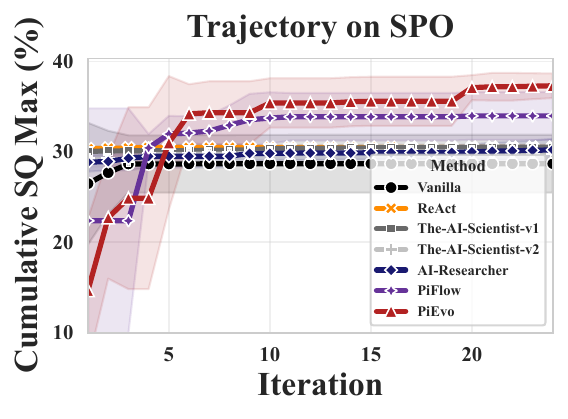}
		\label{fig:fig3}
	\end{minipage}
	\hfill
	\begin{minipage}[b]{0.24\textwidth}
		\centering
		\includegraphics[width=\textwidth]{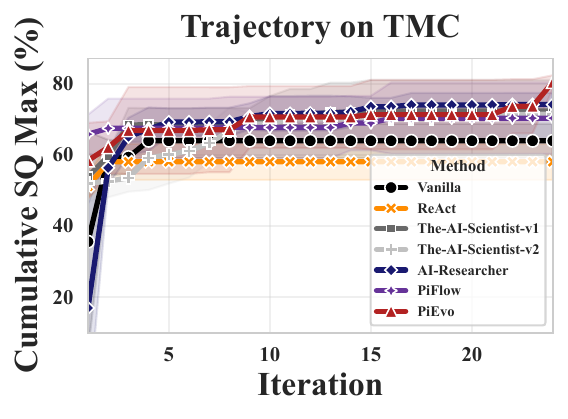}
		\label{fig:fig4}
	\end{minipage}
	\vspace{-0.3cm}
	\caption{\textbf{Trajectory comparison among baselines w/ Qwen3-32B.} We plot the cumulative best solution quality over time for each task. SQ $> 100\%$ indicates surpassing the empirical reference. Our \method consistently outperforms baselines across all tasks, though in challenging tasks like TMC and SPO, the gap is smaller due to the inherent searching complexity of hypotheses.}
	\label{fig:trajectory_all}
	\vspace{-10pt}
\end{figure*}

\section{Experiments}
\subsection{Experiment Setup}
\noindent \textbf{Tasks and Benchmarks. } We evaluate \method on four standard scientific discovery benchmarks: nanomaterial optical property optimization (NHO), molecular bio-activity optimization (MBO), and superconductor critical temperature optimization (SPO)~\citep{Pu2025PiFlowPS}, covering \textbf{continuous} (seven dimension), \textbf{discrete}, and \textbf{hybrid} search spaces, alongside transition metal complex optimization (TMC)~\citep{Song2025EvaluatingLL} for four-dimensional \textbf{combinational} entities search. Employing high-fidelity surrogate models, each task aims to maximize the target property value within a constrained experiment budget (See Appendix~\ref{sec:benchmark}).

\noindent \textbf{Baselines. }
We benchmark \method against three categories of Multi-Agent Systems (MAS):
(a) \textbf{Classical Strategies}: including \emph{Vanilla MAS}, an iterative hypothesis-testing loop without higher-order strategy, and \emph{ReAct}~\citep{Yao2022ReActSR}, which interleaves reasoning traces with actions.
(b) \textbf{Autonomous Scientific Agents}: tailored adaptations of \emph{AI-Researcher}~\citep{Tang2025AIResearcherAS} and \emph{The-AI-Scientist (v1/v2)}~\citep{Lu2024TheAS,Yamada2025TheAS}, restricted to their core hypothesis-generation modules without web searching and retrieval for fair comparison.
(c) \textbf{Principle-Guided Approaches}: represented by \emph{PiFlow}~\citep{Pu2025PiFlowPS}, the state-of-the-art method conditioning hypothesis generation on static scientific principles. Other methods like Co-scientist~\citep{Gottweis2025TowardsAA}, Kosmos~\citep{Mitchener2025KosmosAA} and Robin~\citep{Ghareeb2025RobinAM} are excluded from this comparative analysis, as they are either tailored to specific scientific domains or leverage hypothesis-testing mechanisms represented by the selected baselines.

\noindent \textbf{Implementation details.}
For all experiments, we enforce a strict budget of 24 trials and truncate agent context to the 10 most recent messages to mitigate overflow. Performance metrics are reported as the mean $\pm$ std over three independent runs. We use \texttt{Qwen3-32B}~\citep{yang2025qwen3} with \texttt{no\_think} prompt for force-disabling Chain-of-Thought and \texttt{Gemini-2.5-Flash-No Thinking} backends with a temperature of 0.6. An additional \texttt{think} mode ablation study uses \texttt{Gemini-2.5-Flash-Thinking}~\citep{comanici2025gemini} through API.

\begin{table*}[t]
	\centering
	\caption{\textbf{Exploration (APD $\uparrow$) and Exploitation (AUOC $\uparrow$) comparison with baselines on 4 benchmarks.} We categorize the results by LLMs, i.e., \texttt{Qwen3-32B} and \texttt{Gemini-2.5-Flash} with both non-thinking mode, and highlight \textcolor[HTML]{7D2AD1}{superior} or \textcolor[HTML]{D14D4D}{inferior} average performance compare to Vanilla MAS. Our \method attains superior balance in both exploration and exploitation, as shown in Figure~\ref{fig:global_pareto_frontier}.}
	\label{tab:adv_metrics_grouped}
	\resizebox{\textwidth}{!}{
		\begin{tabular}{l|cc|cc|cc|cc|cc}
			\toprule[1.5pt]
			\textbf{Model / Method} & \multicolumn{2}{c}{\textbf{MBO}}      & \multicolumn{2}{c}{\textbf{NHO}}        & \multicolumn{2}{c}{\textbf{SPO}}       & \multicolumn{2}{c}{TMC}               & \multicolumn{2}{c}{\textbf{Average}}                                                                                                                                                                                                                                                                                   \\
			                        & \textbf{APD} $\uparrow$               & \textbf{AUOC} $\uparrow$                & \textbf{APD} $\uparrow$                & \textbf{AUOC} $\uparrow$              & \textbf{APD} $\uparrow$                & \textbf{AUOC} $\uparrow$              & \textbf{APD} $\uparrow$               & \textbf{AUOC} $\uparrow$              & \textbf{Avg APD} $\uparrow$                                               & \textbf{Avg AUOC} $\uparrow$                                              \\
			\midrule
			\multicolumn{11}{l}{\textbf{Gemini-2.5-Flash-No Thinking}}                                                                                                                                                                                                                                                                                                                                                                                                                                                          \\
			\midrule
			\rowcolor{rowcolor}
			Vanilla                 & 6.0 {\scriptsize $\pm$ 1.8}           & 61.6 {\scriptsize $\pm$ 19.6}           & 34.5 {\scriptsize $\pm$ 46.8}          & 59.6 {\scriptsize $\pm$ 20.8}         & 7.0 {\scriptsize $\pm$ 8.9}            & 32.2 {\scriptsize $\pm$ 2.9}          & 44.9 {\scriptsize $\pm$ 29.0}         & 76.7 {\scriptsize $\pm$ 2.7}          & 23.1                                                                      & 57.5                                                                      \\
			ReAct                   & 7.3 {\scriptsize $\pm$ 1.7}           & 54.9 {\scriptsize $\pm$ 12.7}           & 11.2 {\scriptsize $\pm$ 8.5}           & 69.3 {\scriptsize $\pm$ 12.7}         & 2.6 {\scriptsize $\pm$ 4.3}            & 32.4 {\scriptsize $\pm$ 3.2}          & 45.2 {\scriptsize $\pm$ 8.4}          & 77.0 {\scriptsize $\pm$ 9.1}          & 16.6\,\textsuperscript{\textcolor[HTML]{D14D4D}{$\downarrow$0.7x}}        & 58.4\,\textsuperscript{\textcolor[HTML]{7D2AD1}{$\uparrow$1.0x}}          \\
			\rowcolor{rowcolor}
			The AI Scientist v1     & 3.3 {\scriptsize $\pm$ 0.6}           & 43.5 {\scriptsize $\pm$ 6.7}            & 14.2 {\scriptsize $\pm$ 4.0}           & 76.5 {\scriptsize $\pm$ 1.3}          & 3.2 {\scriptsize $\pm$ 0.2}            & 34.5 {\scriptsize $\pm$ 3.0}          & 38.9 {\scriptsize $\pm$ 3.8}          & 83.8 {\scriptsize $\pm$ 2.5}          & 14.9\,\textsuperscript{\textcolor[HTML]{D14D4D}{$\downarrow$0.6x}}        & 59.6\,\textsuperscript{\textcolor[HTML]{7D2AD1}{$\uparrow$1.0x}}          \\
			The AI Scientist v2     & 6.5 {\scriptsize $\pm$ 2.6}           & 50.1 {\scriptsize $\pm$ 7.1}            & 27.2 {\scriptsize $\pm$ 1.7}           & 77.2 {\scriptsize $\pm$ 4.3}          & 4.1 {\scriptsize $\pm$ 0.3}            & \textbf{36.2 {\scriptsize $\pm$ 0.1}} & 52.7 {\scriptsize $\pm$ 3.9}          & 80.7 {\scriptsize $\pm$ 2.4}          & 22.6\,\textsuperscript{\textcolor[HTML]{D14D4D}{$\downarrow$1.0x}}        & 61.1\,\textsuperscript{\textcolor[HTML]{7D2AD1}{$\uparrow$1.1x}}          \\
			\rowcolor{rowcolor}
			AI Researcher           & 6.9 {\scriptsize $\pm$ 8.2}           & 49.1 {\scriptsize $\pm$ 10.0}           & 34.1 {\scriptsize $\pm$ 6.9}           & 60.4 {\scriptsize $\pm$ 23.2}         & 7.6 {\scriptsize $\pm$ 5.0}            & 36.1 {\scriptsize $\pm$ 0.3}          & 55.5 {\scriptsize $\pm$ 11.6}         & 50.3 {\scriptsize $\pm$ 25.7}         & 26.0\,\textsuperscript{\textcolor[HTML]{7D2AD1}{$\uparrow$1.1x}}          & 49.0\,\textsuperscript{\textcolor[HTML]{D14D4D}{$\downarrow$0.9x}}        \\
			PiFlow                  & 14.4 {\scriptsize $\pm$ 8.9}          & 52.8 {\scriptsize $\pm$ 6.1}            & 36.5 {\scriptsize $\pm$ 8.7}           & 66.0 {\scriptsize $\pm$ 6.7}          & 16.2 {\scriptsize $\pm$ 6.4}           & 33.4 {\scriptsize $\pm$ 4.1}          & 61.1 {\scriptsize $\pm$ 11.4}         & 80.9 {\scriptsize $\pm$ 5.5}          & 32.0\,\textsuperscript{\textcolor[HTML]{7D2AD1}{$\uparrow$1.4x}}          & 58.3\,\textsuperscript{\textcolor[HTML]{7D2AD1}{$\uparrow$1.0x}}          \\
			\rowcolor{rowcolor}
			\textbf{\method (ours)} & \textbf{28.3 {\scriptsize $\pm$ 2.5}} & \textbf{122.3 {\scriptsize $\pm$ 9.1}}  & \textbf{82.5 {\scriptsize $\pm$ 24.7}} & \textbf{85.9 {\scriptsize $\pm$ 2.0}} & \textbf{21.1 {\scriptsize $\pm$ 2.5}}  & 35.4 {\scriptsize $\pm$ 0.9}          & \textbf{86.8 {\scriptsize $\pm$ 4.0}} & \textbf{92.3 {\scriptsize $\pm$ 1.3}} & \textbf{54.7}\,\textsuperscript{\textcolor[HTML]{7D2AD1}{$\uparrow$2.4x}} & \textbf{84.0}\,\textsuperscript{\textcolor[HTML]{7D2AD1}{$\uparrow$1.5x}} \\
			\midrule
			\multicolumn{11}{l}{\textbf{Qwen3-32B}}                                                                                                                                                                                                                                                                                                                                                                                                                                                                             \\
			\midrule
			\rowcolor{rowcolor}
			Vanilla                 & 9.6 {\scriptsize $\pm$ 1.6}           & 53.2 {\scriptsize $\pm$ 1.1}            & 9.3 {\scriptsize $\pm$ 1.7}            & 57.8 {\scriptsize $\pm$ 7.8}          & 5.8 {\scriptsize $\pm$ 5.0}            & 28.6 {\scriptsize $\pm$ 3.4}          & 34.3 {\scriptsize $\pm$ 22.3}         & 62.4 {\scriptsize $\pm$ 3.3}          & 14.8                                                                      & 50.5                                                                      \\
			ReAct                   & 3.1 {\scriptsize $\pm$ 2.0}           & 63.7 {\scriptsize $\pm$ 7.2}            & 6.9 {\scriptsize $\pm$ 8.8}            & 67.0 {\scriptsize $\pm$ 3.8}          & 1.1 {\scriptsize $\pm$ 1.2}            & 30.5 {\scriptsize $\pm$ 0.0}          & 47.2 {\scriptsize $\pm$ 21.0}         & 57.8 {\scriptsize $\pm$ 4.7}          & 14.6\,\textsuperscript{\textcolor[HTML]{D14D4D}{$\downarrow$1.0x}}        & 54.8\,\textsuperscript{\textcolor[HTML]{7D2AD1}{$\uparrow$1.1x}}          \\
			\rowcolor{rowcolor}
			The AI Scientist v1     & 9.1 {\scriptsize $\pm$ 4.1}           & 53.2 {\scriptsize $\pm$ 5.9}            & 17.4 {\scriptsize $\pm$ 1.3}           & 71.4 {\scriptsize $\pm$ 3.4}          & 2.3 {\scriptsize $\pm$ 1.6}            & 30.3 {\scriptsize $\pm$ 0.4}          & 53.2 {\scriptsize $\pm$ 18.7}         & \textbf{69.8 {\scriptsize $\pm$ 6.3}} & 20.5\,\textsuperscript{\textcolor[HTML]{7D2AD1}{$\uparrow$1.4x}}          & 56.2\,\textsuperscript{\textcolor[HTML]{7D2AD1}{$\uparrow$1.1x}}          \\
			The AI Scientist v2     & 7.2 {\scriptsize $\pm$ 1.5}           & 58.9 {\scriptsize $\pm$ 5.3}            & 22.4 {\scriptsize $\pm$ 2.1}           & 68.3 {\scriptsize $\pm$ 1.2}          & 4.4 {\scriptsize $\pm$ 0.4}            & 29.8 {\scriptsize $\pm$ 0.6}          & 62.5 {\scriptsize $\pm$ 2.6}          & 66.5 {\scriptsize $\pm$ 2.3}          & 24.1\,\textsuperscript{\textcolor[HTML]{7D2AD1}{$\uparrow$1.6x}}          & 55.9\,\textsuperscript{\textcolor[HTML]{7D2AD1}{$\uparrow$1.1x}}          \\
			\rowcolor{rowcolor}
			AI Researcher           & 9.0 {\scriptsize $\pm$ 7.5}           & 53.0 {\scriptsize $\pm$ 10.8}           & 38.9 {\scriptsize $\pm$ 14.9}          & 74.9 {\scriptsize $\pm$ 5.9}          & 7.3 {\scriptsize $\pm$ 4.0}            & 29.8 {\scriptsize $\pm$ 0.9}          & 44.2 {\scriptsize $\pm$ 4.7}          & 68.9 {\scriptsize $\pm$ 4.3}          & 24.9\,\textsuperscript{\textcolor[HTML]{7D2AD1}{$\uparrow$1.7x}}          & 56.6\,\textsuperscript{\textcolor[HTML]{7D2AD1}{$\uparrow$1.1x}}          \\
			PiFlow                  & 19.0 {\scriptsize $\pm$ 14.5}         & 80.4 {\scriptsize $\pm$ 17.5}           & 59.2 {\scriptsize $\pm$ 20.1}          & 70.3 {\scriptsize $\pm$ 4.8}          & \textbf{35.2 {\scriptsize $\pm$ 21.7}} & 32.1 {\scriptsize $\pm$ 3.6}          & 65.9 {\scriptsize $\pm$ 24.6}         & 68.7 {\scriptsize $\pm$ 8.8}          & 44.8\,\textsuperscript{\textcolor[HTML]{7D2AD1}{$\uparrow$3.0x}}          & 62.9\,\textsuperscript{\textcolor[HTML]{7D2AD1}{$\uparrow$1.2x}}          \\
			\rowcolor{rowcolor}
			\textbf{\method (ours)} & \textbf{24.5 {\scriptsize $\pm$ 2.6}} & \textbf{118.3 {\scriptsize $\pm$ 23.8}} & \textbf{78.3 {\scriptsize $\pm$ 31.6}} & \textbf{95.7 {\scriptsize $\pm$ 1.8}} & 18.2 {\scriptsize $\pm$ 5.7}           & \textbf{33.2 {\scriptsize $\pm$ 2.8}} & \textbf{77.5 {\scriptsize $\pm$ 7.0}} & 69.7 {\scriptsize $\pm$ 9.2}          & \textbf{49.7}\,\textsuperscript{\textcolor[HTML]{7D2AD1}{$\uparrow$3.4x}} & \textbf{79.2}\,\textsuperscript{\textcolor[HTML]{7D2AD1}{$\uparrow$1.6x}} \\
			\bottomrule[1.5pt]
		\end{tabular}
	}
\end{table*}

\subsection{Evaluation Metrics}
\label{subsec:evaluation_metrics}
To comprehensively evaluate the performance, we employ multi-faceted metrics that capture abilities to (a) reaching the objective, (b) exploration width and (c) promising exploitation, as detailed below.

\noindent \textbf{a. Solution Quality (SQ).} Defined as the maximum outcome achieved within its hypothesis-outcome trajectory $\mathcal{T}$, and outcome is normalized by the theoretical (NHO and SPO) or empirical (MBO and TMC) maximum, $\mu_{ref}$ (see Appendix~\ref{sec:benchmark}) per task:
\begin{align*}
	\mathrm{SQ} = \nicefrac{\max \{ y_k \mid (h_k, y_k) \in \mathcal{T} \}}{\mu_{ref}} \times 100\% \,.
\end{align*}
A high SQ exceeds 100\% indicates discovery of solutions superior to the reference dataset/maximum.

\noindent \textbf{b. Average Pairwise Distance (APD).} A value that describes average pairwise distance of all valid hypothesis, featurized by $\phi(h)$ per task (refer to Appendix~\ref{sec:benchmark}):
\begin{align*}
	\mathrm{APD} = \nicefrac{2}{M(M-1)} \sum_{i < j}^{M} d\left(\phi(h_i), \phi(h_j)\right) \,.
\end{align*}
A higher APD indicates searching broadly, which is a prerequisite for novel discovery and landscape coverage.

\noindent \textbf{c. Area Under the Optimization Curve (AUOC).} A value that measures the cumulative maximum performance over the fixed budget:
\begin{align*}
	\mathrm{AUOC} = \nicefrac{1}{T \cdot \mu_{\text{ref}}} \sum_{t=1}^{T} \max_{k \le t} \{ y_k \} \,.
\end{align*}
A higher AUOC signifies superior search efficiency that identifies promising directions early and exploits them effectively.

\begin{figure}[t!]
	\centering
	\includegraphics[width=0.80\linewidth]{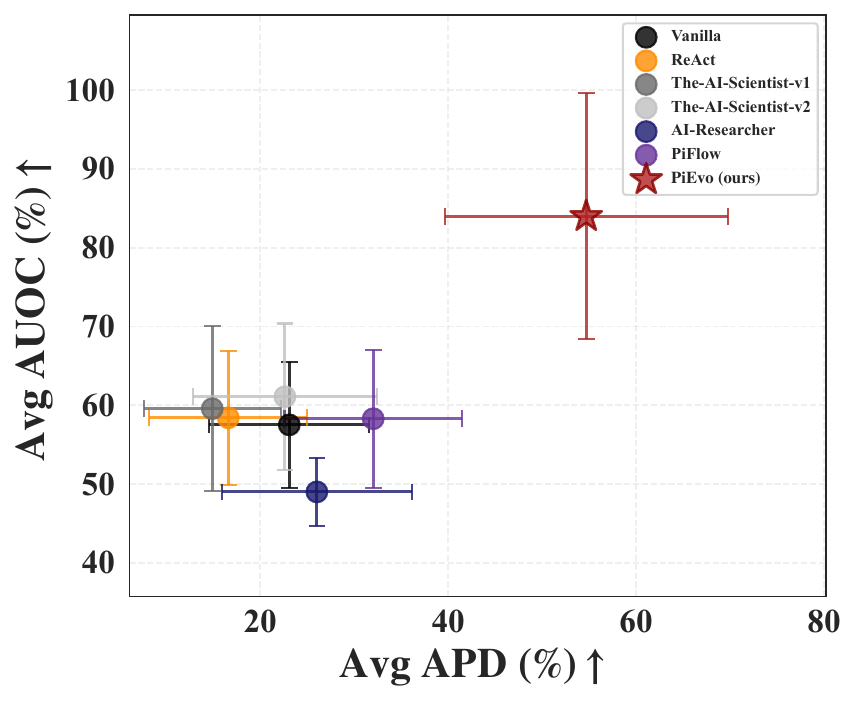}
	\vspace{-5pt}
	\caption{\textbf{Pareto frontier across all tasks.} \method stands on the Pareto frontier, indicating its superior balance on both exploration (APD) and exploitation (AUOC) in scientific discovery. \looseness=-1}
	\label{fig:global_pareto_frontier}
	\vspace{-15pt}
\end{figure}

\begin{figure*}[t!]
	\centering
	\begin{minipage}[b]{0.24\textwidth}
		\centering
		\includegraphics[width=\textwidth]{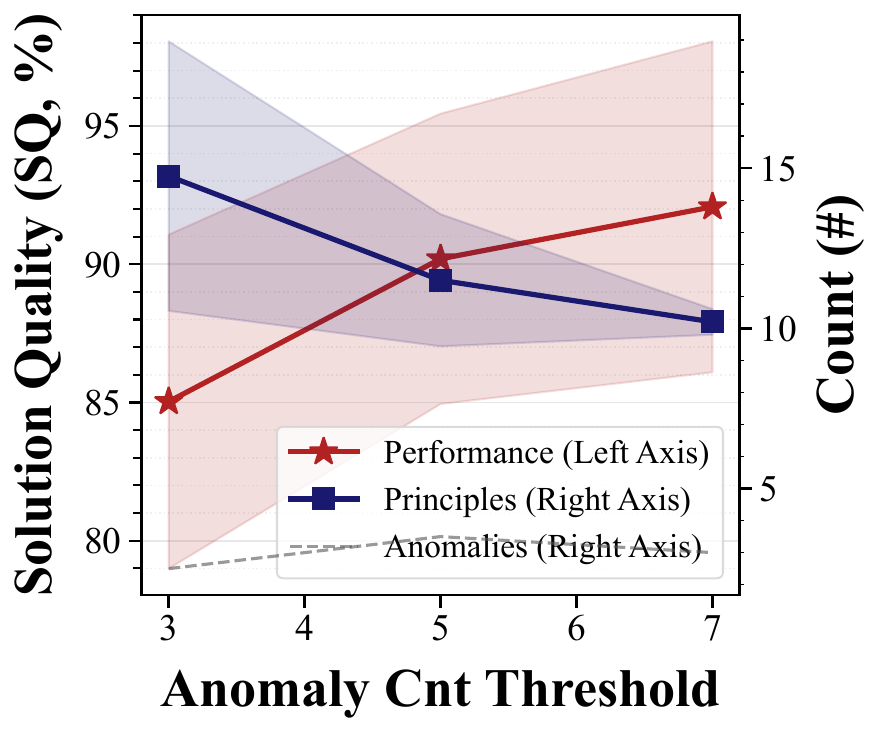}
		\label{fig:pievo_anomaly_cnt_threshold}
	\end{minipage}
	\hfill
	\begin{minipage}[b]{0.24\textwidth}
		\centering
		\includegraphics[width=\textwidth]{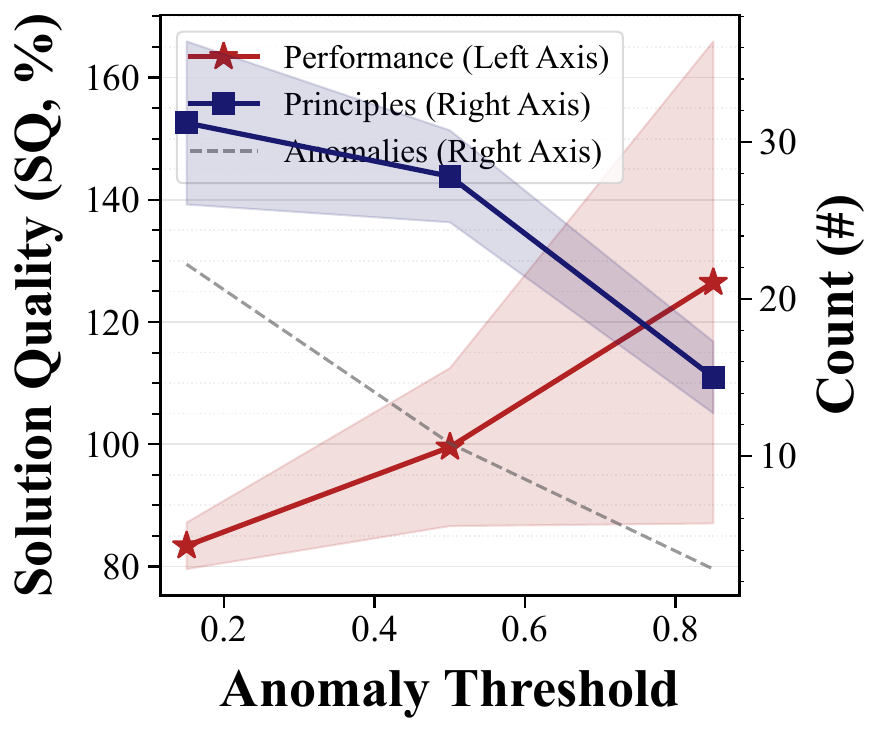}
		\label{fig:pievo_anomaly_threshold}
	\end{minipage}
	\hfill
	\begin{minipage}[b]{0.24\textwidth}
		\centering
		\includegraphics[width=\textwidth]{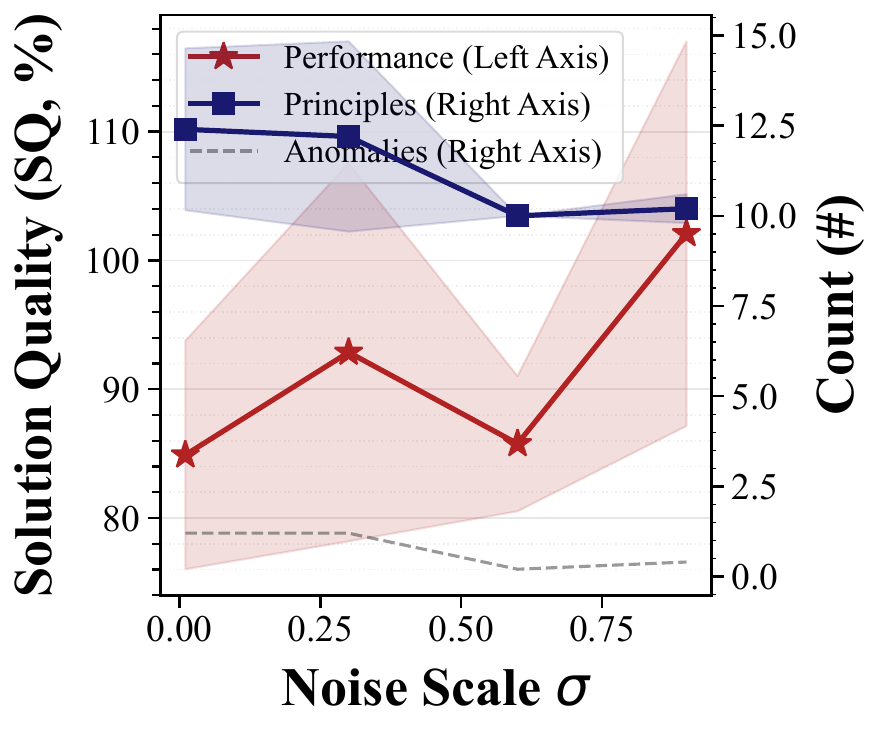}
		\label{fig:pievo_sigma}
	\end{minipage}
	\hfill
	\begin{minipage}[b]{0.24\textwidth}
		\centering
		\includegraphics[width=\textwidth]{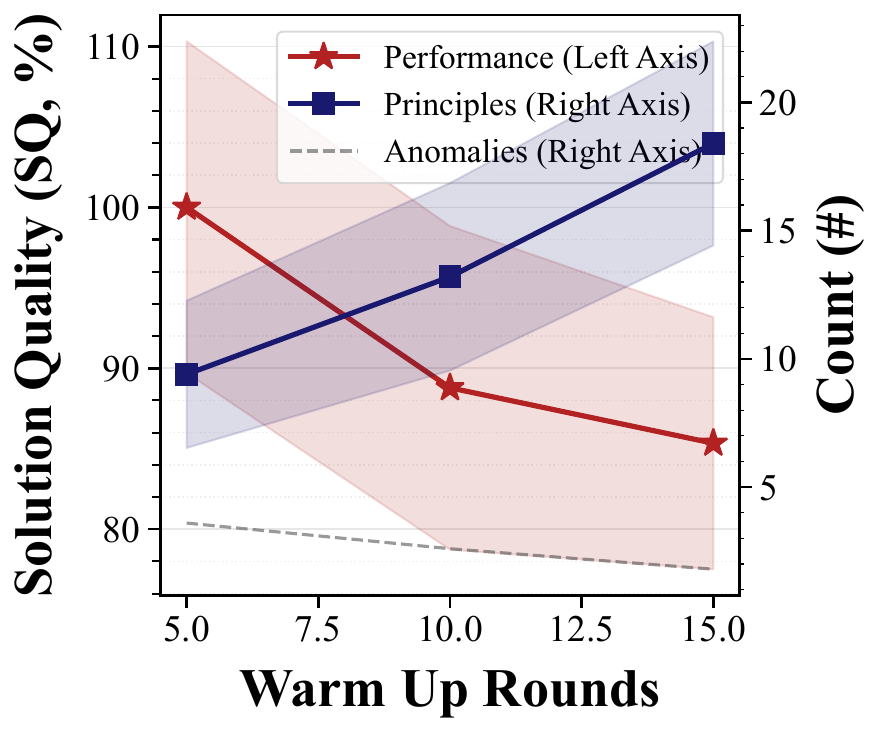}
		\label{fig:pievo_warm_up_rounds}
	\end{minipage}
	\vspace{-0.4cm}
	\caption{\textbf{Ablations of Parameters Sensitivity.} Both anomaly count threshold and anomaly threshold affect the number of principles, while sigma as noise representation affects the system's exploration capability. Warm-up rounds affect the system's initialization capability.}
	\label{fig:ablation_sensitivity}
	\vspace{-0.4cm}
\end{figure*}

\begin{table}[t!]
	\centering
	\caption{\textbf{Performance on NHO task with different searching dimensions.} The best and runner-up results are \textbf{bolded} and \underline{underlined}, respectively. Physical meaning of each dimension and performance interpretation see Appendix~\ref{subsec:nho_task_benchmark}.}
	\label{tab:nho_level_ablation}
	\resizebox{\linewidth}{!}{
		\begin{tabular}{lcccc}
			\toprule
			\textbf{Method}         & \textbf{dimension=4} & \textbf{dimension} & \textbf{dimension=6} & \textbf{Avg.}     \\
			\midrule
			AI Researcher           & 79.37                & \underline{86.74}    & 74.48                & 80.20             \\
			The AI Scientist v1     & \textbf{82.47}       & 83.81                & \underline{81.01}    & \underline{82.43} \\
			The AI Scientist v2     & 79.92                & 68.15                & 68.58                & 72.22             \\
			PiFlow                  & 75.62                & 74.10                & 69.07                & 72.93             \\
			\midrule
			\textbf{\method (ours)} & \underline{82.33}    & \textbf{88.94}       & \textbf{82.41}       & \textbf{84.56}    \\
			\bottomrule
		\end{tabular}
	}
	\vspace{-10pt}
\end{table}

%
%
%
%
\subsection{Performance Analysis}
Table~\ref{tab:sq_metrics_grouped} presents a comparison of \method against six baselines across four benchmarks, while Table~\ref{tab:nho_level_ablation} examines scalability under varying geometric dimensionality in the NHO task. Our analysis yields three primary observations:

\noindent \textbf{Obs.\ding{202}: \method establishes a new state-of-the-art in solution quality.} \method dominates all baselines, achieving an average SQ of $90.81\%\sim93.15\%$. This corresponds to a \textbf{\textbf{30\%} improvement} over leading AI Scientist frameworks ($62.26\%\sim64.82\%$) and principle-conditioned PiFlow ($70.03\%\sim71.07\%$), confirming that dynamic principle evolution effectively confines exploration to high-value scientific manifolds.

\noindent \textbf{Obs.\ding{203}: \method effectively balances Exploration-Exploitation.} As visualized in the Pareto frontier (Figure~\ref{fig:global_pareto_frontier}), \method demonstrates dominance in both diversity ($APD=54.7\%$) and convergence efficiency ($AUOC=84.0\%$). Unlike baselines that compromise diversity for hill-climbing speed (e.g., AI Researcher), \method's dual-loop mechanism ensures broad coverage of the hypothesis space without sacrificing local optimization.

\noindent \textbf{Obs.\ding{204}: \method exhibits robustness in high-dimensional search spaces.} In the NHO task, where search space complexity scales with geometric dimensionality (Table~\ref{tab:nho_level_ablation}), \method sustains a high average SQ of $84.56\%$ across dimensions 4 to 6. Conversely, baselines suffer sharp performance collapse as dimensionality increases (e.g., PiFlow drops to $72.93\%$), demonstrating \method's superior scalability in navigating complex design spaces.

\noindent \textbf{Obs.\ding{205}: \method outperforms domain-agnostic optimizers. }
We evaluate \method against standard domain-agnostic optimizers (Table~\ref{tab:sq_24step_budget_domain}), including Random Search (RS), Genetic Algorithm (GA), Bayesian Optimization (BO), and Differential Evolution (DE). These solvers (BO, DE) struggle in few-shot regimes because they build response surfaces from scratch. Conversely, \method acts as a zero-shot prior generator, alleviating cold start and constraining search toward high-yield manifolds.

\begin{table}[h!]
    \centering
    \caption{\textbf{Comparison against domain optimizers.} SQ \% under the same 24 budget is reported. \method's results can be found in Table~\ref{tab:sq_metrics_grouped}. }
    \label{tab:sq_24step_budget_domain}
    \resizebox{\linewidth}{!}{
        \begin{tabular}{llllll}
        \toprule
        \textbf{Method} & \textbf{MBO} & \textbf{NHO} & \textbf{SPO} & \textbf{TMC} & \textbf{Avg.} \\
        \midrule
        \textbf{RS} & 62.7 $\pm$ 4.1 & 39.5 $\pm$ 16.3 & 35.8 $\pm$ 1.2 & 70.0 $\pm$ 5.4 & 52.0 \\
        \textbf{GA} & 65.9 $\pm$ 7.2 & 44.8 $\pm$ 19.3 & 39.4 $\pm$ 2.6 & 74.2 $\pm$ 6.3 & 56.1 \\
        \textbf{BO} & 70.2 $\pm$ 9.5 & 64.8 $\pm$ 12.2 & \textbf{42.6 $\pm$ 1.0} & 76.6 $\pm$ 4.7 & 63.5 \\
        \textbf{DE} & 62.7 $\pm$ 4.1 & 51.2 $\pm$ 14.3 & 37.2 $\pm$ 1.6 & 70.0 $\pm$ 5.4 & 55.3 \\
		\midrule
        \textbf{\method w/ Qwen} & 149.1 $\pm$ 16.0 & \textbf{96.4 $\pm$ 1.7} & 37.3 $\pm$ 1.4 & 80.5 $\pm$ 1.9 & 90.8 \\
        \textbf{\method w/ Gemini} & \textbf{153.5 $\pm$ 6.4} & 88.0 $\pm$ 1.1 & 37.9 $\pm$ 0.4 & \textbf{93.3 $\pm$ 0.8} & \textbf{93.2} \\
        \bottomrule
        \end{tabular}
    }
		\vspace{-10pt}
\end{table}

\subsection{Efficiency Analysis}
\label{sec:efficiency_analysis}
We evaluate the discovery efficiency of \method through two key dimensions: solution quality (SQ) and computational cost. As detailed in Table~\ref{tab:nho_efficiency_comparison} and Figure~\ref{fig:pievo_piflow_efficiency_comparison}, \method demonstrates substantial gains in both metrics.

\noindent \textbf{Obs.\ding{206}: \method achieves sublinear regret and accelerated convergence.} Empirical trajectories in Figure~\ref{fig:pievo_piflow_efficiency_comparison} validate the theoretical sublinear regret bound $\tilde{\mathcal{O}}(\sqrt{T})$ (Theorem~\ref{thm:methodology_main_convergence}). Specifically, \method achieves an \textbf{83.3\% speedup} in convergence step compared to PiFlow. Curve fitting reveals a significantly lower regret coefficient for \method ($13.49\sqrt{T}$) versus PiFlow ($37.91\sqrt{T}$). This $\sim$3$\times$ reduction empirically corroborates our theoretical insight: optimizing over the principle variable (entropy $H(P)$) is more sample-efficient than searching the larger hypothesis space (scaling with $\log |\mathcal{H}|$), i.e., $\sqrt{H(P)\cdot T} < \sqrt{\log |\mathcal{H}| \cdot T}$.

\begin{figure}[h!]
	\centering
	\includegraphics[width=0.98\linewidth]{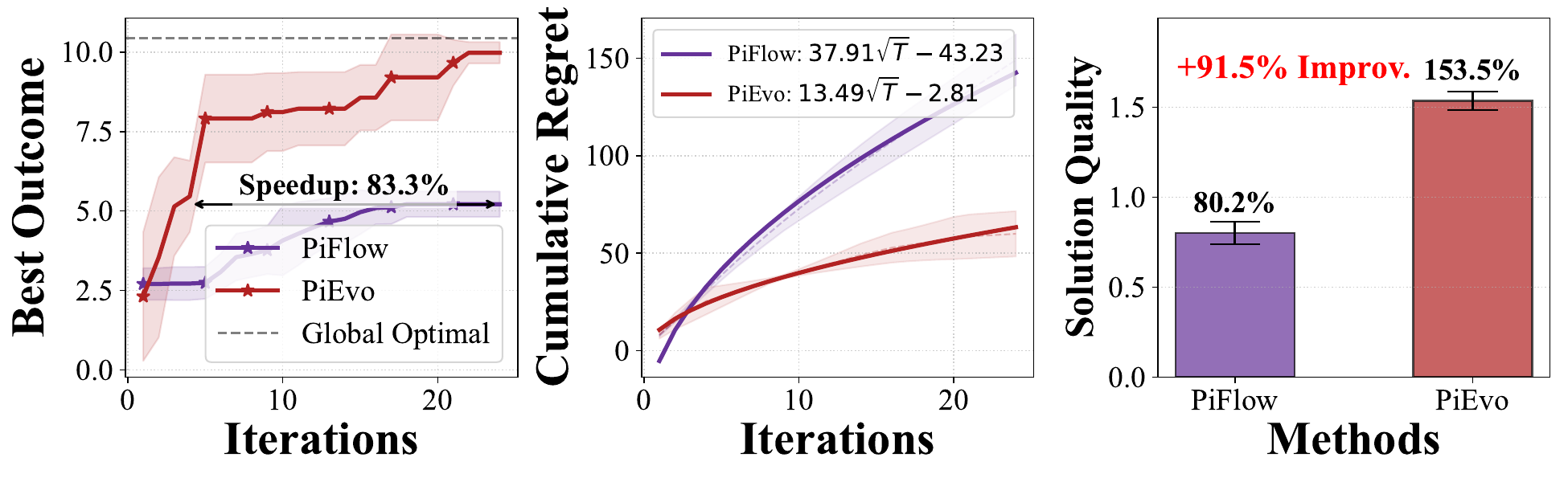}
	\caption{\textbf{Comparison of PiFlow and \method in MBO task, as shown in Table~\ref{tab:sq_metrics_grouped} and Table~\ref{tab:adv_metrics_grouped}}. \method excels in 83.3\% speedup and lower empirical regret compared to PiFlow, demonstrating its superior efficiency. }
		\vspace{-10pt}
	\label{fig:pievo_piflow_efficiency_comparison}
\end{figure}

\begin{table}[t!]
	\centering
	\caption{\textbf{Performance and efficiency comparison on NHO task, as SQ reported in Table~\ref{tab:sq_metrics_grouped}.} The $\tau_{run}$ denotes the execution time in seconds. Efficiency is calculated as $\nicefrac{\text{SQ}}{\log_{10}(\tau_{run})}$.}
	\label{tab:nho_efficiency_comparison}
	\resizebox{\linewidth}{!}{
		\begin{tabular}{lccc}
			\toprule
			Method                  & SQ (\%) $\uparrow$        & Time $\tau$ (sec) $\downarrow$ & $\frac{\text{SQ}}{\log_{10}(\tau_{run})}$ $\uparrow$ \\
			\midrule
			AI Researcher           & 79.05 $\pm$ 4.28          & 3272.9                         & 22.48                                                \\
			The AI Scientist v1     & 79.76 $\pm$ 0.58          & 1953.5                         & 24.23                                                \\
			The AI Scientist v2     & 71.50 $\pm$ 2.97          & 1953.5                         & 21.72                                                \\
			PiFlow                  & 79.68 $\pm$ 0.97          & 2498.9                         & 23.45                                                \\
			\midrule
			\textbf{\method (ours)} & \textbf{96.36 $\pm$ 1.70} & 2096.7                         & \textbf{29.01}                                       \\
			\bottomrule
		\end{tabular}
	}
	\vspace{-13pt}
\end{table}

\subsection{Case Study: Discovering Physics Mechanism}
\label{subsec:case_study}
\noindent \textbf{Task Scenario.} We deploy \method on a challenging discovery task in nanophotonics: identifying the electrodynamic origin of circular dichroism (CD) in \textit{isolated} nanohelices. Unlike well-studied helix arrays~\citep{Faniayeu2020PolarizationCW}, the governing laws for single-structure chirality remain largely unexplored. \method with agents are initialized without any document resources and must evolve principles to explain the chirality obtained from a high-fidelity surrogate model.

\noindent \textbf{Obs.\ding{207}: \method autonomously identifies electrodynamic mechanisms.}
Within just 30 iterations, \method successfully identifies a high-surprisal anomaly. Triggered by this gap, \method synthesizes a unified theory attributing strong chirality to the interference between \emph{toroidal} and \emph{electric quadrupole moments}. This mechanism was subsequently corroborated by full-wave FDTD analysis manually (see Appendix~\ref{sec:case_study}), demonstrating the capacity of \method to uncover physical laws in complex systems.

\subsection{Ablation Study}
\label{subsec:ablation_study}
\noindent \textbf{Hyperparameters Sensitivity.} We analyze \method's sensitivity to key hyperparameters: anomaly threshold $\theta_t$ (Eq.~\ref{eq:identify_anomalies}), observational noise $\sigma$, and IDS warm-up rounds. Figure~\ref{fig:ablation_sensitivity} demonstrates robustness across a wide $\theta_t$ spectrum, where the optimal value balances anomaly detection against noise rejection. Regarding warm-up, minimal initial exploration ($\approx 5$ rounds) proves sufficient for GP experts to establish meaningful priors, effectively minimizing uncertainty.

\noindent \textbf{\method Module Ablations.} To disentangle component contributions, we perform ablations on the NHO task (Table~\ref{tab:ablation_modules}, Figure~\ref{fig:ablation_combined}). Disabling \textit{Coherent Augmentation} (\texttt{Static Evolution}) caps SQ at $85.87\%$, as the agent cannot access high-value regimes beyond the initial prior. Similarly, removing \textit{Information-Directed Selection} (\texttt{Greedy}) reduces SQ to $84.03\%$, limiting the efficient resolution of principle-evidence uncertainty. The full \method framework achieves $96.36\%$ SQ, confirming that the synergy between dynamic principle evolution and information-directed sampling is critical for robust discovery.

\begin{figure}[h!]
	\centering
	\includegraphics[width=0.95\linewidth]{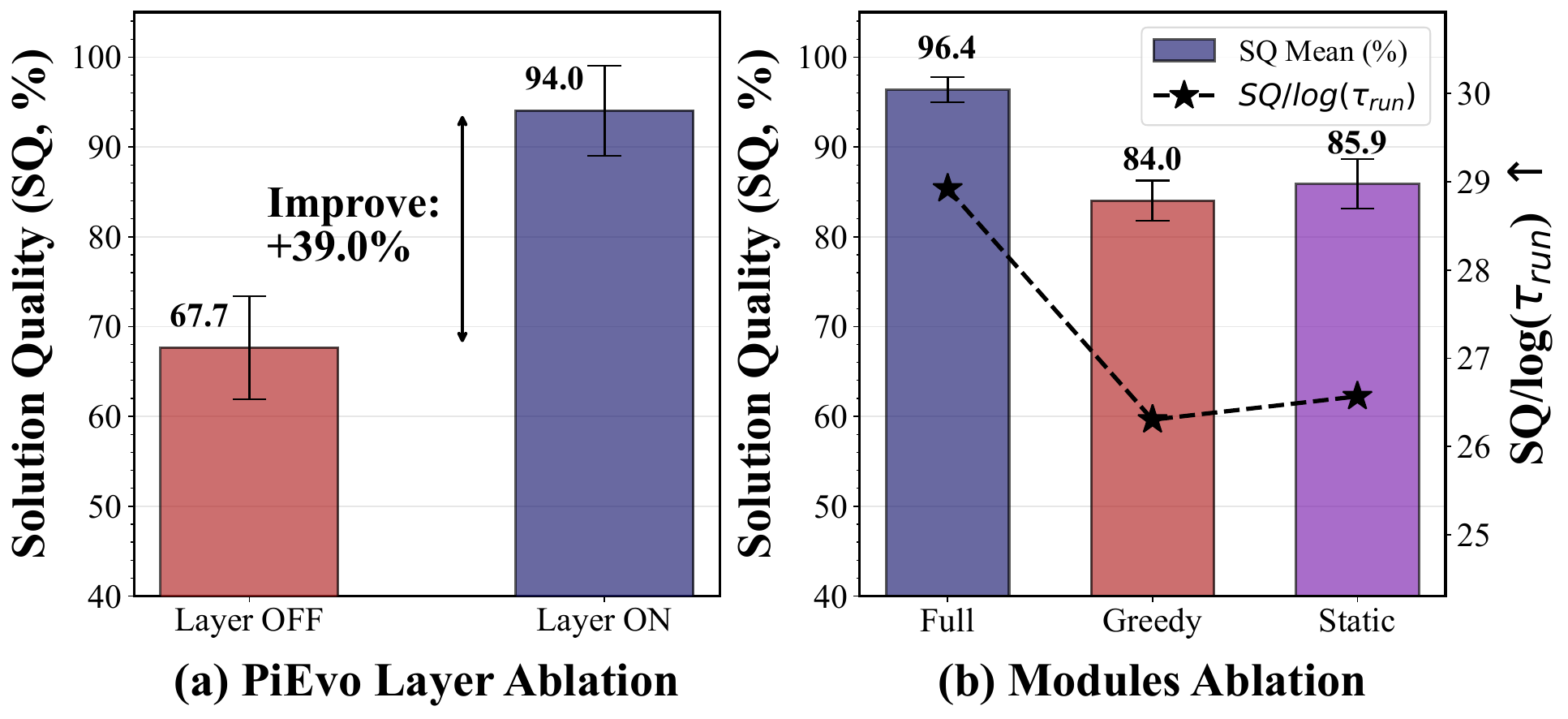}
	\caption{\small \textbf{Ablation study of \method module on NHO task.} (a) Performance gain from \method layer. (b) Comparison of cost-effectiveness across ON/OFF submodules in \method. \looseness=-1}
	\label{fig:ablation_combined}
	\vspace{-10pt}
\end{figure}

\begin{table}[h!]
	\centering
	\caption{\textbf{Ablation study of \method.} The mode \texttt{\scriptsize Greedy Only} is for disabling hypothesis selection, and \texttt{\scriptsize Static Evolution} is for disabling principle optimization in NHO task. Efficiency is calculated as SQ per log-scale time.  \looseness=-1}
	\label{tab:ablation_modules}
	\resizebox{\linewidth}{!}{
		\begin{tabular}{lcccc}
			\toprule
			\textbf{Method}                             & \textbf{SQ (\%)} $\uparrow$ & \textbf{SQ$_{\max}$ (\%)} $\uparrow$ & \textbf{$\tau_{run}$ (sec)} $\downarrow$ & \textbf{$\frac{\text{SQ}_{\max}}{\log_{10}(\tau_{run})}$} $\uparrow$ \\
			\midrule
			\textbf{\method}                            & \textbf{96.36} $\pm$  1.70  & \textbf{99.64}                       & 2069.7                                   & \textbf{29.01}                                                       \\
			\midrule
			w/o Hypothesis Selection                                                                                                                                                                                                           \\ \texttt{\scriptsize \quad Greedy Only}                & 84.03 $\pm$  2.22  & 86.99                  & 2029.2                         & 26.30                                                 \\
			w/o Principle Evolution                                                                                                                                                                                                            \\
			\texttt{\scriptsize \quad Static Evolution} & 85.87 $\pm$  2.76           & 88.63                                & 2168.3                                   & 26.57                                                                \\
			\bottomrule
		\end{tabular}
	}
\end{table}

\noindent \textbf{Ablations of LLM Capability and GP Accuracy.} To validate low-resource robustness, we evaluated \method using the lightweight Qwen3-14B (Table~\ref{tab:lightweight_llm}). It maintains strong performance and significantly outperforms baselines, proving it does not rely on computationally expensive reasoning models. Furthermore, geometric alignment is a robust proxy for experimental likelihood, GP experts associated with valid principles consistently achieve high accuracy across all tasks (Table~\ref{tab:valid_principles}). Crucially, a low $R^2$ is an intended epistemic signal, not a failure. 

\begin{table}[h!]
	\centering
	\caption{\textbf{Performance comparison using lightweight Qwen3-14B backbone.} The methods are evaluated on the MBO task. The Drop\% indicates the SQ\% drop compared to \method in this table. \looseness=-1}
	\label{tab:lightweight_llm}
	\resizebox{\linewidth}{!}{
		\begin{tabular}{lcccc}
		\toprule
		\textbf{Method} & \textbf{SQ \%} & \textbf{AUOC \%} & \textbf{APD \%} & \textbf{Drop \%} \\
		\toprule
		\method (ours) & \textbf{130.01} $\pm$ 52.60 & \textbf{108.3} $\pm$ 40.9 & \textbf{24.5} $\pm$ 13.7 & - \\
		\midrule
		PiFlow & 59.99 $\pm$ 9.84 & 50.3 $\pm$ 10.7 & 9.7 $\pm$ 6.2 & - 53.86\% \\
		AI-Researcher & 49.42 $\pm$ 15.45 & 40.3 $\pm$ 2.9 & 3.3 $\pm$ 3.0 & - 61.99\% \\
		The-AI-Scientist-v2 & 61.87 $\pm$ 4.75 & 61.2 $\pm$ 5.6 & 2.4 $\pm$ 1.5 & - 52.41\% \\
		The-AI-Scientist-v1 & 67.09 $\pm$ 13.34 & 60.7 $\pm$ 9.8 & 4.4 $\pm$ 4.8 & - 48.40\% \\
		\bottomrule
		\end{tabular}
	}
	\vspace{-8pt}
\end{table}

\begin{table}[h!]
    \centering
    \caption{\textbf{Proportion of valid principles across tasks.} The validity of principles is determined by their $R^2$ correlation with solution quality, where a principle is considered valid if the correlation coefficient of GP's predictions and experiment results $R^2 > 0.1$, refer to Figure~\ref{fig:gp_parity_prediction_vs_actual} for example. \looseness=-1}
    \label{tab:valid_principles}
    \resizebox{0.9 \linewidth}{!}{
        \begin{tabular}{lcc}
        \toprule
        \textbf{Task} & \textbf{Mean and std of $\boldsymbol{R^2}$} & \textbf{Valid Principles Proportion} \\
        \midrule
        MBO & 0.94 $\pm$ 0.06 & 80.0\% \\
        NHO & 0.81 $\pm$ 0.13 & 42.9\% \\
        SPO & 0.87 $\pm$ 0.08 & 95.0\% \\
        TMC & 0.81 $\pm$ 0.13 & 93.9\% \\
        \bottomrule
        \end{tabular}
    }
	\vspace{-10pt}
\end{table}

\noindent \textbf{Ablation of GP's Kernel.}
We evaluate an Additive Kernel against our standard RBF kernel on this 2D space (Table~\ref{tab:kernel_comparison}). While the Additive Kernel shows a slightly higher mean SQ, it introduces severe instability (massive variance) and degrades exploitation-exploration balance (lower AUOC and APD). Thus, the RBF kernel (Algorithm~\ref{alg:pievo}) is the optimal choice for our low-dimensional feature space. 

\begin{table}[h!]
    \centering
    \caption{\textbf{Ablation of different kernels in GP experts.} Results are reported on the MBO task. } 
    \label{tab:kernel_comparison}
    \resizebox{\linewidth}{!}{
		\begin{tabular}{lccc}
		\toprule
		\textbf{Method} & \textbf{SQ (\%)} & \textbf{AUOC} & \textbf{APD} \\
		\midrule
		RBF Kernel & 149.06 $\pm$ 13.08 & 1.183 $\pm$ 0.194 & 0.245 $\pm$ 0.021 \\
		Additive Kernel & 161.46 $\pm$ 61.78 & 1.142 $\pm$ 0.320 & 0.213 $\pm$ 0.060 \\
		\bottomrule
		\end{tabular}
	}
	\vspace{-5pt}
\end{table}

\noindent \textbf{Ablation of \method Dynamics and Test-time Reasoning.} We further analyze the system's internal mechanisms (see Appendix~\ref{sec:principle_evolving_dynamics} and test-time reasoning by \texttt{Think} mode in \ref{sec:test_time_reasoning_ablation}). We found that: (a) \method systematically evolves high-fidelity principles that correlate directly with solution quality, validating the efficacy of principle-space optimization. (b) a \textit{cognitive interference} effect where \texttt{Think} models degrade performance ($\downarrow 26.35\%$), as LLM's diverse reasoning conflicts with the rigorous utility of strategic layers.

\section{Conclusion}
\label{sec:conclusion}
We presented \method, a framework that reformulates scientific discovery from static hypothesis search to Bayesian optimization over an evolving principle space. By synergizing \textit{Information-Directed Selection} via GP experts with \textit{Anomaly-Driven Epistemic Augmentation}, \method actively refines its theoretical worldview. Empirically, it achieves a 31.06\% gain in solution quality, sublinear regret, and autonomous discovery of chiral physics, establishing principle evolution as a key paradigm for scientific agents.

\section*{Acknowledgement}
This work was supported in part by the National Natural Science Foundation of China (NSFC) under No. 62576285, No. 92356310, No. 22575200, Research Center for Industries of the Future (RCIF) at Westlake University, and Westlake Education Foundation. 

We thank the anonymous reviewers for their insightful comments and suggestions, which significantly improved this paper. We also express our gratitude to Dr. Tengjie Lyu and Dr. Jie Wu for their valuable discussions and feedback.

\section*{Impact Statement}
This paper presents work whose goal is to advance the field of automated scientific discovery. There are many potential societal consequences of our work, none which we feel must be specifically highlighted here.


\bibliography{sources/reference}
\bibliographystyle{configuration/icml2026}

\newpage
\appendix
\onecolumn

\section{Theory with Coherent Principle Augmentation}
\label{sec:theory_coherent}

We formalize a learning process that (i) selects hypotheses guided by candidate principles and (ii) \emph{coherently} augments the principle space when anomalies appear, while maintaining a single fixed prior over a (possibly countable) \emph{universal} principle space. This section establishes regret guarantees and posterior consistency under explicit, verifiable assumptions of \method. The analysis separates the contributions of identification (principle selection) and principle-to-hypothesis execution, and makes the cost of discovery delay, i.e., the time until an adequate principle becomes available.

\subsection{Assumptions}

\paragraph{Universal principle space and coherent augmentation.}
Let $\bar{\mathcal P}$ be a countable \emph{universe of principles} endowed with a fixed prior $p_0$ such that
\[
	H(p_0):=H(P) \;=   -\sum_{P\in\bar{\mathcal P}} p_0(P)   \ln p_0(P)\;<\;\infty,
\]
where $P\sim p_0$ is the (a priori) random principle and $H(\cdot)$ denotes Shannon entropy.
At each round $t=1,2,\dots$, \method maintains a \emph{working set} $\mathcal P_t\subseteq \bar{\mathcal P}$ of principles. It may \emph{augment} $\mathcal P_t$ by adding new elements of $\bar{\mathcal P}$ in response to anomalies. Augmentation is \emph{Bayesian coherent} in the following sense.

\begin{assumption}[Coherent augmentation]\label{assump:coherent}
	For every $t$, the posterior over all $P\in\bar{\mathcal P}$ is defined by Bayes' rule from the fixed prior $p_0$ and the full history $H_t$:
	\[
		p_t(P)\ :=\ p(P\mid H_t)\ \propto\ p_0(P)\prod_{s=1}^t p(y_s\mid h_s,P).
	\]
	When new principles are added to the working set at time $t$, their (stored) posterior equals the restriction of $p_t(\cdot)$ to those elements. No local re-initialization of priors occurs.
\end{assumption}

\paragraph{Observation model, rewards, and regret.}
On round $t$, \method selects a principle $P_t\in \mathcal P_t$, then a hypothesis $h_t\in\mathcal H$ (via a generator $\pi_t$), and receives $y_t\sim p(\cdot\mid h_t,P^\star)$ for a fixed but unknown $P^\star\in\bar{\mathcal P}$. A bounded reward $r:\mathcal H\times \bar{\mathcal P}\to[0,1]$ is realized, and regret is measured against the \emph{oracle hypothesis under the true principle}:
\[
	v^\star \;=   \max_{h\in\mathcal H} r(h,P^\star),\qquad
	\Delta_t \;=   v^\star-\mathbb E \left[r(h_t,P^\star)\mid H_{t-1}\right],\qquad
	\mathrm{Regret}(T)\;=\;\sum_{t=1}^T \Delta_t.
\]
Define $h^\star(P)\in\arg\max_h r(h,P)$ and the uniform deviation bound:
\[
	G\ :=\ \sup_{h\in\mathcal H,\;P\in\bar{\mathcal P}} \Big|   r\big(h^\star(P),P^\star\big)-r(h,P^\star)\Big|\ \le\ 1.
\]

\paragraph{Identifiability and persistent information.}
\begin{assumption}[Identifiability]\label{assump:ident}
	For every $P\neq P^\star$ there exists $h\in\mathcal H$ such that $p(\cdot\mid h,P)\neq p(\cdot\mid h,P^\star)$ (e.g., in total variation or KL).
\end{assumption}

\begin{assumption}[Persistent information]\label{assump:persistent}
	Conditioned on any history, the policy selects actions that (with positive probability, infinitely often) yield strictly positive information about each wrong $P\neq P^\star$; equivalently, the cumulative expected information against any $P\neq P^\star$ diverges almost surely once an informative discriminator is available in the working set.
\end{assumption}

\paragraph{Principle-to-hypothesis generator.}
Given a chosen principle $P_t$, a generator (e.g., a well-trained large language model in our case) produces a finite candidate set $\mathcal C_t(P_t)\subset\mathcal H$ with cardinality $N_t:=|\mathcal C_t(P_t)|\ge 2$ and a proposal distribution $\pi_t(\cdot\mid P_t,H_{t-1})$ supported on $\mathcal C_t(P_t)$. Let $H(\pi_t)$ denote its entropy. We assume:

\begin{enumerate}[label=(G\arabic*)]
	\item \label{gen:entropy} \emph{Low-entropy proposal:} almost surely $H(\pi_t)\le \varepsilon_t$, with $0\le \varepsilon_t\le \ln N_t$.
	\item \label{gen:approx} \emph{Candidate-pool approximation:} almost surely there exists $h_t^\dagger\in\mathcal C_t(P_t)$ such that
	      \[
		      r\big(h_t^\dagger, P^\star\big)\;\ge   r\big(h^\star(P_t),P^\star\big)-\delta_t,\qquad \delta_t\ge 0.
	      \]
	\item \label{gen:calib} \emph{Calibration of top mass:} almost surely $\pi_t(h_t^\dagger)\ge \alpha_t$, where $\alpha_t$ is the top mass of $\pi_t$ on $\mathcal C_t(P_t)$.
\end{enumerate}

\paragraph{Information and IDS selection.}
Let $U^{\mathrm{EP}}_t:=H(P\mid H_{t-1})$ be the (Bayesian) posterior entropy of the principle before seeing $y_t$, and
\[
	I_t\ :=\ \mathbb E_{Y_t\mid H_{t-1},h_t} \left[U^{\mathrm{EP}}_t-U^{\mathrm{EP}}_{t+1}\right]\ =\ I \left(P;Y_t\mid H_{t-1},h_t\right)
\]
be the expected entropy reduction (mutual information).
The action $h_t$ is chosen to satisfy an information-directed sampling (IDS) ratio bound.

\begin{assumption}[IDS ratio]\label{assump:ids}
	For every $t$, either $I_t=0$ and $\Delta_t^{(\mathrm{ID})}=0$, or $I_t>0$ and
	\[
		\frac{\big(\Delta_t^{(\mathrm{ID})}\big)^2}{I_t}\ \le\ \Gamma
		\qquad\text{for some fixed }\Gamma>0,
	\]
	where the ID-regret component is
	\[
		\Delta_t^{(\mathrm{ID})}\ :=\ \mathbb E \left[r \left(h^\star(P^\star),P^\star\right)-r \left(h^\star(P_t),P^\star\right)   \middle|   H_{t-1}\right].
	\]
\end{assumption}

\paragraph{Discovery time and discoverability.}
Define the \emph{discovery time} of an adequate principle as
\[
	\tau_{\mathrm{disc}}\ :=\ \inf\Big\{t\ge 1:\ \mathcal P_t\ \text{contains }P^\star\ \text{or some }P\ \text{with}\
	\sup_{h\in\mathcal H}\big\|p(\cdot\mid h,P)-p(\cdot\mid h,P^\star)\big\|_{\mathrm{TV}}\ \le\ \varepsilon\ \Big\},
\]
for a fixed accuracy parameter $\varepsilon\in[0,1)$.
Before $\tau_{\mathrm{disc}}$, the working set may not allow informative discrimination of $P^\star$.

\begin{assumption}[Discoverability]\label{assump:discover}
	There exist $S\in\mathbb N$ and $\rho\in(0,1]$ such that the augmentation routine is triggered at least once every $S$ rounds and, whenever triggered, it proposes a new principle in the above $\varepsilon$-ball with probability at least $\rho$ (possibly via multiple draws per trigger). Then $\mathbb E[\tau_{\mathrm{disc}}]\le S/\rho$.
\end{assumption}

\subsection{Auxiliary Lemmas}

\begin{lemma}[Regret decomposition]\label{lem:decomp}
	For every $t$,
	\[
		\Delta_t\;=\;\underbrace{\mathbb E \left[r \left(h^\star(P^\star),P^\star\right)-r \left(h^\star(P_t),P^\star\right)   \middle|   H_{t-1}\right]}_{\Delta_t^{(\mathrm{ID})}}
		\ +\
		\underbrace{\mathbb E \left[r \left(h^\star(P_t),P^\star\right)-r(h_t,P^\star)   \middle|   H_{t-1}\right]}_{\Delta_t^{(\mathrm{PH})}}.
	\]
\end{lemma}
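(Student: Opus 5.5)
The identity is a telescoping add-and-subtract followed by linearity of conditional expectation, so the plan is short. First I rewrite the benchmark: since $h^\star(P^\star)\in\arg\max_h r(h,P^\star)$, we have $v^\star = r\big(h^\star(P^\star),P^\star\big)$. This quantity is deterministic given the fixed unknown $P^\star$, so it is trivially $H_{t-1}$-measurable. Hence
\[
\Delta_t = v^\star - \mathbb E\left[r(h_t,P^\star)\middle| H_{t-1}\right] = \mathbb E\left[r\big(h^\star(P^\star),P^\star\big)-r(h_t,P^\star)\middle| H_{t-1}\right].
\]

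Next I insert the intermediate term $r\big(h^\star(P_t),P^\star\big)$, adding and subtracting it inside the conditional expectation. The integrand splits as
\[
\big[r(h^\star(P^\star),P^\star)-r(h^\star(P_t),P^\star)\big]+\big[r(h^\star(P_t),P^\star)-r(h_t,P^\star)\big].
\]
Linearity of conditional expectation then gives exactly $\Delta_t^{(\mathrm{ID})}+\Delta_t^{(\mathrm{PH})}$. Conceptually, the first term measures the cost of having chosen the wrong principle, and the second measures the cost of translating the chosen principle into a hypothesis.

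The only step that needs care, and it is mild rather than a real obstacle, is making sure every conditional expectation is well defined and that the split is legitimate. The principle $P_t$ and the hypothesis $h_t$ are random; they depend on the internal randomization of the policy and of the generator $\pi_t$ given $H_{t-1}$. The map $P\mapsto h^\star(P)$ needs a fixed measurable selection from the argmax; on a countable $\bar{\mathcal P}$ any choice works, and this is the same selection used in the definition of $G$. Because $r$ takes values in $[0,1]$, every term is bounded and therefore integrable, so linearity applies with no convergence issues. No further assumption (coherence, IDS, and so on) is needed; the decomposition is purely algebraic.
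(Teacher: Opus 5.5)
Your proposal is correct and matches the paper's proof: both insert the intermediate comparator $r(h^\star(P_t),P^\star)$ inside the conditional expectation given $H_{t-1}$ and split by linearity. Your explicit notes that $v^\star = r(h^\star(P^\star),P^\star)$ is deterministic for the fixed $P^\star$, and that boundedness of $r$ guarantees integrability, make precise what the paper's one-line argument leaves implicit.
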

\begin{proof}
	Condition on $H_{t-1}$. Decompose the instantaneous gap between $v^\star$ and the expected reward of the chosen hypothesis $h_t$ by introducing the intermediate comparator $h^\star(P_t)$:
	\begin{align*}
		\Delta_t & = v^\star - \E[r(h_t,P^\star)\mid H_{t-1}]                                                                                                                                                                     \\
		         & = \Big( v^\star - \underbrace{\E[r(h^\star(P_t),P^\star)\mid H_{t-1}]}_{Intermediate} \Big) + \Big( \underbrace{\E[r(h^\star(P_t),P^\star)\mid H_{t-1}]}_{Intermediate} - \E[r(h_t,P^\star)\mid H_{t-1}] \Big) \\
		         & = \Delta_t^{(\mathrm{ID})} + \Delta_t^{(\mathrm{PH})},
	\end{align*}
	which is the stated decomposition.
\end{proof}

\begin{lemma}[Discrete-entropy bound for PH-regret]\label{lem:ph_entropy}
	Under \ref{gen:entropy}--\ref{gen:calib}, conditioning on $H_{t-1}$ and $P_t$, let $N_t=|\mathcal C_t(P_t)|\ge 2$, let $\varepsilon_t\in[0,\ln N_t]$ be such that $H(\pi_t)\le \varepsilon_t$, and let
	\[
		\Phi_{N}(\alpha)\ :=\ H_{\mathrm{b}}(\alpha)+(1-\alpha)\ln(N-1),\qquad
		\alpha_N(\varepsilon)\ :=\ \max\{\alpha\in[1/N,1]:\ \Phi_N(\alpha)=\varepsilon\},
	\]
	\[
		\varphi_N(\varepsilon)\ :=\ 1-\alpha_N(\varepsilon).
	\]
	Then
	\[
		\Delta_t^{(\mathrm{PH})}\ \le\ \delta_t\ +\ G\cdot \varphi_{N_t}(\varepsilon_t).
	\]
\end{lemma}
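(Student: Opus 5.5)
The plan is to condition on $H_{t-1}$ and $P_t$ throughout, write $\pi:=\pi_t(\cdot\mid P_t,H_{t-1})$ and $h^\dagger:=h_t^\dagger$, and bound the PH-gap by inserting $h^\dagger$ as an intermediate comparator:
\[
r(h^\star(P_t),P^\star)-\mathbb E_{h_t\sim\pi}[r(h_t,P^\star)]
=\big[r(h^\star(P_t),P^\star)-r(h^\dagger,P^\star)\big]+\sum_{h\in\mathcal C_t(P_t)}\pi(h)\big[r(h^\dagger,P^\star)-r(h,P^\star)\big].
\]
By \ref{gen:approx}, the first bracket is at most $\delta_t$.

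For the sum, the term with $h=h^\dagger$ vanishes. Every other term is at most
\[
\pi(h)\,\big|r(h^\dagger,P^\star)-r(h,P^\star)\big|.
\]
I will bound each difference by $G$. The cleanest way is to compare both $h^\dagger$ and $h$ against the reference $h^\star(P^\star)$. Alternatively, one can read $G$ as a uniform oscillation bound on $r(\cdot,P^\star)$, which is legitimate since $G\le 1$ and rewards lie in $[0,1]$. If the literal definition of $G$ only controls deviations from points of the form $h^\star(P)$, I would simply note that $r\in[0,1]$ makes the oscillation at most $1$, and that $G$ is intended as that oscillation. Either way the sum is at most $G\,(1-\pi(h^\dagger))$.

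It then remains to show $1-\pi(h^\dagger)\le\varphi_{N_t}(\varepsilon_t)$, that is, $\pi(h^\dagger)\ge\alpha_{N_t}(\varepsilon_t)$. By \ref{gen:calib}, $\pi(h^\dagger)\ge\alpha_t$, the top mass of $\pi$, so it suffices to lower bound the top mass of any distribution on $N$ points with entropy at most $\varepsilon$. This is a Fano-type step:
\begin{itemize}
\item Among distributions on $N$ points with top mass $\alpha$, the entropy is maximized by spreading the remaining $1-\alpha$ uniformly over the other $N-1$ points.
\item This follows from the chain rule: $H(\pi)\le H_{\mathrm b}(\alpha)+(1-\alpha)\ln(N-1)=\Phi_N(\alpha)$.
\item Hence $\Phi_N(\alpha_t)\ge H(\pi)$, which by itself is not the useful direction. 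What is needed is that a top mass $\alpha$ forces entropy at least some amount.
\end{itemize}

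This reversal is the main obstacle. The upper bound $\Phi_N$ on entropy does not directly give a lower bound on $\alpha$ from $H(\pi)\le\varepsilon$. In general a small top mass is compatible with low entropy only if the mass is concentrated, which contradicts top mass being small. The correct inequality is $H(\pi)\ge -\ln\alpha$ (min-entropy is at most Shannon entropy), giving $\alpha\ge e^{-\varepsilon}$.

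To obtain the sharper $\Phi_N$ form, I would use the following observation. $\Phi_N$ is decreasing on $[1/N,1]$, from $\ln N$ down to $0$. Any distribution with top mass $\alpha$ and entropy $\le\varepsilon$ satisfies $\varepsilon\ge H(\pi)$. Because the paper defines $\alpha_N(\varepsilon)$ as the largest root, the intended implication is $\alpha_t\ge\alpha_N(\varepsilon_t)$, via monotonicity of $\Phi_N$ and $\Phi_N(\alpha_t)\ge H(\pi)$ read as the entropy-to-top-mass correspondence.

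I will argue this via monotonicity and, if the implication fails in full generality, fall back on the weaker but valid bound $\varphi\le 1-e^{-\varepsilon}$. I will note that the two bounds coincide in the regime where the tail of $\pi$ is uniform, which is the extremal case.

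Finally, I combine the pieces and take expectation conditional on $H_{t-1}$ to obtain
\[
\Delta_t^{(\mathrm{PH})}\le\delta_t+G\,\varphi_{N_t}(\varepsilon_t).
\]
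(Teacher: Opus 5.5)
\paragraph{Verdict.} There is a genuine gap, and it cannot be closed: the step you try to recover by monotonicity is false, and so is the lemma as stated.

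\paragraph{What works, and the failing step.} Your first half is sound and follows the paper's route. Splitting at $h_t^\dagger$ gives $\Delta_t^{(\mathrm{PH})}\le\delta_t+G\,(1-\pi_t(h_t^\dagger))$. Your comparison through $h^\star(P^\star)$ needs no reinterpretation of $G$: taking $P=P^\star$ in its definition places every $r(h,P^\star)$ in $[v^\star-G,v^\star]$. The gap is the step $\pi_t(h_t^\dagger)\ge\alpha_t\ge\alpha_{N_t}(\varepsilon_t)$, which is false for $N_t\ge3$. Since $\Phi_N$ is strictly decreasing on $[1/N,1]$, the inequality $\alpha_t\ge\alpha_{N_t}(\varepsilon_t)$ is equivalent to $\Phi_{N_t}(\alpha_t)\le\varepsilon_t$. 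The max-entropy argument only gives $H(\pi_t)\le\Phi_{N_t}(\alpha_t)$, with equality only when the non-top mass is uniform. The paper's own proof makes exactly the reversal you flagged: it writes $H(\pi_t)\le\Phi_{N_t}(\alpha_t)\le\varepsilon_t$, and the second inequality does not follow from $H(\pi_t)\le\varepsilon_t$.

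\paragraph{Counterexample.} Take $N_t=3$ with atoms $h_1,h_2,h_3$, $\pi_t=(\tfrac12,\tfrac12,0)$ and $\varepsilon_t=\ln2\le\ln3$. Let $h_1=h^\star(P_t)=h_t^\dagger$ with $r(h_1,P^\star)=1$, so $\delta_t=0$ and calibration holds since $\pi_t(h_1)=\tfrac12=\alpha_t$. Let $r(h_2,P^\star)=0$, so $G=1$. Then $\Delta_t^{(\mathrm{PH})}=\tfrac12$. But $\Phi_3(\alpha)=\ln2$ at $\alpha\approx0.77$, so the claimed bound is $\varphi_3(\ln2)\approx0.23$. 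With $\pi_t$ uniform on two of $N$ atoms it gets worse, since $\varphi_N(\ln2)\to0$ as $N\to\infty$ while $1-\alpha_t=\tfrac12$. For $N_t=2$ the lemma does hold, because then $H(\pi_t)=H_{\mathrm b}(\alpha_t)=\Phi_2(\alpha_t)$.

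\paragraph{Your fallback.} Your fallback is the right repair, but it proves a different, weaker statement. From $H(\pi_t)\ge-\ln\alpha_t$ you get $\Delta_t^{(\mathrm{PH})}\le\delta_t+G\,(1-e^{-\varepsilon_t})$. Since $\Phi_N(\alpha)\ge-\ln\alpha$ on $[1/N,1]$, one has $1-e^{-\varepsilon}\ge\varphi_N(\varepsilon)$, so this bound is weaker than the claimed one. Your remark that the two bounds coincide when the tail is uniform is also incorrect. A uniform tail is what makes the $\Phi_N$ bound valid, since then $H(\pi_t)=\Phi_{N_t}(\alpha_t)$. But $e^{-\varepsilon}$ and $\alpha_N(\varepsilon)$ still differ: for $N=2$ and $\varepsilon=\tfrac12$ they are about $0.61$ and $0.80$.

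\paragraph{Amending the statement.} Either strengthen (G1) to $\Phi_{N_t}(\alpha_t)\le\varepsilon_t$, for instance by assuming a uniform tail. Or replace $\varphi_{N_t}(\varepsilon_t)$ by $1-e^{-\varepsilon_t}$. A sharper valid alternative is the inverse of the minimal entropy at top mass $\alpha$, which is attained by $\lfloor1/\alpha\rfloor$ atoms of mass $\alpha$ plus one remainder atom. With $\psi_{N_t}$ redefined in any of these ways, Theorem~\ref{thm:main} goes through unchanged.
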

\begin{proof}
	Condition on $H_{t-1}$ and $P_t$. By definition,
	\[
		\Delta_t^{(\mathrm{PH})}
		=\sum_{h\in\mathcal C_t}\pi_t(h)\Big(r(h^\star(P_t),P^\star)-r(h,P^\star)\Big).
	\]
	Split the sum at $h_t^\dagger$:
	\begin{align*}
		\Delta_t^{(\mathrm{PH})}
		 & = \pi_t(h_t^\dagger)\Big(r(h^\star(P_t),P^\star)-r(h_t^\dagger,P^\star)\Big)
		+  \sum_{h\neq h_t^\dagger}  \pi_t(h)\Big(r(h^\star(P_t),P^\star)-r(h,P^\star)\Big) \\
		 & \le \pi_t(h_t^\dagger)\cdot \delta_t\ +\ (1-\pi_t(h_t^\dagger))\cdot G
		\ \le\ \delta_t\ +\ G\cdot (1-\pi_t(h_t^\dagger)).
	\end{align*}
	Let $\alpha_t:=\max_h \pi_t(h)$ be the top mass. By calibration, $\pi_t(h_t^\dagger)\ge \alpha_t$. Among distributions on $N_t$ atoms with top mass $\alpha$, entropy is maximized by spreading $1-\alpha$ uniformly on the remaining $N_t-1$ atoms, giving $\Phi_{N_t}(\alpha)$. Hence
	\[
		H(\pi_t)\ \le\ \Phi_{N_t}(\alpha_t)\ \le\ \varepsilon_t
		\quad\Rightarrow\quad
		\alpha_t\ \ge\ \alpha_{N_t}(\varepsilon_t),\quad
		1-\pi_t(h_t^\dagger)\ \le\ 1-\alpha_t\ \le\ \varphi_{N_t}(\varepsilon_t).
	\]
	Combine with the previous inequality to conclude.
\end{proof}

\begin{lemma}[IDS information-ratio bound]\label{lem:ids}
	If Assumption~\ref{assump:ids} holds, then for any $T\ge 1$,
	\[
		\sum_{t=1}^T \Delta_t^{(\mathrm{ID})}\ \le\ \sqrt{\Gamma   T   \sum_{t=1}^T I_t}.
	\]
	Taking expectations preserves the bound, and using $\sum_{t=1}^T I_t\le H(P)$ yields
	\[
		\mathbb E \left[\sum_{t=1}^T \Delta_t^{(\mathrm{ID})}\right]\ \le\ \sqrt{\Gamma   T   H(P)}.
	\]
\end{lemma}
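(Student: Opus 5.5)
The argument is the standard Russo--Van Roy information-ratio argument: a pointwise bound from Assumption~\ref{assump:ids}, Cauchy--Schwarz over rounds, and an entropy chain-rule bound on the total information. First I will show that, for every $t$,
\[
\Delta_t^{(\mathrm{ID})}\le\sqrt{\Gamma I_t}.
\]
By Assumption~\ref{assump:ids} there are two cases. If $I_t=0$, then $\Delta_t^{(\mathrm{ID})}=0$ and the bound holds trivially. If $I_t>0$, then $(\Delta_t^{(\mathrm{ID})})^2\le\Gamma I_t$, and taking square roots gives the bound. This step uses that $\Delta_t^{(\mathrm{ID})}\ge 0$. That is true because $h^\star(P^\star)$ maximizes $r(\cdot,P^\star)$, so the integrand $r(h^\star(P^\star),P^\star)-r(h^\star(P_t),P^\star)$ is nonnegative.

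Next I sum over $t$ and apply Cauchy--Schwarz:
\[
\sum_{t=1}^T\Delta_t^{(\mathrm{ID})}\le\sum_{t=1}^T\sqrt{\Gamma I_t}\le\sqrt{\Gamma T\sum_{t=1}^T I_t}.
\]
This is the first claim, and it holds pathwise, i.e.\ for every realization of the history.

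For the expected version I take expectations and apply Jensen's inequality to the concave map $x\mapsto\sqrt{x}$:
\[
\mathbb E\Big[\sum_t\Delta_t^{(\mathrm{ID})}\Big]\le\sqrt{\Gamma T\,\mathbb E\big[\textstyle\sum_t I_t\big]}.
\]
It then remains to bound $\mathbb E[\sum_t I_t]\le H(P)$.

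The main care is needed in this last bound. By the chain rule for mutual information,
\[
\sum_{t=1}^T\mathbb E[I_t]=\sum_t I(P;Y_t\mid H_{t-1},h_t)=I(P;H_T)\le H(P).
\]
Equivalently, the expected posterior entropies $U^{\mathrm{EP}}_t$ telescope and stay nonnegative. Three conditions make this legitimate.
\begin{itemize}
\item The actions $h_t$ must be measurable with respect to $H_{t-1}$ together with independent internal randomness. Then conditioning on $h_t$ adds no information about $P$, and the chain-rule terms are exactly the $I_t$.
\item By Assumption~\ref{assump:coherent}, the posterior is always the Bayes posterior on all of $\bar{\mathcal P}$ under the fixed prior $p_0$. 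Augmenting the working set therefore never resets the entropy, and the telescoping is not broken by augmentation events.
\item Because $\bar{\mathcal P}$ is countable, Shannon entropy is nonnegative, and $H(p_0)<\infty$, the bound $I(P;H_T)\le H(P)$ holds.
\end{itemize}
Substituting this into the Jensen bound gives $\sqrt{\Gamma T H(P)}$.

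One caveat concerns the pathwise claim ``using $\sum_t I_t\le H(P)$''. That inequality holds in expectation, not pathwise, so I would state the second claim via the expectation and Jensen route described above.
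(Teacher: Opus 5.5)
Your proof is correct and follows the paper's argument: a per-round bound from Assumption~\ref{assump:ids}, Cauchy--Schwarz over rounds, then Jensen's inequality for $\sqrt{\cdot}$ combined with the information budget of Lemma~\ref{lem:info_budget}; applying Cauchy--Schwarz to $\sum_t\sqrt{\Gamma I_t}$, rather than to the paper's factorization $\sqrt{(\Delta_t^{(\mathrm{ID})})^2/I_t}\,\sqrt{I_t}$, also avoids the undefined $0/0$ terms when $I_t=0$. Your caveat is correct and worth keeping: $\sum_t I_t\le H(P)$ holds only in (Bayesian) expectation, not pathwise as the ``almost surely'' in Lemma~\ref{lem:info_budget} asserts, because that lemma's telescoping conflates realized and expected entropy drops, so deriving the second claim via Jensen and $\mathbb E[\sum_t I_t]=I(P;H_T)\le H(P)$ is the right reading.
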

\begin{proof}
	For indices with $I_t=0$ we also have $\Delta_t^{(\mathrm{ID})}=0$ by assumption. For the others, $(\Delta_t^{(\mathrm{ID})})^2\le \Gamma I_t$. Then
	\[
		\Big(\sum_{t=1}^T \Delta_t^{(\mathrm{ID})}\Big)^2
		=\Big(\sum_{t=1}^T \sqrt{\tfrac{(\Delta_t^{(\mathrm{ID})})^2}{I_t}}\;\sqrt{I_t}\Big)^2
		\le \Big(\sum_{t=1}^T \tfrac{(\Delta_t^{(\mathrm{ID})})^2}{I_t}\Big)\Big(\sum_{t=1}^T I_t\Big)
		\le \Gamma T \sum_{t=1}^T I_t,
	\]
	by Cauchy--Schwarz and the per-round ratio bound. Take square roots. For expectations, use Jensen's inequality for $\sqrt{\cdot}$ and the information budget bound in Lemma~\ref{lem:info_budget} below.
\end{proof}

\begin{lemma}[Information accounting under coherent augmentation]\label{lem:info_budget}
	Under Assumption~\ref{assump:coherent}, for every $T\ge 1$,
	\[
		\sum_{t=1}^T I_t\ =\ H(P)-H(P\mid H_T)\qquad\text{almost surely},
	\]
	and consequently $\mathbb E \left[\sum_{t=1}^T I_t\right]=H(P)-\mathbb E \left[H(P\mid H_T)\right]\le H(P)$.
\end{lemma}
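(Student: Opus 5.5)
The plan is to prove the identity by \emph{telescoping} the posterior entropy $U^{\mathrm{EP}}_t=H(P\mid H_{t-1})$ along the realized history. Assumption~\ref{assump:coherent} is what makes this legitimate: it places the posterior on the whole universe $\bar{\mathcal P}$ rather than on the working set $\mathcal P_t$. The argument has four steps.

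\textbf{Step 1: augmentation leaves the entropy unchanged.} By Assumption~\ref{assump:coherent}, $p_t(\cdot)=p(\cdot\mid H_t)$ is always the Bayes posterior from the single fixed prior $p_0$ on all of $\bar{\mathcal P}$. Adding elements to $\mathcal P_t$ only exposes the restriction of this already-defined distribution. So augmentation changes neither $p_t$ nor its entropy, and there is no extra entropy term at augmentation times. In particular, at $t=1$ the history is empty, so $U^{\mathrm{EP}}_1=H(p_0)=H(P)$. At the end, $U^{\mathrm{EP}}_{T+1}=H(P\mid H_T)$ is the entropy of the realized posterior $p_T$. Augmentation therefore never contributes to the accounting.

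\textbf{Step 2: well-definedness.} I will check that every $U^{\mathrm{EP}}_t$ is finite, so that the telescoping never meets $\infty-\infty$. The prior $p_0$ has $H(p_0)<\infty$ on a countable space, and each posterior $p_t$ is a reweighting of $p_0$ by the likelihood product. To show $H(p_t)<\infty$ almost surely, I would use the Gibbs variational bound against $p_0$, which writes $H(p_t)$ in terms of $\sum_P p_t(P)\ln\frac{1}{p_0(P)}$ and a KL term. Alternatively, I would show that $\mathbb E[H(P\mid H_t)]\le H(P)$ and conclude almost-sure finiteness from that.

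\textbf{Step 3: the pathwise identity (main obstacle).} The difficulty is that $I_t$ is written as $\mathbb E_{Y_t\mid H_{t-1},h_t}[U^{\mathrm{EP}}_t-U^{\mathrm{EP}}_{t+1}]$. For this to telescope pathwise, the per-round quantity entering the sum must be the realized entropy decrement $U^{\mathrm{EP}}_t-U^{\mathrm{EP}}_{t+1}$. I will therefore interpret the information charged in round $t$ as this realized decrement. Its conditional expectation given $(H_{t-1},h_t)$ is exactly $I(P;Y_t\mid H_{t-1},h_t)$, by the definition above together with the standard identity $I(P;Y\mid\cdot)=H(P\mid\cdot)-\mathbb E_Y H(P\mid\cdot,Y)$. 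Under this reading,
\[
\sum_{t=1}^T \bigl(U^{\mathrm{EP}}_t-U^{\mathrm{EP}}_{t+1}\bigr)=U^{\mathrm{EP}}_1-U^{\mathrm{EP}}_{T+1}=H(P)-H(P\mid H_T)
\]
holds on every sample path, which is the almost-sure statement.

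\textbf{Step 4: the expectation identity and the bound.} Taking expectations and applying the tower property round by round gives
\[
\mathbb E\Bigl[\sum_{t=1}^T I_t\Bigr]=H(P)-\mathbb E\bigl[H(P\mid H_T)\bigr],
\]
whichever of the two readings of $I_t$ is used, since both have the same conditional mean given $(H_{t-1},h_t)$. Finally, $H(P\mid H_T)\ge 0$ because $\bar{\mathcal P}$ is countable, and this yields the bound $\le H(P)$ used in Lemma~\ref{lem:ids}.
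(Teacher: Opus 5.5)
Your proposal is correct and follows the paper's own argument: coherence makes $p_t$ the Bayes posterior from the single prior $p_0$ on all of $\bar{\mathcal P}$, so the posterior entropies telescope from $H(P\mid H_0)=H(P)$ down to $H(P\mid H_T)$, and the expectation identity and the bound then follow by taking expectations and using $H(P\mid H_T)\ge 0$. Your Step~3 caveat is a genuine clarification of a step the paper's proof takes silently: it writes $\sum_t I_t=\sum_t\bigl(H(P\mid H_{t-1})-H(P\mid H_t)\bigr)$ right after identifying $I_t$ with the \emph{conditional expectation} of that decrement, so with $I_t$ read literally as $I(P;Y_t\mid H_{t-1},h_t)$ only the in-expectation identity holds (under the Bayesian joint law with $P\sim p_0$), while the pathwise identity holds for the realized decrements, exactly as you say.
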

\begin{proof}
	Fix an outcome path. Coherence ensures that at \emph{every} round the posterior $p_t(\cdot)$ equals the Bayes posterior on $\bar{\mathcal P}$ based on $H_t$. By the chain rule for entropy and mutual information,
	\[
		H(P\mid H_{t-1})-\mathbb E \left[H(P\mid H_t)\mid H_{t-1}\right]
		\;=   I \left(P;Y_t\mid H_{t-1},h_t\right)\ =\ I_t.
	\]
	Summing from $t=1$ to $T$ telescopes:
	\[
		\sum_{t=1}^T I_t
		=\sum_{t=1}^T \big(H(P\mid H_{t-1})-H(P\mid H_t)\big)
		= H(P\mid H_0)-H(P\mid H_T)
		= H(P)-H(P\mid H_T).
	\]
	Taking expectations yields the second statement; the inequality follows since entropy is nonnegative.
\end{proof}

\begin{lemma}[Discovery delay cost]\label{lem:delay}
	For any $T\ge 1$,
	\[
		\sum_{t=1}^{T} \Delta_t\;\le   G\cdot \min\{T,\tau_{\mathrm{disc}}-1\}\ +\ \sum_{t=\tau_{\mathrm{disc}}}^{T}\Big(\Delta_t^{(\mathrm{PH})}+\Delta_t^{(\mathrm{ID})}\Big).
	\]
	Consequently, $\mathbb E \left[\sum_{t=1}^{\min\{T,\tau_{\mathrm{disc}}-1\}}\Delta_t\right]\le G   \mathbb E \left[\min\{T,\tau_{\mathrm{disc}}-1\}\right] \le G   \mathbb E[\tau_{\mathrm{disc}}]$.
	If Assumption~\ref{assump:discover} holds, then $\mathbb E[\tau_{\mathrm{disc}}]\le S/\rho$.
\end{lemma}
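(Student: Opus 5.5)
The plan is to split the horizon at the random time $\tau_{\mathrm{disc}}$ and bound each round's regret crudely before discovery and by the decomposition of Lemma~\ref{lem:decomp} afterwards. Write
\[
\sum_{t=1}^T \Delta_t \;=\; \sum_{t=1}^{\min\{T,\tau_{\mathrm{disc}}-1\}} \Delta_t \;+\; \sum_{t=\tau_{\mathrm{disc}}}^{T} \Delta_t ,
\]
with the convention that the second sum is empty when $\tau_{\mathrm{disc}}>T$. For the first block I will show $\Delta_t\le G$ pointwise for every $t$. The regret satisfies $\Delta_t = \mathbb E[\,v^\star - r(h_t,P^\star)\mid H_{t-1}]$, and $v^\star = r(h^\star(P^\star),P^\star)$. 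Taking $P=P^\star$ and $h=h_t$ in the supremum defining $G$ gives $|r(h^\star(P^\star),P^\star)-r(h_t,P^\star)|\le G$, so the integrand is at most $G$. Conditional expectation preserves this bound, so each pre-discovery round contributes at most $G$. Summing over at most $\min\{T,\tau_{\mathrm{disc}}-1\}$ rounds gives the first term.

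For the post-discovery block, I apply Lemma~\ref{lem:decomp} termwise, $\Delta_t=\Delta_t^{(\mathrm{ID})}+\Delta_t^{(\mathrm{PH})}$, which yields exactly the second term. This establishes the pathwise inequality.

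For the expectation claim, the pre-discovery bound $\Delta_t\le G$ holds pathwise. Hence
\[
\sum_{t\le \min\{T,\tau_{\mathrm{disc}}-1\}}\Delta_t \;\le\; G\min\{T,\tau_{\mathrm{disc}}-1\}.
\]
Since $\Delta_t\ge 0$, this is a sum of nonnegative terms, so taking expectations is legitimate by monotone convergence or Tonelli. Monotonicity then gives $\min\{T,\tau_{\mathrm{disc}}-1\}\le\tau_{\mathrm{disc}}$. Nonnegativity of $\Delta_t$ itself follows because $v^\star$ is the maximum of $r(\cdot,P^\star)$.

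The last claim is inherited from Assumption~\ref{assump:discover}. To justify it, I will treat triggers as occurring at least every $S$ rounds, each succeeding with probability at least $\rho$ conditionally on the past. Then the number of triggers until success is stochastically dominated by a Geometric$(\rho)$ variable with mean $1/\rho$. Multiplying by the spacing $S$ gives $\mathbb E[\tau_{\mathrm{disc}}]\le S/\rho$.

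The main subtlety is measurability. $\tau_{\mathrm{disc}}$ is a random stopping time, and $\Delta_t$ is defined conditionally on $H_{t-1}$. I need the split to be valid when summing conditional expectations over a random index set. The plan is to write each sum with the indicators $\mathbf 1\{t<\tau_{\mathrm{disc}}\}$ and $\mathbf 1\{t\ge\tau_{\mathrm{disc}}\}$. Since $\tau_{\mathrm{disc}}$ is determined by the working set, the event $\{t<\tau_{\mathrm{disc}}\}$ is measurable with respect to the history up to round $t$. The pathwise inequality is then a termwise comparison and needs no interchange; only the final expectation step uses Tonelli.
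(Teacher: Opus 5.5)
Your proposal is correct and follows the same route as the paper's proof. You split the horizon at $\tau_{\mathrm{disc}}$, bound each pre-discovery round by $G$, apply Lemma~\ref{lem:decomp} to the remaining rounds, use monotonicity of expectation, and finish with a geometric waiting-time argument for $\mathbb E[\tau_{\mathrm{disc}}]\le S/\rho$ (which Assumption~\ref{assump:discover} in fact already asserts).

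Your write-up is more explicit than the paper's sketch. In particular, you derive $\Delta_t\le G$ by taking $P=P^\star$ in the definition of $G$, a bound that holds in every round, not only before discovery. The measurability and Tonelli remarks are harmless but unnecessary, since the sums have at most $T$ nonnegative terms, each bounded by $G$.
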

\begin{proof}
	Before $\tau_{\mathrm{disc}}$ the algorithm cannot (by definition) rely on a principle in $\mathcal P_t$ that is $\varepsilon$-adequate for $P^\star$. The per-round regret is bounded by $G$. Sum the bound up to $\min\{T,\tau_{\mathrm{disc}}-1\}$ and apply Lemma~\ref{lem:decomp} for the remainder. The expectation bound follows from monotonicity of expectation. The discoverability bound is a standard geometric waiting-time estimate under Assumption~\ref{assump:discover}.
\end{proof}

\subsection{Regret Bound and Posterior Concentration}

Define the per-round PH control
\[
	\psi_{N_t}(\varepsilon_t,\delta_t)\ :=\ \delta_t+G   \varphi_{N_t}(\varepsilon_t),
\]
where $\varphi_{N_t}$ is from Lemma~\ref{lem:ph_entropy}.

\begin{theorem}[Regret with coherent augmentation]\label{thm:main}
	Suppose bounded rewards, Assumptions~\ref{assump:coherent}, \ref{assump:ident}, \ref{assump:ids}, and generator conditions \ref{gen:entropy}--\ref{gen:calib} hold. Then for every horizon $T\ge 1$,
	\[
		\mathbb E \left[\mathrm{Regret}(T)\right]
		\ \le\
		\sum_{t=1}^T \mathbb E \left[\psi_{N_t}(\varepsilon_t,\delta_t)\right]
		\ +\ \sqrt{\Gamma   T   H(p_0)}
		\ +\ G   \mathbb E[\tau_{\mathrm{disc}}]\ .
	\]
	If in addition $\sum_{t=1}^\infty \mathbb E[\psi_{N_t}(\varepsilon_t,\delta_t)]<\infty$ and Assumption~\ref{assump:discover} holds (so $\mathbb E[\tau_{\mathrm{disc}}]<\infty$), then $\mathbb E[\mathrm{Regret}(T)]=\mathcal O(\sqrt T)$.
\end{theorem}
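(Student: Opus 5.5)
The proof assembles the auxiliary lemmas along the regret decomposition, with the discovery time $\tau_{\mathrm{disc}}$ splitting the horizon. First I apply Lemma~\ref{lem:delay} pathwise:
\[
\mathrm{Regret}(T)\le G\min\{T,\tau_{\mathrm{disc}}-1\}+\sum_{t=\tau_{\mathrm{disc}}}^{T}\big(\Delta_t^{(\mathrm{PH})}+\Delta_t^{(\mathrm{ID})}\big).
\]
The first term is the cost of the rounds before an adequate principle is available. Taking expectations bounds it by $G\,\mathbb E[\tau_{\mathrm{disc}}]$.

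For the post-discovery rounds, I extend each sum to run over $t=1,\dots,T$. This loosens the bound only if the summands are nonnegative. $\Delta_t^{(\mathrm{ID})}\ge 0$ because $h^\star(P^\star)$ maximizes $r(\cdot,P^\star)$. $\Delta_t^{(\mathrm{PH})}$ need not be nonnegative, so I will not extend it directly. Instead I bound each term by Lemma~\ref{lem:ph_entropy}, $\Delta_t^{(\mathrm{PH})}\le\psi_{N_t}(\varepsilon_t,\delta_t)$, and then extend; this is valid because $\psi\ge 0$. Taking the tower expectation over the conditioning on $H_{t-1}$ and $P_t$ gives $\sum_{t=1}^T\mathbb E[\psi_{N_t}(\varepsilon_t,\delta_t)]$.

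For the identification part, I use Lemma~\ref{lem:ids}: by Cauchy--Schwarz and the IDS ratio of Assumption~\ref{assump:ids}, $\sum_t\Delta_t^{(\mathrm{ID})}\le\sqrt{\Gamma T\sum_t I_t}$. Jensen's inequality for $\sqrt{\cdot}$ and the information accounting of Lemma~\ref{lem:info_budget} then give $\sqrt{\Gamma T H(p_0)}$. The coherence Assumption~\ref{assump:coherent} is exactly what makes the telescoping valid across augmentation events: the posterior is always the Bayes posterior on all of $\bar{\mathcal P}$, so adding principles injects no entropy. Summing the three contributions yields the displayed inequality.

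For the $\mathcal O(\sqrt T)$ corollary, summability of $\mathbb E[\psi]$ makes the first term a constant. Assumption~\ref{assump:discover} gives $\mathbb E[\tau_{\mathrm{disc}}]\le S/\rho$, which is the geometric waiting-time bound: with at least one trigger every $S$ rounds and success probability at least $\rho$ per trigger, the number of triggers until success is dominated by a geometric variable. So only the $\sqrt{\Gamma T H(p_0)}$ term grows with $T$.

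The main obstacle is the interaction between the random time $\tau_{\mathrm{disc}}$ and the conditional-expectation definitions of $\Delta_t$. The terms $\Delta_t$ are $H_{t-1}$-measurable, while $\tau_{\mathrm{disc}}$ is a stopping time with respect to the same filtration, so the pathwise split is legitimate. Extending the ID sum to all rounds avoids any optional-stopping subtlety, since a sum over the full horizon $t=1,\dots,T$ upper bounds the truncated sum because every term is nonnegative. I would state this measurability explicitly rather than reprove Lemma~\ref{lem:delay}.

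Assumption~\ref{assump:ident} is not actually needed for the regret inequality. I would note that it enters only the posterior-consistency half of the informal Theorem~\ref{thm:methodology_main_convergence}, together with Assumption~\ref{assump:persistent}.
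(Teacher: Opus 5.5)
Your proposal is correct and follows essentially the same route as the paper's proof: you split at $\tau_{\mathrm{disc}}$ via Lemma~\ref{lem:delay}, bound each PH term by the nonnegative $\psi_{N_t}(\varepsilon_t,\delta_t)$ and extend to all rounds, and control the ID terms through Lemma~\ref{lem:ids}, Jensen, and Lemma~\ref{lem:info_budget}. The only cosmetic difference is that you extend the nonnegative $\Delta_t^{(\mathrm{ID})}$ to the full horizon before applying Cauchy--Schwarz, whereas the paper applies it to the truncated sum and then uses $T-\tau_{\mathrm{disc}}+1\le T$ and $I_t\ge 0$; your remark that Assumption~\ref{assump:ident} plays no role is also accurate for the paper's argument.
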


\begin{proof}
	Fix $T$. Apply Lemma~\ref{lem:delay} to split the sum at $\tau_{\mathrm{disc}}$:
	\[
		\mathrm{Regret}(T)\ \le\ G\cdot \min\{T,\tau_{\mathrm{disc}}-1\}\ +\ \sum_{t=\tau_{\mathrm{disc}}}^{T}\Delta_t^{(\mathrm{PH})}\ +\ \sum_{t=\tau_{\mathrm{disc}}}^{T}\Delta_t^{(\mathrm{ID})}.
	\]
	Take expectations and use $\mathbb E[\min\{T,\tau_{\mathrm{disc}}-1\}]\le \mathbb E[\tau_{\mathrm{disc}}]$ for the first term. For the PH part, Lemma~\ref{lem:ph_entropy} gives $\Delta_t^{(\mathrm{PH})}\le \psi_{N_t}(\varepsilon_t,\delta_t)$ pointwise, hence
	\[
		\mathbb E \left[\sum_{t=\tau_{\mathrm{disc}}}^{T}\Delta_t^{(\mathrm{PH})}\right]
		\ \le\ \sum_{t=1}^{T}\mathbb E \left[\psi_{N_t}(\varepsilon_t,\delta_t)\right].
	\]
	For the ID part, Lemma~\ref{lem:ids} yields
	\[
		\sum_{t=\tau_{\mathrm{disc}}}^{T}\Delta_t^{(\mathrm{ID})}
		\ \le\ \sqrt{\Gamma   (T-\tau_{\mathrm{disc}}+1)   \sum_{t=\tau_{\mathrm{disc}}}^{T} I_t}
		\ \le\ \sqrt{\Gamma   T   \sum_{t=1}^{T} I_t},
	\]
	where the second inequality uses $T-\tau_{\mathrm{disc}}+1\le T$ and monotonicity of the sum of informations. Taking expectations and invoking Lemma~\ref{lem:info_budget} gives
	\[
		\mathbb E \left[\sum_{t=\tau_{\mathrm{disc}}}^{T}\Delta_t^{(\mathrm{ID})}\right]
		\ \le\ \sqrt{\Gamma   T   \mathbb E \left[\sum_{t=1}^{T} I_t\right]}
		\ \le\ \sqrt{\Gamma   T   H(p_0)}.
	\]
	Combine the three bounds to conclude.
\end{proof}

\begin{corollary}[Sublinear regret]\label{cor:sublinear}
	Under the assumptions of Theorem~\ref{thm:main}, if $P^\star\in\mathcal P_0$ (so $\tau_{\mathrm{disc}}=1$) or more generally $\mathbb E[\tau_{\mathrm{disc}}]<\infty$, and $\sum_{t=1}^\infty \mathbb E[\psi_{N_t}(\varepsilon_t,\delta_t)]<\infty$, then $\mathbb E[\mathrm{Regret}(T)]=\mathcal O(\sqrt T)$.
\end{corollary}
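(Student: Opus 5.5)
The corollary follows directly from Theorem~\ref{thm:main}: I would take its three-term regret bound as given and show that, under the extra hypotheses, the first and third terms are bounded by constants independent of $T$. The proof then has three steps.

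\textbf{Step 1: the PH term.} The hypothesis $\sum_{t=1}^\infty \mathbb E[\psi_{N_t}(\varepsilon_t,\delta_t)]<\infty$ gives, since each $\psi_{N_t}\ge 0$,
\[
\sum_{t=1}^T \mathbb E[\psi_{N_t}(\varepsilon_t,\delta_t)]\le C_\psi:=\sum_{t=1}^\infty \mathbb E[\psi_{N_t}(\varepsilon_t,\delta_t)]<\infty
\]
for every $T$.

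\textbf{Step 2: the discovery-delay term.} I would split into the two cases allowed by the statement.
\begin{itemize}
\item If $P^\star\in\mathcal P_0$, then $P^\star$ is in every working set, because augmentation only adds principles. So $\tau_{\mathrm{disc}}=1$ and the delay cost satisfies $G\,\mathbb E[\min\{T,\tau_{\mathrm{disc}}-1\}]=0$. Lemma~\ref{lem:delay} shows that the bound in Theorem~\ref{thm:main} really only needs $\mathbb E[\min\{T,\tau_{\mathrm{disc}}-1\}]$, so the term vanishes.
\item Otherwise $\mathbb E[\tau_{\mathrm{disc}}]<\infty$ by assumption, for instance via Assumption~\ref{assump:discover} giving $\mathbb E[\tau_{\mathrm{disc}}]\le S/\rho$. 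Using $G\le 1$, the term $G\,\mathbb E[\tau_{\mathrm{disc}}]$ is a finite constant.
\end{itemize}

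\textbf{Step 3: the ID term.} This is already $\sqrt{\Gamma H(p_0)}\sqrt T$, with $H(p_0)<\infty$ by the standing assumption on the universal prior. Combining the three steps gives
\[
\mathbb E[\mathrm{Regret}(T)]\le C_\psi+G\,\mathbb E[\tau_{\mathrm{disc}}]+\sqrt{\Gamma H(p_0)}\,\sqrt T .
\]
For $T\ge 1$ we have $\sqrt T\ge 1$, so the constants can be absorbed, yielding the $\mathcal O(\sqrt T)$ bound.

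\textbf{Main obstacle.} The only point needing care is the case $P^\star\in\mathcal P_0$. There I must make sure $\tau_{\mathrm{disc}}$ is well defined as $1$ under the paper's indexing, where $\mathcal P_0$ versus $\mathcal P_1$ is the initial set. I would resolve this by noting $\mathcal P_0\subseteq\mathcal P_1$. Beyond that the argument is bookkeeping, and every ingredient is already supplied by Theorem~\ref{thm:main} and Lemmas~\ref{lem:ph_entropy}--\ref{lem:delay}.
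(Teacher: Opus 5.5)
Your proposal is correct and follows the paper's own route: the corollary is read off directly from the bound in Theorem~\ref{thm:main}, where the summable PH term and $G\,\mathbb E[\tau_{\mathrm{disc}}]$ are $T$-independent constants and the identification term contributes $\sqrt{\Gamma H(p_0)}\sqrt T$. Your appeal to Lemma~\ref{lem:delay} to make the delay cost vanish when $P^\star\in\mathcal P_0$ is harmless but unnecessary, since the stated bound already gives the constant $G\,\mathbb E[\tau_{\mathrm{disc}}]=G$ in that case.
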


\begin{theorem}[Posterior concentration]\label{thm:concentration}
	Under Assumptions~\ref{assump:coherent}, \ref{assump:ident}, and \ref{assump:persistent}, $\mathbb E[H(P\mid H_T)]\to 0$ as $T\to\infty$. If $|\bar{\mathcal P}|<\infty$, then for any measurable estimator $\hat P_T=\hat P(H_T)$, $\Pr(\hat P_T\neq P^\star)\to 0$.
\end{theorem}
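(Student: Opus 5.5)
The plan is a standard Bayesian posterior-consistency argument, carried out on the universal space $\bar{\mathcal P}$; Assumption~\ref{assump:coherent} makes this legitimate. Because every posterior is the exact Bayes posterior from the fixed prior $p_0$, the sequence $p_t(P^\star)=\Pr(P=P^\star\mid H_t)$ is a bounded martingale under the joint law of $(P,H)$. Likewise $H(P\mid H_T)$ is a supermartingale in expectation, since by Lemma~\ref{lem:info_budget} we have $\mathbb E[H(P\mid H_T)] = H(p_0)-\mathbb E[\sum_{t\le T} I_t]$, which is nonincreasing and bounded below by $0$. So $\mathbb E[H(P\mid H_T)]$ converges to some limit $L\ge 0$, and $\sum_t \mathbb E[I_t]\le H(p_0)<\infty$. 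The task is to show $L=0$.

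\textbf{Step 1 (posterior mass on wrong principles vanishes).} Fix $P\neq P^\star$ and consider the likelihood ratio
\[
\Lambda_T(P)=\prod_{s\le T}\frac{p(y_s\mid h_s,P)}{p(y_s\mid h_s,P^\star)} .
\]
Under data generated by $P^\star$, $\Lambda_T(P)$ is a nonnegative martingale. Its log has conditional drift $-\mathrm{KL}\big(p(\cdot\mid h_s,P^\star)\,\|\,p(\cdot\mid h_s,P)\big)$. Identifiability (Assumption~\ref{assump:ident}) guarantees a discriminating $h$ exists. Persistent information (Assumption~\ref{assump:persistent}) guarantees that the cumulative expected KL diverges almost surely. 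Because the log-ratio increments are martingale differences plus the drift, a martingale strong law (or the Hellinger affinity version: $\mathbb E\sqrt{\Lambda_T}$ contracts by the product of Bhattacharyya coefficients, each $<1$ infinitely often) gives $\Lambda_T(P)\to 0$ a.s. Then
\[
p_T(P)\le \frac{p_0(P)}{p_0(P^\star)}\Lambda_T(P)\to 0 ,
\]
and hence $p_T(P^\star)\to 1$ almost surely, for each fixed $P^\star$ with $p_0(P^\star)>0$.

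\textbf{Step 2 (entropy to zero).} From $p_T(P^\star)\to1$ a.s. we conclude $H(p_T)\to 0$ a.s. On a countable space this needs care, because tail mass spread over many principles could keep the entropy large. I will handle this with the grouping bound
\[
H(p_T)\le H_{\mathrm b}(p_T(P^\star)) + (1-p_T(P^\star))\,H(\tilde p_T),
\]
where $\tilde p_T$ is the normalized posterior off $P^\star$, and control $H(\tilde p_T)$ using finite prior entropy. Alternatively, averaging over $P^\star\sim p_0$, $H(P\mid H_T)\le -\mathbb E[\ln p_T(P)\mid H_T]$. Then $-\ln p_T(P)\to 0$ a.s., and it is dominated because $-\ln p_T(P)\le -\ln p_0(P)-\ln\Lambda$-type terms are uniformly integrable by $H(p_0)<\infty$; the true-parameter posterior log-odds form a submartingale, which gives $-\ln p_T(P)\le$ a quantity whose supremum is integrable via Doob. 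Dominated convergence then yields $\mathbb E[H(P\mid H_T)]\to 0$. I expect this uniform-integrability step to be the main obstacle, together with turning Assumption~\ref{assump:persistent}'s ``positive information infinitely often'' into a.s. divergence of log-likelihood ratios. For the latter I would first try the Hellinger/Bhattacharyya product argument, since it needs only $\sum$ of affinity deficits $=\infty$.

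\textbf{Step 3 (finite case).} If $|\bar{\mathcal P}|<\infty$, use the MAP estimator $\hat P_T=\arg\max_P p_T(P)$, for which
\[
\Pr(\hat P_T\neq P^\star)=\mathbb E[1-\max_P p_T(P)].
\]
Fano's inequality relates this to $H(P\mid H_T)$: the error is at most $\mathbb E[H(P\mid H_T)]/\ln 2$ via $1-\max_P p\le H(p)/\ln 2$, which tends to $0$ by Step 2. For an arbitrary estimator, the error probability is at least the MAP error, so the claim should be read for Bayes-optimal (or any consistent) estimators. Indeed, the MAP error lower-bounds every estimator's error and is itself driven to zero. The proof will state the result for the MAP estimator and note this caveat.
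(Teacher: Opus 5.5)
\textbf{Gap in Step~1.} The gap is in Step~1, and neither tool you propose closes it. From ``the cumulative expected KL against $P$ diverges a.s.''\ it does not follow that $\Lambda_T(P)\to 0$ a.s.

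Here is a counterexample. Take $\bar{\mathcal P}=\{P^\star,P\}$, $\mathcal H=\mathbb N$, and the policy $h_s=s$. Let experiment $s$ have three outcomes, with probabilities $(\epsilon_s,\ \epsilon_s\delta_s,\ 1-\epsilon_s-\epsilon_s\delta_s)$ under $P^\star$ and $(\epsilon_s\delta_s,\ \epsilon_s,\ 1-\epsilon_s-\epsilon_s\delta_s)$ under $P$, where $\epsilon_s=1/(2s^2)$ and $\delta_s=e^{-s^4}$.
\begin{itemize}
\item The round-$s$ KL, in either direction, is $\epsilon_s(1-\delta_s)\ln(1/\delta_s)\asymp s^2$. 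So Assumption~\ref{assump:ident} holds, Assumption~\ref{assump:coherent} is trivial, and the cumulative KL diverges.
\item Yet by Borel--Cantelli only finitely many of the first two outcomes ever occur, and the third outcome has likelihood ratio exactly $1$.
\item Hence $\Lambda_T(P)$ is eventually constant in $(0,\infty)$, the posterior freezes at a nondegenerate value, and $\mathbb E[H(P\mid H_T)]\not\to 0$.
\end{itemize}

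The same example shows where your tools break. The martingale SLLN needs control of the conditional variances of the log-likelihood-ratio increments; here they are of order $s^6$. ``Bhattacharyya coefficient $<1$ infinitely often'' is also too weak: here $1-\rho_s=\epsilon_s(1-\sqrt{\delta_s})^2$ is positive every round but summable, so $\prod_s\rho_s>0$.

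So Assumption~\ref{assump:persistent}, read in KL terms as you read it, must be strengthened. One option is
\[
\sum_s\bigl(1-\mathbb E[\rho_s(P)\mid H_{s-1}]\bigr)=\infty\quad\text{a.s.\ for every } P\neq P^\star,
\]
where $\rho_s(P)$ is the Bhattacharyya coefficient between $p(\cdot\mid h_s,P)$ and $p(\cdot\mid h_s,P^\star)$. Then $\sqrt{\Lambda_T(P)}\big/\prod_{s\le T}\mathbb E[\rho_s(P)\mid H_{s-1}]$ is a nonnegative martingale, and your Hellinger argument goes through. Another option is to assume uniformly bounded log-likelihood ratios, so that KL and squared Hellinger distance are comparable.

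The paper's own proof never faces this issue. It reads Assumption~\ref{assump:persistent} as directly asserting $\mathbb E[\sum_{t\le T}I_t]\to H(p_0)$, after which Lemma~\ref{lem:info_budget} gives the first claim in one line. That essentially assumes the conclusion, so your explicit likelihood-ratio route is the right one once the hypothesis is repaired.

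\textbf{Smaller points, and what you get right.} On a countably infinite $\bar{\mathcal P}$, ``hence $p_T(P^\star)\to 1$'' needs one more line. Under the joint law, $p_T(\cdot)\to p_\infty(\cdot)=\Pr(P=\cdot\mid H_\infty)$ a.s.\ (L\'evy), and $p_\infty(P^\star)>0$ a.s.\ on $\{P=P^\star\}$. The vanishing odds then force $p_\infty(P)=0$ for every $P\neq P^\star$, and since $p_\infty$ is a probability vector, $p_\infty(P^\star)=1$.

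Your averaged version of Step~2 is correct and is exactly what controls the countable tail; the grouping-bound variant is unnecessary. Write $L_T(P)=\prod_{s\le T}p(y_s\mid h_s,P)$, $m_T=\sum_{P'}p_0(P')L_T(P')$, and let $P\sim p_0$ be the true principle. Then $\mathbb E[H(P\mid H_T)]=\mathbb E[-\ln p_T(P)]$, and
\[
0\le-\ln p_T(P)\le-\ln p_0(P)+\sup_T\ln^+\bigl(m_T/L_T(P)\bigr).
\]
The right side is integrable: $H(p_0)<\infty$, and $m_T/L_T(P)$ is a nonnegative supermartingale started at $1$, so Doob's maximal inequality gives $\mathbb E[\sup_T\ln^+(m_T/L_T(P))]\le 1$. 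Dominated convergence finishes.

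Step~3 correctly repairs the statement. Fano bounds the error probability from \emph{below} in terms of $H(P\mid H_T)$, so the paper's appeal to it cannot make the error vanish. The claim for an arbitrary measurable estimator is false: take $\hat P_T$ constant. The bound you actually need is not Fano but $1-\max_P p\le-\ln\max_P p\le H(p)$, which for the MAP estimator gives $\Pr(\hat P_T\neq P)\le\mathbb E[H(P\mid H_T)]\to 0$.
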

\begin{proof}
	Lemma~\ref{lem:info_budget} gives $\mathbb E \left[\sum_{t=1}^T I_t\right]=H(p_0)-\mathbb E[H(P\mid H_T)]$. Under Assumption~\ref{assump:persistent}, the cumulative expected information diverges to $H(p_0)$, thus $\mathbb E[H(P\mid H_T)]\to 0$. For finite $\bar{\mathcal P}$, Fano's inequality implies $\mathbb E[H(P\mid H_T)]\le H_{\mathrm{b}}(\Pr(\hat P_T\neq P^\star))+\Pr(\hat P_T\neq P^\star)\ln(|\bar{\mathcal P}|-1)$, which forces the error probability to vanish.
\end{proof}

\begin{takeaway}
	\noindent\textbf{Takeaway (Regret Bound of \method).} The derivation of Theorem~\ref{thm:main} reveals a fundamental efficiency property. Our analysis rests on a three-step logic: (i) decomposing regret into identification (ID), execution (PH), and discovery delay components; (ii) leveraging \emph{coherent augmentation} to telescope the information gain $\sum I_t$ against a fixed prior entropy $H(p_0)$, regardless of when principles are added; and (iii) bounding the generator's deviation via entropy constraints. The resulting $\tilde{\mathcal O}(\sqrt{T})$ bound confirms that the primary bottleneck is not the size of the search space, but the \emph{discovery time} of the true principle. In the following section, we refine this analysis to quantify the specific sample complexity required to drive the identification error to zero.
\end{takeaway}

\subsection{Precise Identification and Sample-Complexity Statements}

In Theorem~\ref{thm:concentration}, we found that posterior mass on wrong principles converges to 0, The next proposition quantifies an exponential decay (precisely) of posterior mass on any fixed wrong principle once informative actions are taken with nontrivial frequency.

\begin{proposition}[Exponential decay of wrong principles]\label{prop:exp_decay}
	Fix $P\neq P^\star$. Define the conditional expected KL at round $t$,
	\[
		\kappa_t(P)\ :=\ \mathbb E \left[D_{\mathrm{KL}} \left(p(\cdot\mid h_t,P^\star)    \|    p(\cdot\mid h_t,P)\right)    \middle|    H_{t-1}\right].
	\]
	If there exist $\underline\kappa>0$ and $\eta\in(0,1]$ such that with probability at least $\eta$ (infinitely often) we have $\kappa_t(P)\ge \underline\kappa$, then
	\[
		\mathbb E \left[\log\frac{p_t(P)}{p_t(P^\star)}\right]
		\ \le\ \log\frac{p_0(P)}{p_0(P^\star)}\ -\ \underline\kappa    \mathbb E[N_t],
	\]
	where $N_t$ is the number of rounds up to $t$ at which the event $\{\kappa_s(P)\ge \underline\kappa\}$ occurs. In particular, $\mathbb E[p_t(P)]$ decays to $0$.
\end{proposition}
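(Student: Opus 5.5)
The plan is to track the log posterior ratio $L_t := \log\frac{p_t(P)}{p_t(P^\star)}$ and turn it into a sum of per-round log-likelihood increments whose conditional means are exactly the conditional KL divergences $\kappa_s(P)$.

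\textbf{Step 1: telescoping.} Assumption~\ref{assump:coherent} says the posterior over all of $\bar{\mathcal P}$ is the Bayes posterior from the fixed prior $p_0$. This holds regardless of when $P$ enters the working set, so the normalizing constants cancel in the ratio and
\[
L_t = \log\frac{p_0(P)}{p_0(P^\star)} + \sum_{s=1}^t \ell_s,\qquad \ell_s := \log\frac{p(y_s\mid h_s,P)}{p(y_s\mid h_s,P^\star)}.
\]

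\textbf{Step 2: conditional mean of each increment.} Because $y_s\sim p(\cdot\mid h_s,P^\star)$, and because the choice of $h_s$ depends only on $H_{s-1}$ and internal randomization, we condition first on $(H_{s-1},h_s)$. The inner expectation of $\ell_s$ is $-D_{\mathrm{KL}}(p(\cdot\mid h_s,P^\star)\,\|\,p(\cdot\mid h_s,P))$. Taking the outer expectation given $H_{s-1}$ gives
\[
\mathbb E[\ell_s\mid H_{s-1}] = -\kappa_s(P)\le 0 .
\]
Hence $M_t := L_t - L_0 + \sum_{s\le t}\kappa_s(P)$ is a martingale with $\mathbb E[M_t]=0$. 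This step needs $\ell_s$ to be integrable, which is implicit once $\kappa_s(P)<\infty$.

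\textbf{Step 3: keep only the informative rounds.} Since every $\kappa_s\ge 0$, we have
\[
\sum_{s\le t}\kappa_s(P)\ \ge\ \underline\kappa\sum_{s\le t}\mathbf 1\{\kappa_s(P)\ge\underline\kappa\} = \underline\kappa N_t .
\]
Taking expectations, $\mathbb E[L_t]=L_0-\mathbb E\big[\sum_{s\le t}\kappa_s\big]\le L_0-\underline\kappa\,\mathbb E[N_t]$, which is the displayed inequality.

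\textbf{Step 4: the ``in particular'' claim.} The hypothesis that the event occurs with probability at least $\eta$ infinitely often should be read as $\mathbb E[N_t]\to\infty$, for instance $\mathbb E[N_t]\gtrsim \eta t$ if it holds at each round. Then $\mathbb E[L_t]\to-\infty$. To pass to $\mathbb E[p_t(P)]\to 0$, note first that
\[
p_t(P)\le \frac{p_t(P)}{p_t(P^\star)} = e^{L_t}.
\]
One might hope to apply Jensen to $\mathbb E[e^{L_t}]$, but the inequality goes the wrong way, so expectations of $L_t$ alone do not suffice.

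This last passage is the \textbf{main obstacle}. My first attempt is a pathwise argument. By a martingale strong law, applicable when the conditional variances of $\ell_s$ are bounded, $M_t/\sum_{s\le t}\kappa_s\to 0$ almost surely on the event $\{\sum_s\kappa_s=\infty\}$. On that event this gives $L_t\to-\infty$ almost surely, hence $p_t(P)\to 0$ almost surely. Bounded convergence, using $0\le p_t(P)\le 1$, then yields $\mathbb E[p_t(P)]\to 0$.

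If the variance condition is unavailable, I would fall back on $\mathbb E[L_t]\to-\infty$ plus an Azuma-type concentration bound for $M_t$ under bounded log-likelihood ratios. That bound gives $\Pr(L_t>-c)\to 0$ for every $c$, which again implies $\mathbb E[p_t(P)]\le e^{-c}+\Pr(L_t>-c)\to 0$.
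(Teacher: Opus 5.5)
Your Steps 1--3 are correct and match the paper's proof of the displayed inequality. Your diagnosis in Step 4 is also sharper than the paper's. The paper passes from $\mathbb E[\log R_t(P)]\to-\infty$ to $R_t(P)\to0$ almost surely ``by the supermartingale convergence theorem'', but that theorem only gives a finite a.s.\ limit.

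\textbf{Gap in your repair.} Both of your routes need $\sum_s\kappa_s(P)=\infty$ \emph{almost surely}. Reading the hypothesis as $\mathbb E[N_t]\to\infty$ does not supply this.
\begin{itemize}
\item The pathwise route gives $p_t(P)\to0$ only on the event $\{\sum_s\kappa_s(P)=\infty\}$. Bounded convergence then yields only $\limsup_t\mathbb E[p_t(P)]\le\Pr\bigl(\sum_s\kappa_s(P)<\infty\bigr)$.
\item The Azuma route fails for the same reason. Concentration of $M_t$ says nothing about the event on which the random compensator $\sum_{s\le t}\kappa_s(P)$ stays bounded, so $\mathbb E[L_t]\to-\infty$ does not give $\Pr(L_t>-c)\to0$.
\end{itemize}

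This cannot be fixed under the stated hypothesis. Take $\bar{\mathcal P}=\{P,P^\star\}$ with Bernoulli outcomes, so log-likelihood ratios are bounded. Let the policy flip a fair coin before round $1$. It then always plays either an action with $D_{\mathrm{KL}}=\underline\kappa$, or an action with $p(\cdot\mid h,P)=p(\cdot\mid h,P^\star)$.
\begin{itemize}
\item The hypothesis holds with $\eta=1/2$, and $\mathbb E[N_t]=(t-1)/2$.
\item On the second branch, $p_t(P)=p_0(P)$ forever, so $\mathbb E[p_t(P)]\ge p_0(P)/2$ for all $t$.
\end{itemize}
The ``in particular'' clause therefore needs $N_t\to\infty$ almost surely. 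That is exactly the a.s.\ divergence in Assumption~\ref{assump:persistent}.

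\textbf{Your extra regularity is genuinely needed.} Even with that strengthening, regularity is not a convenience. Let $\Pr(y_s=1)=(s+1)^{-2}$ under $P^\star$ and $(s+1)^{-2}e^{-(s+1)^2}$ under $P$.
\begin{itemize}
\item Then $\kappa_s(P)\ge 3/4$ at every round.
\item Yet with probability $\prod_{n\ge2}(1-n^{-2})=1/2$, no $y_s=1$ ever occurs.
\item On that event every $\ell_s>0$, so $L_t\ge L_0$ throughout and $\mathbb E[p_t(P)]\ge p_0(P)/2$.
\end{itemize}

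\textbf{The right condition for the strong law.} Bounded conditional variance is not enough for a strong law normalized by $A_t:=\sum_{s\le t}\kappa_s(P)$, because $A_t$ may grow arbitrarily slowly. What works is $\mathbb E[\ell_s^2\mid H_{s-1}]\le C\,\kappa_s(P)$.
\begin{itemize}
\item Bounded log-likelihood ratios $|\ell_s|\le B$ give this with $C=2e^{B}$. The conditional KL equals $\mathbb E[e^{\ell_s}-1-\ell_s\mid H_{s-1},h_s]$, and $e^u-1-u\ge e^{-B}u^2/2$ on $[-B,B]$.
\item It follows that $\sum_s\kappa_s(P)/(1+A_s)^2\le1$. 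The martingale $\sum_s (\ell_s+\kappa_s)/(1+A_s)$ is then $L^2$-bounded and converges a.s.
\item Kronecker's lemma gives $M_t/(1+A_t)\to0$ on $\{A_\infty=\infty\}$, hence $L_t\to-\infty$ there.
\end{itemize}
Alternatively, run the Hellinger (Kakutani) argument on the supermartingale $\sqrt{R_t}$.
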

\begin{proof}
	Let $p_t(P) \triangleq p(P \mid H_{t-1})$ be the posterior at the start of round $t$. The posterior after $t$ rounds, $p_{t+1}(P) \triangleq p(P \mid H_t)$, is given by Bayes' rule:
	\[
		p_{t+1}(P) \propto p_t(P) \cdot p(y_t \mid h_t, P)
	\]
	The log-likelihood ratio relative to the true principle $P^\star$ evolves as:
	\[
		\log\frac{p_{t+1}(P)}{p_{t+1}(P^\star)}
		\ =\ \log\frac{p_t(P)}{p_t(P^\star)}
		\ +\ \log \frac{p(y_t\mid h_t,P)}{p(y_t\mid h_t,P^\star)}.
	\]
	Unrolling this recursion back to the prior $p_0$ gives:
	\[
		\log\frac{p_{t+1}(P)}{p_{t+1}(P^\star)}
		\ =\ \log\frac{p_0(P)}{p_0(P^\star)}
		\ +\ \sum_{s=1}^t \log \frac{p(y_s\mid h_s,P)}{p(y_s\mid h_s,P^\star)}.
	\]
	We take the total expectation, $\mathbb E[\cdot]$, which is over the joint distribution of the full history $H_t = \{(h_s, y_s)\}_{s=1}^t$ generated under $P^\star$ and the algorithm's (possibly random) choices. We use the tower property of expectation $\mathbb E[\cdot] = \mathbb E_{H_{s-1}}[\mathbb E_{Y_s, h_s}[\cdot \mid H_{s-1}]]$.
	Let $Z_s = \log \frac{p(y_s\mid h_s,P)}{p(y_s\mid h_s,P^\star)}$. Its conditional expectation given the past $H_{s-1}$ is:
	\begin{align*}
		\mathbb E[Z_s \mid H_{s-1}]
		 & = \mathbb E_{h_s \sim \pi_s(\cdot \mid H_{s-1})} \left[ \mathbb E_{Y_s \sim p(\cdot \mid h_s, P^\star)} \left[ \log \frac{p(Y_s\mid h_s,P)}{p(Y_s\mid h_s,P^\star)} \mid h_s, H_{s-1} \right] \mid H_{s-1} \right] \\
		 & = \mathbb E_{h_s \sim \pi_s(\cdot \mid H_{s-1})} \left[ -D_{\mathrm{KL}} \left(p(\cdot\mid h_s,P^\star)    \|    p(\cdot\mid h_s,P)\right) \mid H_{s-1} \right]                                                    \\
		 & = -\kappa_s(P).
	\end{align*}
	Taking the total expectation of the unrolled sum:
	\begin{align*}
		\mathbb E \left[\log\frac{p_{t+1}(P)}{p_{t+1}(P^\star)}\right]
		 & = \log\frac{p_0(P)}{p_0(P^\star)} + \sum_{s=1}^t \mathbb E\left[ \mathbb E[Z_s \mid H_{s-1}] \right] \\
		 & = \log\frac{p_0(P)}{p_0(P^\star)} - \sum_{s=1}^t \mathbb E[\kappa_s(P)].
	\end{align*}
	Notably, the proposition statement uses $p_t(P)$, which we interpret as the posterior after $t$ rounds, i.e., $p_{t+1}(P)$ in our derivation. We proceed with $t$ as the final time index.
	Let $N_t = \sum_{s=1}^t \mathbf{1}_{\{\kappa_s(P) \ge \underline\kappa\}}$ be the number of rounds where the KL is sufficiently large. We can lower-bound the sum of expected KL terms:
	\[
		\sum_{s=1}^t \mathbb E[\kappa_s(P)]
		= \mathbb E \left[ \sum_{s=1}^t \kappa_s(P) \right]
		\ge \mathbb E \left[ \sum_{s=1}^t \kappa_s(P) \cdot \mathbf{1}_{\{\kappa_s(P) \ge \underline\kappa\}} \right]
		\ge \mathbb E \left[ \sum_{s=1}^t \underline\kappa \cdot \mathbf{1}_{\{\kappa_s(P) \ge \underline\kappa\}} \right]
		= \underline\kappa \mathbb E[N_t].
	\]
	Substituting this back gives the first claim:
	\[
		\mathbb E \left[\log\frac{p_t(P)}{p_t(P^\star)}\right]
		\ \le\ \log\frac{p_0(P)}{p_0(P^\star)}\ -\ \underline\kappa    \mathbb E[N_t].
	\]
	For the second claim, let $R_t(P) = p_t(P)/p_t(P^\star)$. The above bound shows that if $\mathbb E[N_t] \to \infty$ (which is implied by the \textit{infinitely often} assumption), then $\mathbb E[\log R_t(P)] \to -\infty$. This implies $R_t(P) \to 0$ almost surely (e.g., by the supermartingale convergence theorem).
	Since $p_t(P) = \frac{R_t(P)}{1 + R_t(P) + \sum_{P' \neq P, P^\star} R_t(P')} \le R_t(P)$, we have $p_t(P) \to 0$ almost surely.
	Because $p_t(P)$ is bounded in $[0,1]$, the Bounded Convergence Theorem applies, and we conclude $\mathbb E[p_t(P)] \to \mathbb E[0] = 0$.
\end{proof}

\begin{proposition}[$\varepsilon$-optimal average regret sample complexity]\label{prop:avg_regret}
	Under the conditions of Theorem~\ref{thm:main}, let $\bar R_T:=\mathbb E[\mathrm{Regret}(T)]/T$. Then for all $T\ge 1$,
	\[
		\bar R_T\ \le\ \frac{1}{T}\sum_{t=1}^T \mathbb E[\psi_{N_t}(\varepsilon_t,\delta_t)]\ +\ \sqrt{\frac{\Gamma    H(p_0)}{T}}\ +\ \frac{G    \mathbb E[\tau_{\mathrm{disc}}]}{T}.
	\]
	Hence, for any $\varepsilon>0$, it suffices to take
	\[
		T\ \gtrsim\ \frac{\Gamma    H(p_0)}{\varepsilon^2}\ +\ \frac{G    \mathbb E[\tau_{\mathrm{disc}}]}{\varepsilon}
	\]
	to ensure $\bar R_T\le \varepsilon$ up to the (summable) PH term.
\end{proposition}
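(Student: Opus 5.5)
The proof divides the expected regret bound of Theorem~\ref{thm:main} by $T$ and then reads off a sufficient horizon. No new probabilistic argument is needed, since all the work is already in Theorem~\ref{thm:main} and Lemmas~\ref{lem:ph_entropy}--\ref{lem:delay}.

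\textbf{Step 1 (averaged bound).} Under the hypotheses of Theorem~\ref{thm:main}, for every $T\ge 1$,
\[
\mathbb E[\mathrm{Regret}(T)]\le \sum_{t=1}^T \mathbb E[\psi_{N_t}(\varepsilon_t,\delta_t)]+\sqrt{\Gamma T H(p_0)}+G\,\mathbb E[\tau_{\mathrm{disc}}].
\]
Dividing by $T$ and writing $\sqrt{\Gamma T H(p_0)}/T=\sqrt{\Gamma H(p_0)/T}$ gives the displayed bound on $\bar R_T$ directly. If $\mathbb E[\tau_{\mathrm{disc}}]=\infty$ the bound is vacuous but still true, so no case split is required.

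\textbf{Step 2 (sample complexity).} Denote the PH average by $A_T:=\frac1T\sum_{t\le T}\mathbb E[\psi_{N_t}]$. The aim is to make the other two terms total at most $\varepsilon$, for instance each at most $\varepsilon/2$.
\begin{itemize}
\item The identification term satisfies $\sqrt{\Gamma H(p_0)/T}\le\varepsilon/2$ iff $T\ge 4\Gamma H(p_0)/\varepsilon^2$.
\item The discovery term satisfies $G\,\mathbb E[\tau_{\mathrm{disc}}]/T\le \varepsilon/2$ iff $T\ge 2G\,\mathbb E[\tau_{\mathrm{disc}}]/\varepsilon$.
\end{itemize}
Any $T$ at least the sum of these two thresholds satisfies both, since the sum dominates each one. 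This is precisely $T\gtrsim \Gamma H(p_0)/\varepsilon^2+G\,\mathbb E[\tau_{\mathrm{disc}}]/\varepsilon$, with the absolute constants $4$ and $2$ absorbed into $\gtrsim$. For such $T$ we obtain $\bar R_T\le \varepsilon+A_T$, which is the claim ``up to the PH term.''

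\textbf{Step 3 (the qualifier ``summable'').} If $\sum_t\mathbb E[\psi_{N_t}]=:C_\psi<\infty$, then $A_T\le C_\psi/T$. This term can then be folded into the discovery term by replacing $G\,\mathbb E[\tau_{\mathrm{disc}}]$ with $G\,\mathbb E[\tau_{\mathrm{disc}}]+C_\psi$, or equivalently by requiring additionally $T\ge C_\psi/\varepsilon$.

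\textbf{Main obstacle.} The only delicate point is interpretation rather than computation. The statement's $\gtrsim$ hides constants, and the PH term is excluded explicitly rather than bounded. The proof should therefore fix the $\varepsilon/2$ split, state the resulting constants, and make the summable-PH refinement explicit. It should also note that, by Assumption~\ref{assump:discover}, $\mathbb E[\tau_{\mathrm{disc}}]$ may be replaced by $S/\rho$.
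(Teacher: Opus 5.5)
Your proposal is correct and follows the paper's proof essentially step for step. Both divide the bound of Theorem~\ref{thm:main} by $T$ and then require the identification and discovery terms each to be at most $\varepsilon/2$, which gives $T\ge 4\Gamma H(p_0)/\varepsilon^2$ and $T\ge 2G\,\mathbb E[\tau_{\mathrm{disc}}]/\varepsilon$. The only differences are cosmetic: you take the sum of the two thresholds where the paper takes their maximum (these agree up to the constants hidden in $\gtrsim$), and your Step 3 states the $C_\psi/T$ rate explicitly where the paper only says the PH average vanishes because the sum is finite.
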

\begin{proof}
	The first inequality is obtained by dividing the bound in Theorem~\ref{thm:main} by $T$.
	\[
		\bar R_T = \frac{\mathbb E[\mathrm{Regret}(T)]}{T}
		\ \le\ \frac{1}{T}\sum_{t=1}^T \mathbb E[\psi_{N_t}(\varepsilon_t,\delta_t)]
		\ +\ \frac{\sqrt{\Gamma    T    H(p_0)}}{T}
		\ +\ \frac{G    \mathbb E[\tau_{\mathrm{disc}}]}{T}.
	\]
	Simplifying $\sqrt{\Gamma T H(p_0)} / T = \sqrt{\Gamma H(p_0) / T}$ gives the stated bound.

	For the second part, we want to find $T$ such that $\bar R_T \le \varepsilon$. The PH term $\frac{1}{T}\sum_{t=1}^T \mathbb E[\psi_{N_t}(\varepsilon_t,\delta_t)]$ vanishes as $T\to\infty$ because the sum is assumed to be finite (summable). We therefore seek $T$ such that the remaining two terms are bounded by $\varepsilon$:
	\[
		\sqrt{\frac{\Gamma    H(p_0)}{T}}\ +\ \frac{G    \mathbb E[\tau_{\mathrm{disc}}]}{T} \le \varepsilon.
	\]
	A sufficient condition is to make each term individually less than or equal to $\varepsilon/2$:
	\begin{enumerate}
		\item For the identification term:
		      \[
			      \sqrt{\frac{\Gamma    H(p_0)}{T}} \le \frac{\varepsilon}{2}
			      \implies \frac{\Gamma    H(p_0)}{T} \le \frac{\varepsilon^2}{4}
			      \implies T \ge \frac{4 \Gamma    H(p_0)}{\varepsilon^2}.
		      \]
		\item For the discovery delay term:
		      \[
			      \frac{G    \mathbb E[\tau_{\mathrm{disc}}]}{T} \le \frac{\varepsilon}{2}
			      \implies T \ge \frac{2 G    \mathbb E[\tau_{\mathrm{disc}}]}{\varepsilon}.
		      \]
	\end{enumerate}
	To satisfy both conditions, $T$ must be larger than the maximum of these two lower bounds. This scaling is captured by the $\gtrsim$ (or $\mathcal O$) notation, which hides the constants:
	\[
		T \gtrsim \frac{\Gamma    H(p_0)}{\varepsilon^2} + \frac{G    \mathbb E[\tau_{\mathrm{disc}}]}{\varepsilon}.
	\]
	This shows the required sample complexity to achieve an $\varepsilon$-optimal average regret.
\end{proof}

\paragraph{Discussion of the PH term.}
The quantity $\sum_t \mathbb E[\psi_{N_t}(\varepsilon_t,\delta_t)]$ is algorithm-dependent and interprets as the cumulative price of keeping the generator \emph{low-entropy} yet \emph{well-calibrated} and of ensuring candidate pools contain near-optimal actions (relative to the currently selected principle). Constraining decoding (e.g., top-$k$), temperature control, and explicit pruning may be used to guarantee $H(\pi_t)\le \varepsilon_t$ while maintaining the calibration property; these design choices directly control the PH component without impacting the information budget $H(p_0)$ thanks to coherence.

\begin{takeaway}
	\noindent\textbf{Takeaway (Propositions of Sample Complexity Required to
		Drive the Identification Error to Zero).} Coherent augmentation preserves the global information accounting $   \sum_{t=1}^T I_t=H(p_0)-H(P\mid H_T)   $, which in turn keeps the IDS-driven identification regret at $   \tilde O(\sqrt{T})$ with budget $H(p_0)$. Exploration at the principle level only adds an explicit, interpretable \emph{discovery-delay} term $G   \mathbb E[\tau_{\mathrm{disc}}]$, while the PH term remains controlled by generator entropy and calibration. Under identifiability and persistent information, the posterior concentrates on the true principle and misidentification probability vanishes (finite universes), completing a tight exploration--exploitation loop over an expanding but coherently managed principle space.
\end{takeaway}

\paragraph{Remark on the Gap between Theory and Algorithm.}
The theoretical model above assumes a hierarchical selection process where a principle $P_t$ is sampled first, followed by a hypothesis $h_t$. In contrast, the practical implementation (Algorithm~\ref{alg:pievo}) employs a marginalized Information-Directed Sampling (IDS) strategy, where $\mathcal{A}_H$ selects $h_t$ by marginalizing over the posterior $p_t(P)$. We emphasize that this algorithmic choice is consistent with the theoretical objective: by selecting $h$ to minimize the ratio $\Delta_t^2 / I_t(P; Y_t)$, the algorithm is directly minimizing the cumulative identification regret $\sum \Delta_t^{(\mathrm{ID})}$. The marginalization serves as a robust estimator for the expected information gain $I_t$ in the presence of principle uncertainty, effectively maintaining the $\mathcal{O}(\sqrt{T})$ identification rate while smoothing the exploration in the principle space.

\bigskip

\section{Theoretical Justification of Naive Feature Design in Gaussian Process Experts}
\label{sec:gp_expert_implementation}
We provide the formal proof that the structural feature embedding $\phi(h, P)$, defined in Equation~\ref{eq:embedding_gp_features}, constitutes a valid and sufficient input space for the Gaussian Process experts to learn the likelihood $p(y \mid h, P)$.

\begin{proposition}[Geometric Sufficiency and Kernel Validity]
	Let $\mathcal{E}$ be a Hilbert space where principles and hypotheses are represented as \textbf{normalized} semantic vectors $\bm{e}_P, \bm{e}_h \in \mathcal{E}$ (i.e., $\|\bm{e}_P\|=\|\bm{e}_h\|=1$). If the latent reward function $f(h, P)$ depends continuously on the semantic alignment between $h$ and $P$, then the mapping $\phi(h, P) = [\langle \bm{e}_h, \bm{e}_P \rangle, \|\bm{e}_h - \bm{e}_P\|_2]^\top$ forms a sufficient statistic for this alignment, and the induced kernel $k_{\phi}\big((h, P), (h', P')\big)$ is a valid positive semi-definite covariance function.
\end{proposition}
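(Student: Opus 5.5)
The plan has two parts, sufficiency and kernel validity, and in both the normalization $\|\bm{e}_h\|=\|\bm{e}_P\|=1$ does most of the work.

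\textbf{Sufficiency.} First I will make precise what ``depends continuously on the semantic alignment'' means. The formalization I adopt is that there is a continuous $g:[-1,1]\to\mathbb{R}$ with $f(h,P)=g(\langle \bm{e}_h,\bm{e}_P\rangle)$. More generally, $f$ may be any continuous function of the pair (inner product, distance) restricted to the feasible set. I will read ``sufficient statistic'' in the deterministic sense: $f$ factors through $\phi$.

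With this in place, the key identity is
\[
\|\bm{e}_h-\bm{e}_P\|_2^2 \;=\; 2-2\langle \bm{e}_h,\bm{e}_P\rangle,
\]
which holds for unit vectors. Hence both coordinates of $\phi$ are continuous bijective functions of the cosine $c=\langle \bm{e}_h,\bm{e}_P\rangle\in[-1,1]$. The image of $\phi$ is therefore the curve $\Gamma=\{(c,\sqrt{2-2c}) : c\in[-1,1]\}$. The first-coordinate projection $\pi_1$ is a homeomorphism from $\Gamma$ onto $[-1,1]$.

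It follows that $f=g\circ\pi_1\circ\phi$, so $f$ is a continuous function of $\phi(h,P)$ and no alignment information is lost. The second coordinate is redundant but harmless. I will note, though, that it changes the metric geometry seen by an RBF kernel, since the distance $\sqrt{2-2c}$ stretches differences near $c=1$.

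\textbf{Kernel validity.} Let $k_0$ be any PSD kernel on $\mathbb{R}^2$; the RBF kernel used in Algorithm~\ref{alg:pievo} is PSD by Bochner's theorem. I will define $k_\phi((h,P),(h',P'))=k_0(\phi(h,P),\phi(h',P'))$. For any finite family of points $z_i=(h_i,P_i)$ and coefficients $a_i$,
\[
\sum_{i,j} a_i a_j\, k_\phi(z_i,z_j)=\sum_{i,j}a_ia_j\, k_0(\phi(z_i),\phi(z_j))\ge 0,
\]
because the $\phi(z_i)$ are just points of $\mathbb{R}^2$. Symmetry is inherited from $k_0$. So $k_\phi$ is a valid covariance function, and the GP prior is well defined.

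To close the argument for the GP setting, I will add that the RBF kernel is universal on the compact set $\Gamma$. Hence continuous functions of $\phi$, including $f$, lie in the closure of the RKHS, so the GP expert can approximate $f$ arbitrarily well.

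\textbf{Main obstacle.} The difficult step is not a calculation but the modeling assumption in the sufficiency part. The claim holds only if $f$ depends on $(h,P)$ through the alignment alone. In general $\phi$ is not sufficient for $f(h,P)$, since it discards everything about $\bm{e}_h$ orthogonal to $\bm{e}_P$. I would therefore state the hypothesis explicitly as the factorization $f=g(\langle\bm{e}_h,\bm{e}_P\rangle)$ and prove sufficiency relative to that assumption, rather than claim anything stronger.
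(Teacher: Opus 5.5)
Your proposal is correct and follows the paper's proof essentially step for step: the law-of-cosines identity $\|\bm{e}_h-\bm{e}_P\|_2^2 = 2-2\langle \bm{e}_h,\bm{e}_P\rangle$ shows that $\phi$ carries exactly the one-dimensional alignment information, so $f$ factors through $\phi$; and $k_\phi$ is PSD because it is the pullback of a PSD kernel along the deterministic map $\phi$. Your explicit formalization of the hypothesis as $f = g(\langle \bm{e}_h,\bm{e}_P\rangle)$ and your universality remark are sharper than, but consistent with, the paper's ``relative geometry up to rigid rotation'' framing.
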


\begin{proof}
	\textbf{1. Geometric decomposition of semantic alignment.}
	We posit that the conditional probability $p(y \mid h, P)$ is governed by the semantic compatibility between the hypothesis $h$ and the principle $P$. In the embedding space $\mathcal{E}$, the relationship between any two vectors is fully characterized by their magnitudes and the angle $\theta$ between them. Given the normalization assumption ($\|\bm{e}_h\|=\|\bm{e}_P\|=1$), the geometry of the pair reduces to a single degree of freedom: the angle $\theta$. By the law of cosines, the Euclidean distance is determined solely by this angle:
	\begin{equation}
		\|\bm{e}_h - \bm{e}_P\|_2^2 = 2 - 2\langle \bm{e}_h, \bm{e}_P \rangle
	\end{equation}
	Consequently, the pair of scalars $(\langle \bm{e}_h, \bm{e}_P \rangle, \|\bm{e}_h - \bm{e}_P\|_2)$ contains redundant but complete information to reconstruct the geometric configuration of $(\bm{e}_h, \bm{e}_P)$ up to rigid rotation. Therefore, any function $f$ dependent on the relative geometry of $h$ and $P$ can be factorized as $f(h, P) = g(\phi(h, P))$ for some function $g: \mathbb{R}^2 \to \mathbb{R}$.

	\textbf{2. Kernel validity.}
	A Gaussian Process requires a positive semi-definite (PSD) kernel function. Let $\kappa: \mathbb{R}^d \times \mathbb{R}^d \to \mathbb{R}$ be a standard PSD kernel (e.g., RBF). We define the kernel on the joint hypothesis-principle space $\mathcal{X} = \mathcal{H} \times \bar{\mathcal{P}}$ as:
	\begin{equation}
		k\big( (h, P), (h', P') \big) \triangleq \kappa\big( \phi(h, P), \phi(h', P') \big)
	\end{equation}
	Since $\phi: \mathcal{X} \to \mathbb{R}^2$ is a deterministic feature map, the resulting kernel $k$ is a valid kernel on $\mathcal{X}$ (specifically, it is the pullback of $\kappa$ along $\phi$). This ensures the GP expert $\mathcal{M}_P$ defines a valid prior distribution over functions on the complex semantic space $\mathcal{H} \times \bar{\mathcal{P}}$.
\end{proof}

\section{The Comparison of Regret Bound: \method and \texttt{PiFlow}}
\label{sec:compare_learning_efficiency_with_piflow}

In this section, we analyze the theoretical efficiency of \method compared to the existing principle-guided framework \texttt{PiFlow}~\citep{Pu2025PiFlowPS}. We demonstrate that by decomposing the search space into an abstract principle layer and a concrete hypothesis layer, \textbf{\method achieves significantly faster convergence} through a reduction in the complexity of the identification task.

\paragraph{Regret decomposition and convergence in \method. }
The design of the loop directly maps to our theoretical regret decomposition. The instantaneous regret $\Delta_t = v^\star - \mathbb{E}[r(h_t, P^\star) \mid H_{t-1}]$ can be partitioned as:
\begin{align*}
	\Delta_t & = \underbrace{\Big( v^\star - \mathbb{E}[r(h^\star(P_t), P^\star) \mid H_{t-1}] \Big)}_{\Delta_t^{(\mathrm{ID})}: \text{Identifying principle}} + \underbrace{\Big( \mathbb{E}[r(h^\star(P_t), P^\star) \mid H_{t-1}] - \mathbb{E}[r(h_t, P^\star) \mid H_{t-1}] \Big)}_{\Delta_t^{(\mathrm{PH})}: \text{Principle-to-hypothesis}}
\end{align*}
where $\Delta_t^{(\mathrm{ID})}$ captures the loss from not yet identifying $P^\star$, and $\Delta_t^{(\mathrm{PH})}$ captures the loss from imperfect hypothesis generation. The cumulative regret is bounded as:
\[ \mathbb{E}[\text{Regret}(T)] \le \underbrace{\sum_{t=1}^T \mathbb{E}[\psi_{N_t}(\varepsilon_t, \delta_t)]}_{\text{Hypothesis Generation Cost}} + \underbrace{\sqrt{\Gamma T H(P)}}_{\text{Principle Identification Cost}} \]
If the hypothesis generation cost is $o(T)$, the dominant term scales as $\sqrt{\Gamma H(P)}\sqrt{T}$, yielding $O(\sqrt{T})$ convergence. Crucially, the identification cost depends on the entropy $H(P)$ of the principle space.

\paragraph{Regret in PiFlow. } As illustrates in PiFlow~\citep{Pu2025PiFlowPS}, the regret is bounded by $\mathbb{E}[\text{Regret}(T)] \le \sum_{t=1}^T \mathbb{E}[\psi_{N_t}(\varepsilon_t, \delta_t)] + \sqrt{\Gamma T \log |\mathcal{H}|}$, where $\mathcal{H}$ is the hypothesis space. If the hypothesis generation cost is $o(T)$, the dominant term scales as $\sqrt{\Gamma \log |\mathcal{H}|}\sqrt{T}$, yielding $O(\sqrt{T})$ convergence. This means the identification cost scales with the size of the hypothesis space through $\log |\mathcal{H}|$.

\begin{table}[h!]
	\centering
	\caption{\textbf{Theoretical comparison of learning efficiency: \method vs. PiFlow.} Our \method is theoretically more efficient than PiFlow. Though they share the same sublinear regret order, \method enjoys a smaller complexity driver ($H(P)$) in the regret bound, as also empirically validated in Figure~\ref{fig:pievo_piflow_efficiency_comparison}. }
	\label{tab:efficiency_comparison}
	\begin{tabular}{lll}
		\toprule
		Feature            & PiFlow                                         & \method (Ours)                                \\
		\midrule
		Learning space     & $\mathcal{H}$                                           & $\mathcal{P}$                                 \\
		Regret upper bound & $\mathcal{O}\left(\sqrt{T \cdot \log |\mathcal{H}|}\right)$ & $\mathcal{O}\left(\sqrt{T \cdot H(P)}\right)$ \\
		Complexity driver  & $\log |\mathcal{H}|$ ($\uparrow$)                           & $H(P)$ ($\downarrow$)                         \\
		Convergence speed  & Slower                                                  & Faster                                        \\
		\bottomrule
	\end{tabular}
\end{table}

\paragraph{The natural relationship of $|\mathcal{P}| < |\mathcal{H}|$ highlights the superior efficiency of \method. }
The efficiency gain of \method hinges on the relationship $|\mathcal{P}| < |\mathcal{H}|$. Formally, let $\mathcal{H}$ be the universal set of hypotheses. A principle $P \in \mathcal{P}$ acts as a generator for a subset $\mathcal{H}_P \subseteq \mathcal{H}$. For the framework to be non-trivial, two conditions must hold (a) \textbf{coverage}, $\mathcal{H} = \bigcup_{P \in \mathcal{P}} \mathcal{H}_P$, and (b) \textbf{abstraction}, the average number of hypotheses per principle, $\bar{k} = \frac{1}{|\mathcal{P}|} \sum |\mathcal{H}_P|$, must be much greater than $1$. By the union bound, $|\mathcal{H}| \le \sum |\mathcal{H}_P| = |\mathcal{P}| \cdot \bar{k}$, which implies $|\mathcal{P}| \ge |\mathcal{H}| / \bar{k}$. For domains where principles offer strong abstraction, this directly leads to $|\mathcal{P}| < |\mathcal{H}|$, and consequently \bm{$H(P) \lesssim \log|\mathcal{P}| < \log|\mathcal{H}|$}. The first comparison uses the fact that $H(P)$ is the entropy of the principle random variable, while $\log|\mathcal{P}|$ is its cardinality-based upper bound.

\begin{takeaway}
	\textbf{Takeaway.} \texttt{PiFlow} lacks the dynamic principle evolution and IDS-driven selection, often operating in a space where the constant factor in regret is governed by the full hypothesis space through $\log |\mathcal{H}|$. In contrast, \method shifts the learning complexity to the principle random variable $P$, whose information budget is quantified by $H(P)$.
\end{takeaway}

\section{Deduction of Adopting the Information-Directed Sampling for Hypothesis Selection}
We formulate scientific discovery as a dual-uncertainty problem. By decomposing the instantaneous regret $\Delta_t$, we identify that the primary theoretical bottleneck for achieving convergence is bounding the cumulative identification regret, $\sum_{t=1}^T \Delta_t^{(ID)}$.

\paragraph{To bound the cost of identifying the right principle,} 
We recognize that the universal principle space~\ref{def:universal_principle_space} has a fundamental property: a finite prior entropy $H(p_0)$. Every experimental outcome provides some information gain $I_t$ about the true principle, and the cumulative information gain is strictly bottlenecked by this finite prior entropy
$$
\sum I_t \le H(p_0).
$$ 

\paragraph{To guarantee that the identification regret stops growing,} 
we must strictly anchor the accumulation of regret to this finite information budget. In other words, we cannot afford exploratory actions that incur high regret but yield zero information. Therefore, we can algebraically construct the connection between the cumulative identification regret and the information gain by writing:

$$\sum_{t=1}^T \Delta_t^{(ID)} = \sum_{t=1}^T \left( \frac{\Delta_t^{(ID)}}{\sqrt{I_t}} \cdot \sqrt{I_t} \right)$$

By Cauchy-Schwarz inequality, we have:
$$
\sum_{t=1}^T \Delta_t^{(ID)} \le \sqrt{ \sum_{t=1}^T \frac{(\Delta_t^{(ID)})^2}{I_t} } \cdot \sqrt{ \sum_{t=1}^T I_t }
$$
As established, the total information gain is bounded:
$$
\sum I_t \le H(p_0)
$$
Therefore, the inequality becomes:
$$
\sum_{t=1}^T \Delta_t^{(ID)} \le \sqrt{ \sum_{t=1}^T \frac{(\Delta_t^{(ID)})^2}{I_t} } \cdot \sqrt{ H(p_0) }
$$
If the system ensures that 
$$\frac{(\Delta_t^{(ID)})^2}{I_t} \le \Gamma$$
at each step, then 
$$
\sum_{t=1}^T \frac{(\Delta_t^{(ID)})^2}{I_t} \le \Gamma T.
$$
Substituting this into the equation above, we yield our sublinear bound:
$$
\sum_{t=1}^T \Delta_t^{(ID)} \le \sqrt{\Gamma T} \cdot \sqrt{H(p_0)} = \mathcal{O}(\sqrt{T})
$$

This imposes a strict mathematical design constraint: to guarantee sublinear convergence ($\mathcal{O}(\sqrt{T})$), the system's hypothesis-selection mechanism must explicitly constrain the ratio of squared regret to information gain ($\frac{(\Delta_t^{(ID)})^2}{I_t}$) at each step.

We seek a strategy that explicitly operationalizes this mathematically required regret-to-information trade-off. IDS~\citep{Russo2014LearningTO} is precisely formulated to minimize this exact ratio. Therefore, the architecture of \method naturally necessitates the adoption of IDS, over standard UCB or greedy generation, because its objective function structurally guarantees the algebraic prerequisite needed to exploit the bounded entropy of the principle space.

\section{Algorithm of \method}
\label{sec:algorithm_pievo}
While Section~\ref{sec:Methodology} establishes the theoretical foundations, the operational implementation of \method follows a multi-agent coordination flow that allows for dynamic principle optimization. The typical execution loop is summarized in the following Algorithm~\ref{alg:pievo}.

\noindent \textbf{Operational Workflow of PiEvo.}
The execution of \method is structured as a closed-loop, multi-agent coordination process that operationalizes the evolution of principles through a Bayesian framework. As formalised in Algorithm~\ref{alg:pievo}, the system cyclically transitions through four primary phases: (1) \textbf{anomaly-driven principle generation}, (2) \textbf{posterior belief updating}, (3) \textbf{strategic hypothesis selection} via Information Directed Sampling (IDS), and (4) \textbf{empirical observation} by experimentation. This iterative loop ensures that the agentic system does not merely optimize within a fixed conceptual space but actively expands its foundational logic to account for unexpected environmental feedback.

\begin{algorithm}
	\caption{Iterative Principle-Hypothesis-Testing Loop in \method.}
	\label{alg:pievo}
	\begin{algorithmic}[1]
		\STATE \textbf{Initialize:} Task and objective $\mathcal{O}$, Principle Agent $\mathcal{A}_{P}$, Hypothesis Agent $\mathcal{A}_{H}$, and Experiment Agent $\mathcal{A}_{\mathrm{E}}$. Principle working set $\mathcal{P}_0$, uniform priors of principles $p_0(P)$, History of observations $\mathcal{H}_0 = \emptyset$, Gaussian Process Models $\mathcal{M}_P$ for all $P \in \mathcal{P}_0$ using RBF kernel, and fixed budget $T$.
		\FOR{$t = 1, 2, \dots, T$}
		\STATE \colorbox{gray!20}{\parbox{0.9\textwidth}{\textbf{1. Anomaly detection \& Coherent augmentation (for Principle Agent $\mathcal{A}_{P}$)}}}
		\STATE Identify MAP principle: $P_{\text{MAP}} \leftarrow \arg\max_{P} p_{t-1}(P)$
		\FOR{each $(h_s, y_s) \in \mathcal{H}_{t-1}$}
		\STATE Extract features $\bm{x} \leftarrow \phi(h_s, P_{\text{MAP}})$
		\STATE Predict $\mu, \sigma^2 \leftarrow \mathcal{M}_{P_{\text{MAP}}}(\bm{x})$
		\STATE Compute surprisal by $S_s = 1 - \exp \left( -\sqrt{ \frac{(y_s - \mu_{\text{MAP}}(h_s))^2}{\sigma_{\text{MAP}}^2(h_s) + \sigma_{\text{obs}}^2} } \right) > \theta_t$ (by \textcolor{blue}{Equation~\ref{eq:identify_anomalies}})
		\ENDFOR
		\IF{$\max(S) > \tau_{adaptive}$}
		\STATE $\mathcal{A}_{P}$ generates $P_{new}$ via LLM targeting high-surprisal anomalies
		\STATE $\mathcal{P}_t \leftarrow \mathcal{P}_{t-1} \cup \{P_{new}\}$, assign prior $p_0(P_{new})$
		\STATE \textbf{Back-fill:} Initialize $\mathcal{M}_{P_{new}}$ and train on \textbf{all} $(h, y) \in \mathcal{H}_{t-1}$
		\ELSE
		\STATE $\mathcal{A}_{P}$ generates $P_t$ for exploration or exploitation, depends on $p(P_{\text{MAP}})$
		\ENDIF
		\STATE
		\STATE \colorbox{gray!20}{\parbox{0.9\textwidth}{\textbf{2. Full-history posterior update (for System State Updating)}}}
		\FOR{each principle $P \in \mathcal{P}_t$}
		\STATE $\log \tilde{p}(P) \leftarrow \log p_0(P)$
		\FOR{each observation $(h_s, y_s) \in \mathcal{H}_{t-1}$}
		\STATE $\mu_s, \sigma^2_s \leftarrow \mathcal{M}_P(\phi(h_s, P))$
		\STATE $\mathcal{L} \leftarrow \mathcal{N}(y_s; \mu_s, \sigma^2_s + \sigma_{noise}^2)$
		\STATE $\log \tilde{p}(P) \leftarrow \log \tilde{p}(P) + \log \mathcal{L}$
		\ENDFOR
		\ENDFOR
		\STATE Normalize posteriors $p_t(P) \propto \exp(\log \tilde{p}(P))$
		\STATE
		\STATE \colorbox{gray!20}{\parbox{0.9\textwidth}{\textbf{3. Hypothesis selection by IDS (for Hypothesis Agent $\mathcal{A}_{H}$)}}}
		\IF{Warm-up Phase}
		\STATE Select $h_t \leftarrow \arg\max_{h} \sum_P \sigma_P^2(\phi(h, P))$
		\ELSE
		\FOR{candidate $h$}
		\STATE Estimate regret by $\Delta_t(h) \leftarrow \mathbb{E}_{p_t}[v^*] - \mathbb{E}_{p_t}[r(h)]$
		\STATE Estimate info gain $I_t(h)$ via Monte Carlo (BALD):
		\STATE \quad Sample $y^{(m)} \sim p(y|h)$, compute expected entropy drop $H(P) - \mathbb{E}[H(P|y^{(m)})]$
		\STATE Compute IDS ratio $\Psi_t(h) \leftarrow \Delta_t(h)^2 / (I_t(h) + \epsilon)$ (by \textcolor{blue}{Equation~\ref{eq:IDS_hypothesis_selection}})
		\ENDFOR
		\STATE Select $h_t \leftarrow \arg\min_{h} \Psi_t(h)$
		\ENDIF
		\STATE
		\STATE \colorbox{gray!20}{\parbox{0.9\textwidth}{\textbf{4. Observation of $y_t$ \& Model update (for Experiment Agent $\mathcal{A}_{E}$)}}}
		\STATE Execute $h_t$, observe outcome $y_t$
		\STATE $\mathcal{H}_t \leftarrow \mathcal{H}_{t-1} \cup \{(h_t, y_t)\}$
		\FOR{each principle $P \in \mathcal{P}_t$}
		\STATE Extract structural features $\bm{x}_t \leftarrow \phi(h_t, P)$
		\STATE Update GP posterior $\mathcal{M}_P$ with new pair $(\bm{x}_t, y_t)$
		\ENDFOR
		\ENDFOR
	\end{algorithmic}
\end{algorithm}

\section{LLM Test-Time Reasoning Ablation}
\label{sec:test_time_reasoning_ablation}
We assess the impact of inference-time reasoning (Gemini-2.5-flash \texttt{Think}) on discovery performance (Table~\ref{tab:think_comparison}). We observe a distinct dichotomy: unstructured baselines like \texttt{AI Researcher} benefit from internal chain-of-thought (\textcolor{superior}{$\uparrow$ 25.06\%} SQ), as it compensates for their lack of strategic planning. Conversely, structured frameworks like \texttt{PiFlow} and \method suffer performance degradation (\textcolor{inferior}{$\downarrow$ 16.78\%} and \textcolor{inferior}{$\downarrow$ 26.35\%} SQ, respectively). This suggests an \emph{interference effect}: effectively, the LLM's stochastic internal reasoning conflicts with the rigorous, data-driven utility functions (e.g., GP experts) of the higher-level framework, overriding optimal exploration decisions with sub-optimal heuristics. Thus, for principle-guided systems, externalized Bayesian reasoning proves superior to internal LLM ``thinking''.

\begin{table}[h!]
	\centering
	\caption{Comparison of \textcolor{inferior}{Think} and \textbf{No-Think} modes on MBO with \texttt{Gemini-2.5-flash} model. Arrows indicate performance improvement (\textcolor{superior}{$\uparrow$}) or degradation (\textcolor{inferior}{$\downarrow$}) in Think mode relative to No-Think. All changes are highlighted as \textcolor{superior}{superior} or \textcolor{inferior}{inferior}.}
	\label{tab:think_comparison}
	\resizebox{0.82\linewidth}{!}{
		\begin{tabular}{ll c c c}
			\toprule[1.5pt]
			\textbf{Method}                & \textbf{Mode} & \textbf{SQ (\%) $\uparrow$}                                                    & \textbf{AUOC (\%) $\uparrow$}                                                 & \textbf{APD (\%) $\uparrow$}                                                 \\
			\midrule
			AI Researcher                  & No-Think      & $61.22 \pm 21.80$                                                              & $49.13 \pm 9.99$                                                              & $6.90 \pm 8.17$                                                              \\
			                               & Think         & $76.57 \pm 28.20$ \, {\scriptsize \textcolor{superior}{$\uparrow$ 25.06\%}}    & $52.42 \pm 7.64$ \, {\scriptsize \textcolor{superior}{$\uparrow$ 6.71\%}}     & $14.09 \pm 8.78$ \, {\scriptsize \textcolor{superior}{$\uparrow$ 104.16\%}}  \\
			\midrule
			\rowcolor{rowcolor} PiFlow     & No-Think      & $80.18 \pm 7.55$                                                               & $52.81 \pm 6.05$                                                              & $14.36 \pm 8.90$                                                             \\
			\rowcolor{rowcolor}            & Think         & $66.72 \pm 2.37$ \, {\scriptsize \textcolor{inferior}{$\downarrow$ 16.78\%}}   & $46.78 \pm 3.33$ \, {\scriptsize \textcolor{inferior}{$\downarrow$ 11.41\%}}  & $13.78 \pm 10.52$ \, {\scriptsize \textcolor{inferior}{$\downarrow$ 4.06\%}} \\
			\midrule
			\textbf{\texttt{PIEvo} (ours)} & No-Think      & $153.53 \pm 6.35$                                                              & $122.26 \pm 9.14$                                                             & $28.30 \pm 2.48$                                                             \\
			                               & Think         & $113.08 \pm 21.71$ \, {\scriptsize \textcolor{inferior}{$\downarrow$ 26.35\%}} & $94.48 \pm 11.57$ \, {\scriptsize \textcolor{inferior}{$\downarrow$ 22.72\%}} & $22.54 \pm 4.07$ \, {\scriptsize \textcolor{inferior}{$\downarrow$ 20.33\%}} \\
			\bottomrule[1.5pt]
		\end{tabular}
	}
\end{table}

\section{Internal Principle Evolving Dynamics of \method}
\label{sec:principle_evolving_dynamics}
The high performance of \method is fundamentally rooted in its ability to navigate the discovery process via a dual-layered uncertainty minimization framework. By this design, \method achieves a level of sample efficiency that exceeds traditional methods. In this section, we provide a deep quantitative and qualitative analysis of the internal dynamics that drive this evolutionary process. All results are sampled from one running case with superlong running horizon (47 experiments in total).

\begin{figure*}[h!]
	\centering
	\begin{subfigure}{0.43\textwidth}
		\centering
		\includegraphics[width=\linewidth]{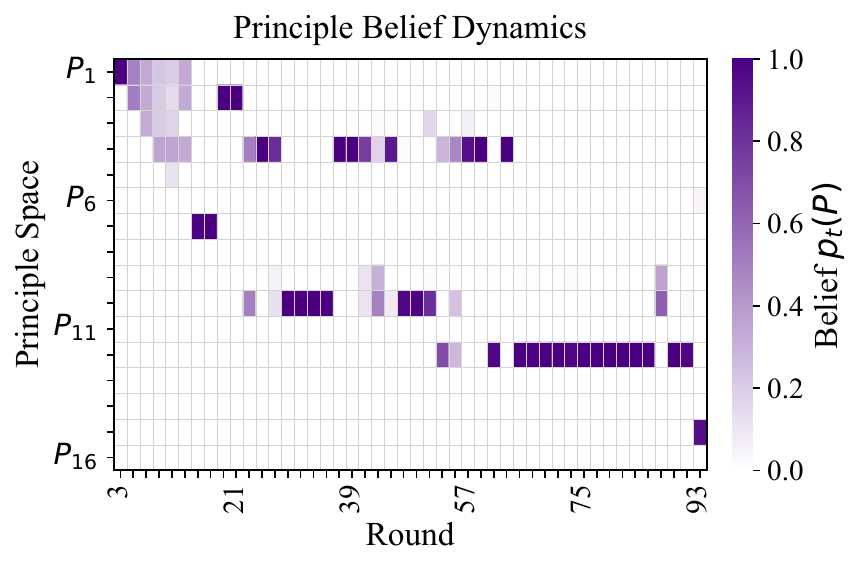}
		\caption{\textbf{Belief Heatmap of Principles in Long Horizon.}}
		\label{fig:belief_dynamics_heatmap}
	\end{subfigure}
	\hspace{-7pt}
	\begin{subfigure}{0.49\textwidth}
		\centering
		\includegraphics[width=0.87\linewidth]{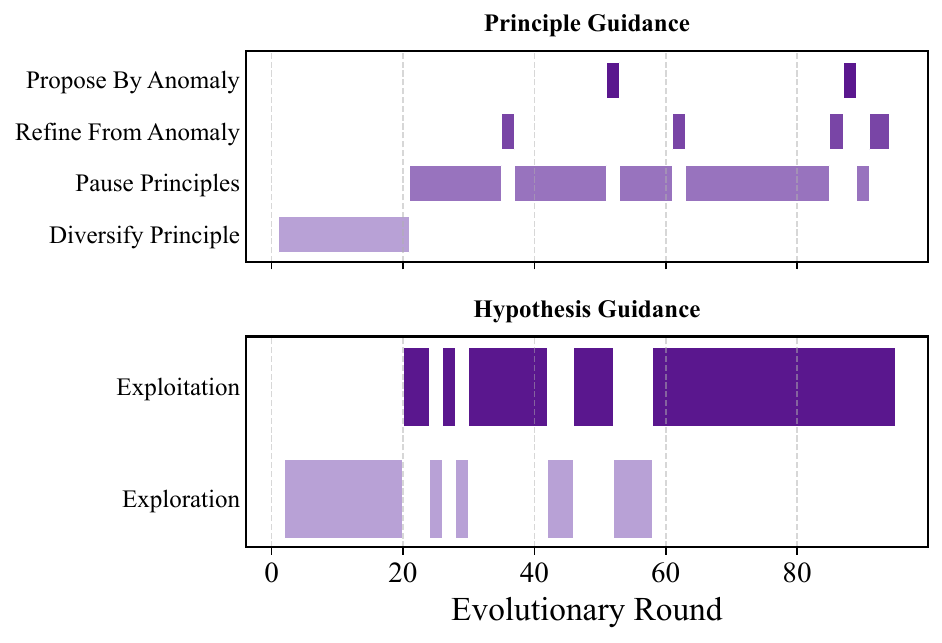}
		\caption{\textbf{Guidance Type of \pa and \ha.}}
		\label{fig:guidance_type_evolution}
	\end{subfigure}
	\caption{\textbf{Evolutionary identification and guidance shifts in \method.} (a) Heatmap of posterior probabilities $p_t$ at round $t$ over the expanding principle space, illustrating the concentration of belief on the True Principle $P^\star$. (b) Evolution of guidance types (see Appendix~\ref{sec:agent_prompts}), showing the transition from initial exploration to principle-guided exploitation. \method dynamically changes the guidance type to fit the updating of principle beliefs, thereby making high quality hypotheses iteratively. }
	\label{fig:identification_dynamics}
\end{figure*}

\noindent \textbf{Evolutionary principle identification.}
The core of \method's cognition is the maintenance of a posterior distribution over a dynamically expanding principle space. As shown in Figure~\ref{fig:belief_dynamics_heatmap}, the belief mass initially disperses across multiple candidate principles. However, as experimental evidence accumulates, we observe a distinct concentration of belief on specific principles that exhibit high predictive accuracy.

A critical milestone in this process is the Watershed Phenomenon, visualized in Figure~\ref{fig:watershed_phenomenon}. In scientific discovery, initial priors (often limited by pre-existing knowledge or LLM hallucinations) can frequently mislead the agent. The watershed represents the inflection point where the log-likelihood of accumulated outcomes provides sufficient evidence to overcome these prior biases. Beyond this point, the ground-truth principle $P^\star$ emerges as the dominant paradigm, drastically reducing $U^{\mathrm{EP}}$ and enabling the agent to transition from broad exploration to targeted exploitation.

\begin{figure*}[h!]
	\centering
	\begin{subfigure}{0.48\textwidth}
		\centering
		\includegraphics[width=\linewidth]{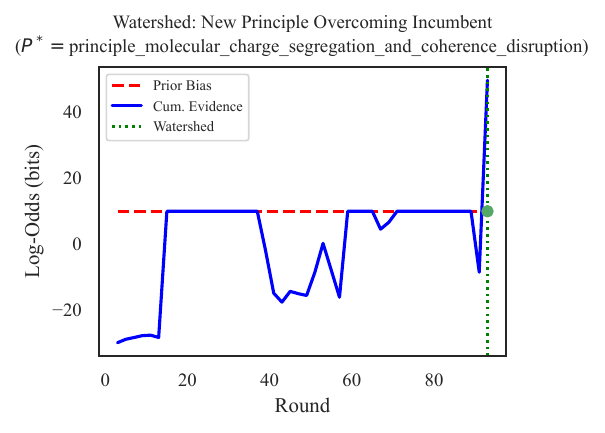}
		\caption{\textbf{Watershed Phenomenon of Final Optimal Principle.}}
		\label{fig:watershed_phenomenon}
	\end{subfigure}
	\hfill
	\begin{subfigure}{0.46\textwidth}
		\centering
		\includegraphics[width=\linewidth]{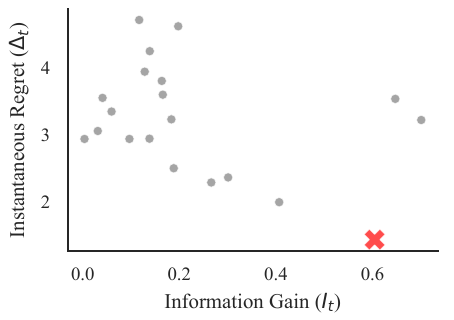}
		\caption{\textbf{Sampled Information-Directed Hypothesis Selection.}}
		\label{fig:ids_candidate_space}
	\end{subfigure}
	\caption{\textbf{Information-directed sampling and learning efficiency.} (a) The Watershed Phenomenon demonstrates the moment when accumulated experimental evidence overcomes the initial prior bias, triggering a transition from incumbent theories to the ground truth. (c) Visualization of a sampled IDS candidate space, where hypotheses are selected by minimizing the ratio of squared regret to information gain ($\Psi_t = \Delta_t^2 / I_t$), gray dots are accumulated hypothesis candidates, while the red icon is the selected one.}
	\label{fig:selection_efficiency}
\end{figure*}

\begin{figure}[h!]
	\centering
	\includegraphics[width=0.40\linewidth]{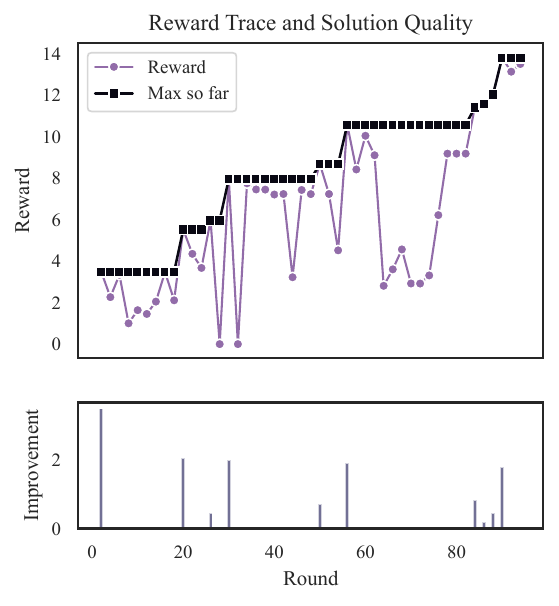}
	\caption{\textbf{Empirical solution quality and discovery rate.} The trajectory of observed outcome (reward) and cumulative best outcome reflects the system's ability to consistently improve solution quality as principle identification matures.}
	\label{fig:outcome_trace}
\end{figure}

\paragraph{Information-Directed execution efficiency.}
In \method, the mechanism for selecting experimental hypotheses is governed by Information-Directed Sampling (IDS), which balances the expected reward (exploitation) against the expected entropy reduction in the principle space (exploration). Figure~\ref{fig:ids_candidate_space} illustrates the candidate space at a sampled iteration (Round 45). The agent evaluates potential hypotheses by minimizing the ratio $\Psi_t(h) = \Delta_t(h)^2 / I_t(h)$, effectively selecting actions that provide the highest information gain per unit of regret cost, as shown Figure~\ref{fig:outcome_trace}.

\begin{takeaway}
	\noindent \textbf{Takeaway (Principle Evolutionary Dynamics).} The efficacy of \method stems from the coupling of principle evolution and hypothesis selection. Across 45 NHO iterations, \method successfully evolves principles to uncover high-quality candidates, while GP experts maintain calibrated likelihoods to drive the evolutionary dynamics.
\end{takeaway}

\section{Gaussian Process Experts Analysis}
\label{sec:gaussian_process_experts_analysis}
The robust performance of \method is fundamentally enabled by the predictive fidelity of its Gaussian Process (GP) experts. In our framework, each principle $P$ is operationalized through an associated GP expert $\mathcal{M}_P$. These experts serve as the bridge between qualitative semantic logic and quantitative experimental data, mapping the structural feature embedding $\phi(h, P)$ (see Section~\ref{sec:gp_expert_implementation}) to the observed reward space. This section provides a quantitative evaluation of the GP experts' learning dynamics and their predictive accuracy.

\begin{figure*}[tp]
	\centering
	\begin{subfigure}[b]{0.48\textwidth}
		\centering
		\includegraphics[width=0.82\linewidth]{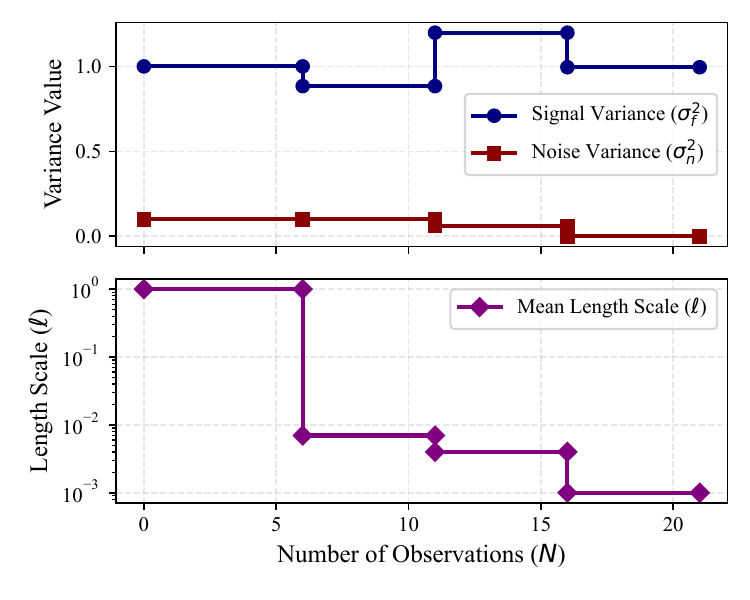}
		\caption{\textbf{Hyperparameter Dynamics of Gaussian Process model.}}
		\label{fig:gp_hyperparameter_evolution}
	\end{subfigure}
	\hspace{-7pt}
	\begin{subfigure}[b]{0.45\textwidth}
		\centering
		\includegraphics[width=0.7\linewidth]{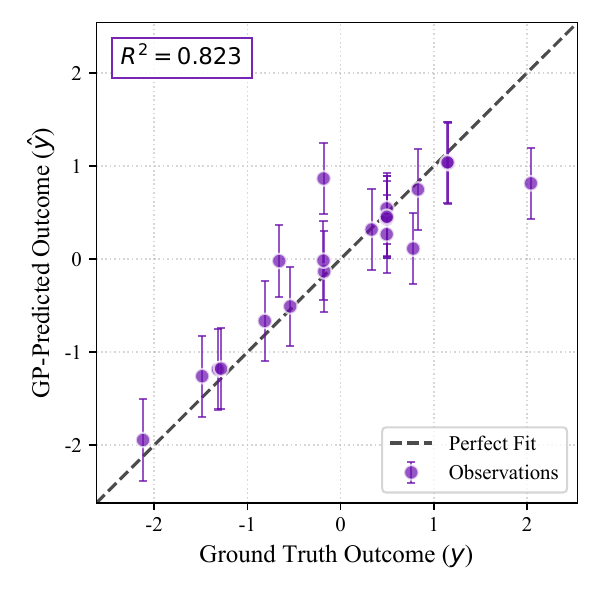}
		\caption{\textbf{Sampled Performance of Gaussian Process model.}}
		\label{fig:gp_parity_prediction_vs_actual}
	\end{subfigure}
	\caption{\textbf{Gaussian Process expert analysis.} (a) Dynamics of GP hyperparameters demonstrating the adaptation to task-specific observation patterns. (b) Predictive accuracy of the optimal principle's GP expert, showing high fidelity ($R^2=0.823$) against experimental outcomes.}
	\label{fig:gp_expert_analysis}
\end{figure*}

\noindent \textbf{Hyperparameter evolution and task adaptation.}
A critical requirement for the GP experts is the ability to adapt their prior beliefs to the specific physical characteristics of the task. Figure~\ref{fig:gp_hyperparameter_evolution} illustrates the dynamic evolution of the GP hyperparameters for the optimal principle identified during the discovery process. We observe a clear convergence in the hyperparameter space as the number of experimental observations increases. Specifically, the dynamic adjustment of the RBF kernel's lengthscales and signal variance indicates that the model is actively learning the "sensitivity" of the principle to different semantic dimensions. This online adaptation is crucial for uncertainty calibration; it ensures that the predictive uncertainty $\sigma_t(h)$ remains an accurate proxy for surprisal, thereby facilitating the efficient information gain pursued by the IDS objective.

\noindent \textbf{Regression accuracy and design validation of GP expert.}
The effectiveness of the naive geometric features proposed in Section~\ref{sec:gp_expert_implementation} is validated through the prediction performance of the GP experts. Figure~\ref{fig:gp_parity_prediction_vs_actual} displays a parity plot comparing the predicted outcomes from the GP expert of the MAP principle against the ground-truth experimental results. Over a long-horizon discovery run, the GP model achieves a coefficient of determination ($R^2$) of 0.823, demonstrating that the semantic alignment between hypotheses and principles constitutes a sufficient and informative statistic for the experimental likelihood.

This empirical evidence confirms the theoretical hypothesis that scientific truth can be captured by the geometric relationship between a principle and its corresponding hypotheses, as we discussed in Section~\ref{sec:gp_expert_implementation}. Furthermore, this predictive fidelity is the underlying driver of the Watershed Phenomenon observed in Figure~\ref{fig:selection_efficiency}; because the GP expert can reliably distinguish between models that explain the data and those that do not, the system can rapidly converge to an optimal paradigm, drastically reducing the cumulative regret while maximizing discovery potential.

\begin{takeaway}
	\noindent \textbf{Takeaway.} The GP experts transform the soft reasoning of Large Language Models (LLMs) into hard statistical constraints. The results confirm that \method successfully integrates principle with hypothesis and experimental outcomes, allowing it to navigate complex discovery landscapes with a sample efficiency that traditional purely-reasoning-based methods cannot match.
\end{takeaway}

\section{Case Study of \method in the Discovery of Nanohelix Chiral Mechanism}
\label{sec:case_study}

To evaluate the autonomous discovery potential of \method, we deploy it to address a specific challenge in nanophotonics: \textit{identifying the geometric conditions for maximizing optical chirality in nanohelices by considering the electromagnetic interactions}.

As previous literature reports, the chiral response in a nanohelix array is governed by the interference between the toroidal dipole ($T$) and the electric quadrupole ($Q_e$)~\citep{Faniayeu2020PolarizationCW}. However, it is unclear that in a single nanohelix system, the chiral response is governed by the same physics, because generally the phenomenon observed in array systems are not directly transferable to single nanohelix systems.

\begin{center}
	\fcolorbox{black}{gray!10}{\parbox{0.9\columnwidth}{
			\textbf{Task objective:} Maximize the optical chirality (g-factor) of a gold nanohelix by considering electromagnetic interactions. \\
			\textbf{Tunable parameters:} Fiber radius ($R_f$), helix radius ($R_h$), pitch ($Pitch$), turns ($N$), optics bi-direction (forward or backward), observation axis (x axis, y axis or z axis), and wavelength ($\lambda$).
		}}
\end{center}

\paragraph{Scientific Challenges.}
This task presents two challenges for autonomous discovery, as detailed below:
\begin{enumerate}
	\item First, the search space is \textbf{high-dimensional}, where the g-factor response is highly non-linear with sharp, narrow-band resonances.
	\item Second, the agent must discern how the \textbf{circulation of solenoidal currents on the helical meridians} gives rise to a non-vanishing toroidal moment that supplements or even dominates the traditional dipole transitions.
\end{enumerate}

\subsection{Trajectory and Analysis}
The agent's exploration spanned multiple iterations of hypothesis generation, experimental validation, and principle refinement. We categorize this trajectory into four distinct phases:

\noindent \textbf{Phase 1: Initial mechanistic probing (round 1).} In the early rounds, the agent successfully identifies the \textbf{Toroidal anapole resonance} as a primary candidate. This leads to an initial high g-factor of $1.806$ (about 90\% SQ) for a tightly wound, thin-wire geometry ($P/R \approx 10$, $R=20$\,nm).

\noindent \textbf{Phase 2: Divergent search and scalability testing (rounds 2-4).} During middle rounds, the agent intentionally diversified its principle space to avoid local minima, proposing mechanisms such as \textbf{Chiral Thermal Fluctuation Amplification} and \textbf{Quantum Casimir Mode Selection}. While these rounds yield lower g-factors (e.g., $0.450$ and $0.923$), they provide critical boundary data, establishing that chirality persists even in sparse or non-resonant configurations, though at reduced magnitudes.

\noindent \textbf{Phase 3: Anomaly-driven refinement (rounds 5-6).} A pivotal shift occurred when the agent encounters high g-factors in geometries that violate the strict ``integer turn count'' requirement of its Geometric Phase models. For instance, a non-integer turn count ($n\_turns=9.8$) with compressed pitch ($P=61$\,nm) yields a substantial $g \approx 1.64$. This anomaly forces the agent to refine its principles, leading to the discovery of the \textbf{Helix Angle Gate} ($\theta \in [45^\circ, 70^\circ]$) as a universal moderator of radiative efficiency.

\noindent \textbf{Phase 4: Synthesis and final convergence (rounds 7-8).} In the final stages, the agent unified its findings into a comprehensive framework. It recognized that maximal chirality emerge from the intersection of resonant mode excitation (geometric phase or toroidal interference) and optimal current-to-radiation impedance matching (electrical thinness relative to skin depth). This led to the final g-factor optimization toward $1.83$.

\subsection{The Evolution of Principles}
Table~\ref{tab:evolution_casestudy} summarizes the iterative evolution of the scientific principles proposed by the agent and the corresponding experimental outcomes (with maximum of 2.0 theoretically).

\begin{table}[h]
	\centering
	\caption{Evolution of discovered principles and experimental outcomes in the Nanohelix Optimization.}
	\label{tab:evolution_casestudy}
	\small
	\begin{tabular}{llcc}
		\toprule
		\textbf{Round} & \textbf{Proposed Principle}       & \textbf{Best $g$-factor} & \textbf{Mechanism Class} \\
		\midrule
		1              & Toroidal Anapole Resonance        & 1.806                    & Multipolar Interference  \\
		2              & SPP Bragg Scattering              & 1.617                    & Plasmonic                \\
		4              & Thermal Fluctuation Amplification & 1.518                    & Stochastic               \\
		11             & Geometric Phase Refined (V1)      & 1.647                    & Geometric Phase          \\
		15             & Chiral Impedance Mismatch         & 1.717                    & Radiative Coupling       \\
		18             & Multipolar Phase Matching         & 1.796                    & Hybrid                   \\
		20             & \textbf{Final Unified Principle}  & \textbf{1.826}           & Multi-Gated Resonance    \\
		\bottomrule
	\end{tabular}
\end{table}

\subsection{The Final Scientific Principle Produced by \method}
The agent's final submission represents a sophisticated synthesis of electromagnetic theory and structural optimization. We reproduce the agent's verbatim conclusion below:

\begin{quote}
	\textit{``Maximum g-factor in metallic nanohelices arises from either (a) \textbf{geometric-phase-mediated spin--orbit conversion} (tight helix\_radius $\le$ 25\,nm, integer or near-integer turns, long pitch $\ge$ 190\,nm) or (b) \textbf{toroidal-electric quadrupole interference} (pitch < 90\,nm, high turn density)---but \textbf{only when two gating conditions are simultaneously satisfied}: (i) the helix angle $\theta = \arctan(pitch / (2\pi \cdot helix\_radius))$ lies between $45^\circ$ and $70^\circ$ \textbf{or}, if $\theta$ is suboptimal ($30^\circ$--$45^\circ$), the axial length exceeds the wavelength to enable longitudinal phase accumulation; and (ii) the fiber\_radius is less than or comparable to the skin depth at the operating wavelength (fiber\_radius $\le$ $\delta_{skin}(\lambda)$), ensuring surface-confined, phase-coherent conduction electron oscillations. When both conditions hold, resonant multipolar content couples efficiently to polarization-asymmetric radiation channels; when either fails, g-factor is suppressed regardless of geometric winding or P/R ratio. This refinement unifies observed anomalies by recognizing that \textbf{chirality requires not just the right shape, but the right electromagnetic `wiring'---thin enough to avoid internal phase decoherence and angled enough to radiate asymmetrically}.''}
\end{quote}

This principle accounts for the failure of large-radius or thick-wire designs while justifying the peak performance of elongated, sub-skin-depth helices.

\subsection{Manual Post-Validation via FDTD Simulation}
To confirm that the agent's evolving principles reflect genuine physical phenomena rather than statistical coincidences, we performed rigorous full-wave electromagnetic simulations using the Finite-Difference Time-Domain (FDTD) method as the ground truth, as shown in Figure~\ref{fig:casestudy_results}. We focused on the two primary mechanisms identified by \method: (a) the \textbf{geometric phase} gradient resonance the \textbf{multipolar resonant} state. As the geometric phase can be directly validated by the surrogate model, our simulation primarily focuses on the multipolar resonant state.

\begin{figure*}[h!]
	\centering
	\begin{subfigure}{0.33\textwidth}
		\centering
		\includegraphics[width=\textwidth]{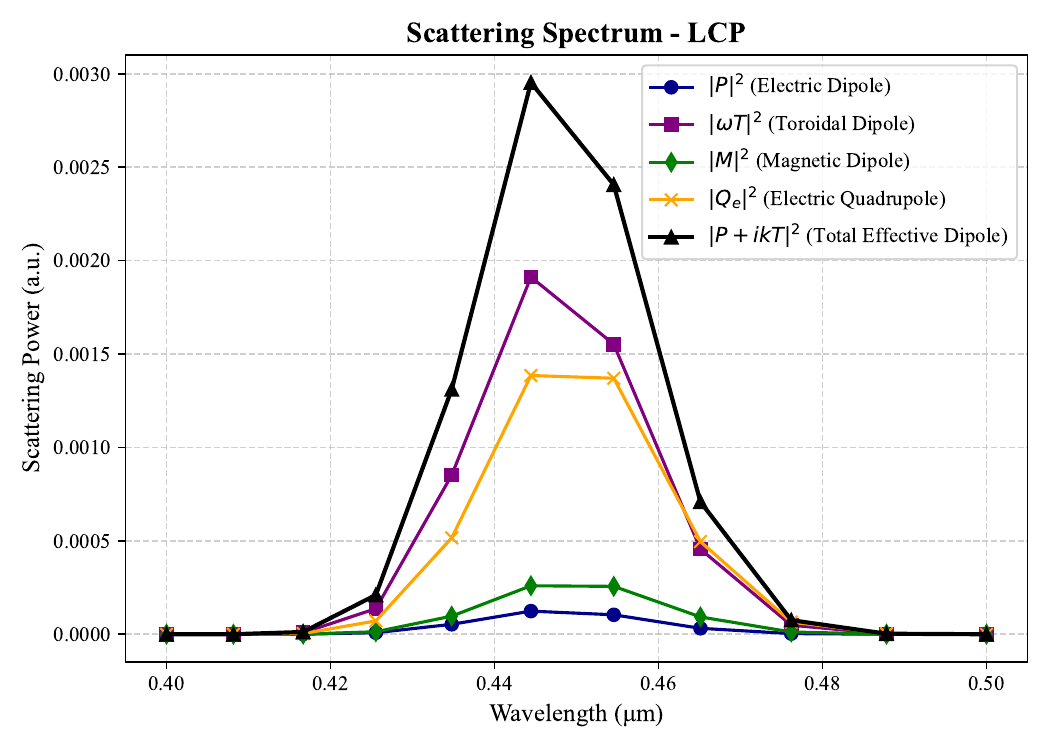}
		\caption{\textbf{Scattering Decomposition. }}
		\label{fig:casestudy_spectrum}
	\end{subfigure}
	\hfill
	\begin{subfigure}{0.31\textwidth}
		\centering
		\includegraphics[width=\textwidth]{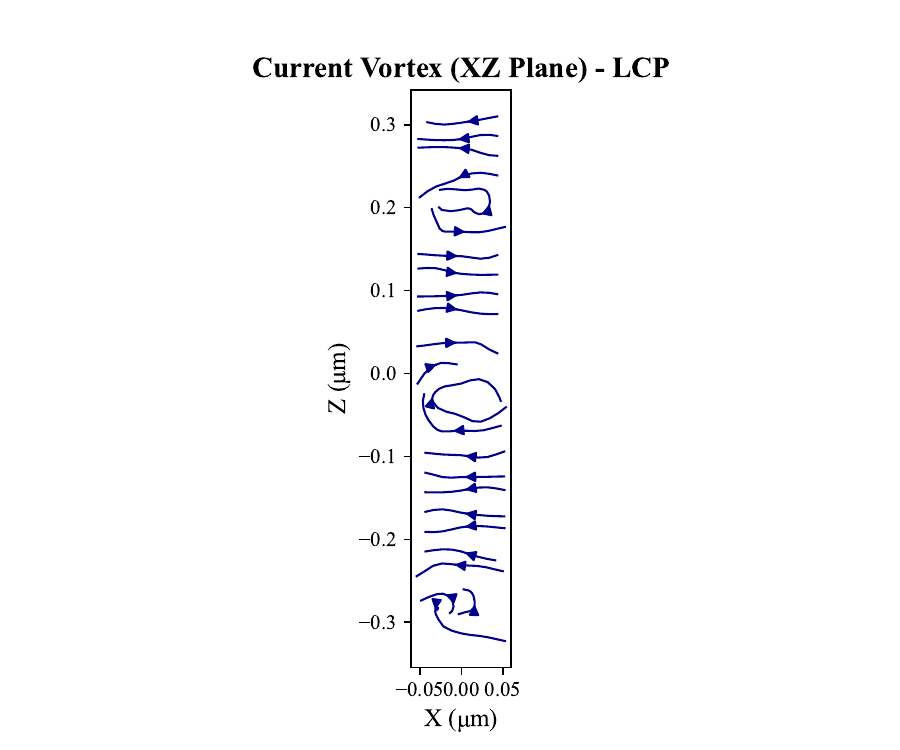}
		\caption{\textbf{Toroidal Current Vortex.}}
		\label{fig:casestudy_vortex}
	\end{subfigure}
	\hfill
	\begin{subfigure}{0.33\textwidth}
		\centering
		\includegraphics[width=\textwidth]{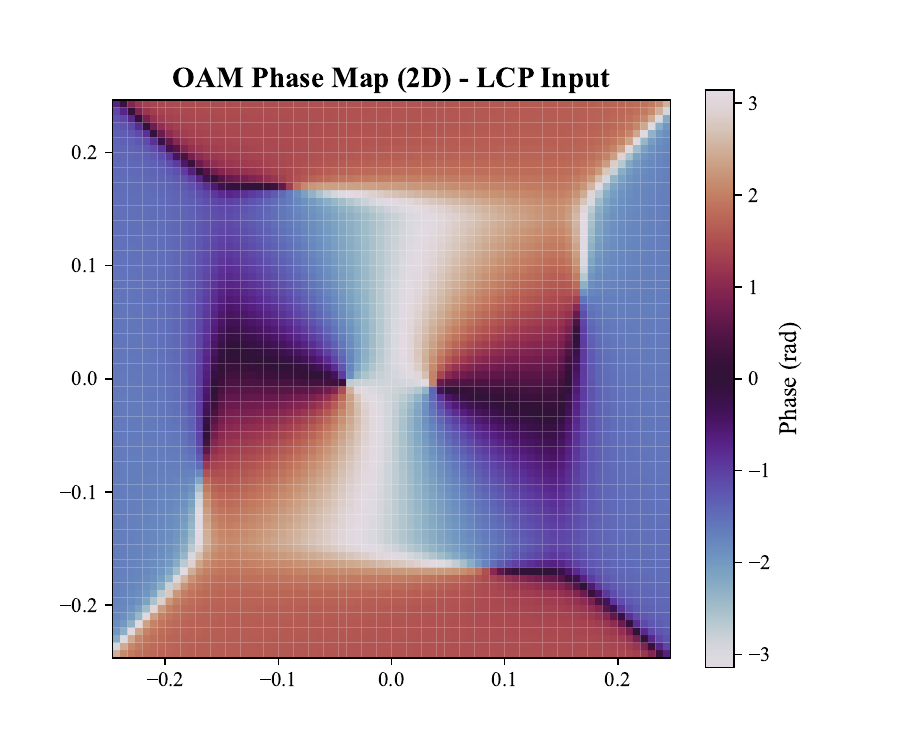}
		\caption{\textbf{Geometric Phase Ramp.}}
		\label{fig:casestudy_phase}
	\end{subfigure}
	\caption{\textbf{Full-wave FDTD verification of the agent's discovered principles.} (a) Spectral suppression of the effective dipole moment at the anapole resonance. (b) Visualization of the poloidal current loop supporting the toroidal dipole. (c) Transmitted phase distribution confirming geometric phase-induced wave-front shaping.}
	\label{fig:casestudy_results}
\end{figure*}

\paragraph{Evidence 1: Multipolar spectral convergence. }
We simulate the optimized short-pitch geometry ($P=60$\,nm, $R_h=25$\,nm) across the visible spectrum. As shown in Fig.~\ref{fig:casestudy_spectrum}, the scattering decomposition reveals a dominant excitation of the toroidal dipole ($T$). Crucially, at the agent's predicted resonance ($\lambda \approx 450$\,nm), the total effective dipole $|P + ikT|^2$ is strongly suppressed due to destructive interference (the ``Anapole'' condition), while the magnetic dipole and electric quadrupole remain active. \textbf{Following electrodynamic theory, this asymmetric suppression for LCP versus RCP light is the fundamental driver of the giant g-factor found by the agent.}

\paragraph{Evidence 2: Current topology and phase landscapes (Figure~\ref{fig:casestudy_vortex} and Figure~\ref{fig:casestudy_phase}).}
To verify the internal physics, we extract the current density distribution and transmitted phase ramps, and there are two observations:
\begin{enumerate}
	\item As shown in Fig.~\ref{fig:casestudy_vortex}, the streamplot of the current density in the $XZ$-plane exhibits a distinct poloidal vortex, which is the singular signature of a toroidal dipole.

	\item As shown in Fig.~\ref{fig:casestudy_phase}, for the long-pitch regime ($P \approx 200$\,nm), the $XY$-plane phase distribution of the cross-polarized transmitted field shows a spiral phase ramp corresponding to a topological charge of $l=1$. This is the direct evidence of the strong chirality powered by toroidal moment.
\end{enumerate}

This confirms the hypothesis from agent that the nanohelix acts as a geometric phase-gradient resonator, imparting polarization-dependent orbital angular momentum (OAM) to the light.

\noindent \textbf{Remark (Discovery vs. Retrieval).} While the electrodynamic theory is fundamental and has been well-studied in optics area, the success of \method in identifying mechanisms in the unknown system suggests high potential in aiding the scientific discovery. Importantly, initial rounds of the agent frequently proposed generic or slightly incorrect mechanisms (e.g., simple SPP Bragg scattering). It was only through the \textbf{iteration-anomaly-refinement} cycle, specifically after encountering high g-factors in electrically thin fibers and at non-integer turn counts, that the agent formulated the highly specific \textit{Helix Angle Gate} and \textit{Skin Depth Scaling} constraints.

\begin{takeaway}
	\noindent \textbf{Takeaway.} In this study, \method discovers principles, i.e., structural-property correlations specific to the metallic toroidal regime that, are not present in standard textbook descriptions of nanohelices. Thus, the performance of \method represents a genuine \textit{de novo} scientific discovery driven by evidence rather than mere pattern matching.
\end{takeaway}

\section{Benchmark and Task Definitions}
\label{sec:benchmark}
We follow \citet{Pu2025PiFlowPS} to use high-fidelity surrogate models for agents to execute hypothesis and obtain external feedback, with the same theoretical maximum or reference value, $\mu_{ref}$, for computing metrics (see Section~\ref{subsec:evaluation_metrics}): (a) for NHO task, $\mu_{ref}=2.0$ as the theoretical maximum of optical chirality, (b) for MBO task, we reference the empirical value $\mu_{ref}=6.5$ from \citet{Pu2025PiFlowPS}, (c) for SPO task, we use the reference value of $\mu_{ref}=298.5$, which is equal to room temperature.

Similarly, we use the same scenario of nanohelix material optimization (NHO), bio-molecular optimization (MBO) and superconductor critical temperature optimization (SPO). Additionally, we follow \citet{Song2025EvaluatingLL} to use transition metal complexity optimization (TMC) task with $\mu_{ref}=493.8$ to further evaluate our method. In subsections below, we discuss the detailed configurations of each task.

\subsection{Nanohelix Optimization (NHO)}
\label{subsec:nho_task_benchmark}
Synthesizing materials often need to consider multiple factors, not just structure parameters of material itself, but also the environment conditions that lead to the target observation.

As reported, the NHO task in~\citep{Pu2025PiFlowPS} only uses 4 parameters, i.e., helix radius, fiber radius, the number of turns and pitch length. Though these parameters can fully describe a nanohelix material, however, the target objective of chirality, essentially depends on the wavelength, optical direction, etc. Thus, we expand the scope of optimizing its geometric structure into the optimization of environment additionally, allowing higher dimensional hypothesis space than NHO task in PiFlow.

In our evaluation, the NHO task now includes: \textit{helix radius, fiber radius, the number of turns, pitch length, direction of light (forward or backward), observation axis (x axis, y axis or z axis) and the wavelength}. We train the surrogate model based on the dataset from \citet{Pu2025PiFlowPS} with same training protocol, and obtain the high-fidelity prediction model, the overall accuracy is up to 99.36\%, as shown in Figure~\ref{fig:nho}. In our experiments, \textbf{we run all baselines on this seven-dimensional NHO task for a fair comparison.}

\begin{figure}[h!]
	\centering
	\includegraphics[width=0.9\textwidth]{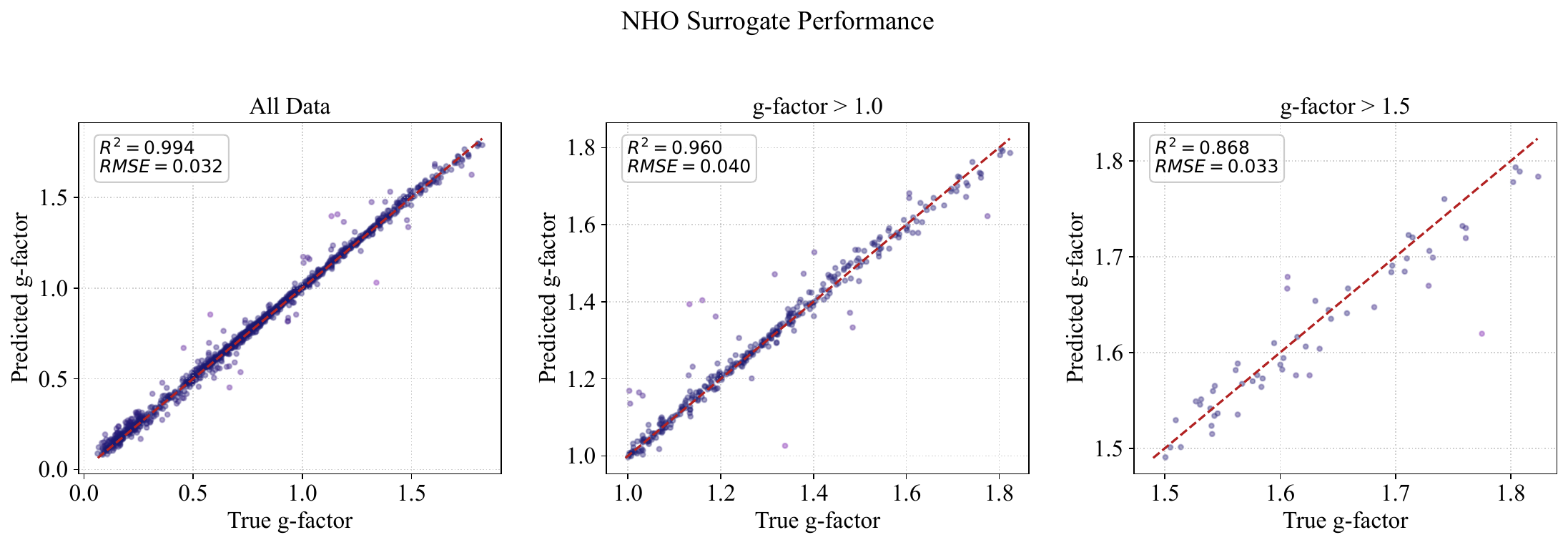}
	\caption{\textbf{Performance of Surrogate Model in NHO Task.} }
	\label{fig:nho}
\end{figure}

\noindent \textbf{Task and Objective. } In NHO task, SQ $\sim 100\%$ means the optimized nanohelix structure is approximately absorbing all light from the incident direction, indicating strong chirality. We define the exact task description below:

\begin{taskbox}
	\small
	\textbf{Nanohelix Structure Optimization}

	Find the parameter combination that maximizes its optical chirality (described by $g$-factor value $\in [0, 2]$).

	\medskip
	\quad \textbf{[Parameters for Hypothesizing (Important)]}
	\begin{itemize}[leftmargin=*, noitemsep, topsep=2pt]
		\item \textbf{fiber-radius} (20--60 nm): Radius of the actual fiber/wire that forms the helix (nm), or simply the thickness of a helix's wire. If the wire is 40nm thick, \texttt{fiber\_radius=20}.
		\item \textbf{helix-radius} (20--90 nm): The distance from central axis to the center of the helical path (nm). Simply, this is the radius of the cylinder that a spring wraps around.
		\item \textbf{n-turns} (float, 3.0--10.0): The number of turns in the helix.
		\item \textbf{pitch length} (60--200 nm): Axial distance between adjacent turns or vertical spacing between turns.
		\item \textbf{wavelength} (float, 400.0--800.0 nm): The specific optical wavelength of the incident light at which the $g$-factor is evaluated. The highest chirality often occurs in the 400--500 nm range.
		\item \textbf{x\_y\_z} (integer, 0, 1, or 2): Specifies the principal coordinate axis for the light source: 0 represents the $x$-axis, 1 represents the $y$-axis, and 2 represents the $z$-axis.
		\item \textbf{direction} (integer, 0 or 1): Indicates the polarity of light propagation along the selected axis: 0 represents the Positive (Forward) direction, and 1 represents the Negative (Backward) direction.
	\end{itemize}

	\medskip
	\quad \textbf{[Target Property for Optimizing]}
	\begin{itemize}[leftmargin=*, noitemsep, topsep=2pt]
		\item \textbf{$g$-factor} (range from 0.0 to 2.0) as high as possible, higher value means stronger chirality. This is always computed by \texttt{characterize\_nanohelix\_gfactor} tool of Experiment Agent.
	\end{itemize}

	\medskip
	\quad \textbf{[Instruction of Hypothesis/Principle Proposal]}
	\begin{itemize}[leftmargin=*, noitemsep, topsep=2pt]
		\item Do not introduce any numerical values in proposed principles.
		\item Describing the correlation or a dependency that can help to find the best chiral property with highest $g$-factor value is allowed.
		\item Every principle MUST predict a testable parameter-performance trend of parameter interactions.
		\item Every hypothesis must include exact values of all four parameters: \texttt{fiber-radius}, \texttt{helix-radius}, \texttt{n-turns}, \texttt{pitch}, \texttt{wavelength}, \texttt{x\_y\_z}, and \texttt{forward\_backward}.
		\item Do not introduce complex or meta principles that are beyond the parameter level, e.g., the quantum or electronic effects.
	\end{itemize}

	\medskip
	\quad \textbf{[Formation]}
	\begin{itemize}[leftmargin=*, noitemsep, topsep=2pt]
		\item Every proposed hypothesis MUST follow the form of a string, as exampled in the \textit{Last Record of the Experiment} below.
		\item Minor increase or decrease any parameters around $\pm 0.1$ is not allowed.
	\end{itemize}

	\medskip
	\hrule
	\medskip

	\quad \textbf{[Last Record of the Experiment]}
	\begin{itemize}[leftmargin=*, noitemsep, topsep=2pt]
		\item \textbf{Hypothesis} (string format): \texttt{"fiber\_radius=20.0, helix\_radius=60.0, n\_turns=6.0, pitch=20.0, wavelength=400.0, x\_y\_z=2, forward\_backward=0"} (A hypothesis MUST include all 7 parameters for doing experiment, any other forms or combinations will be strongly rejected)
		\item \textbf{Chiral property value:} $g$-factor $= -0.2613$
	\end{itemize}
\end{taskbox}

\noindent \textbf{Remarks on Dimensionality and Optimal Bounds.}
A counter-intuitive phenomenon illustrated in Table \ref{tab:nho_level_ablation} is that \method achieves higher Solution Quality (SQ) in the 7-dimensional task ($96.36\%$) compared to the lower-dimensional variants (e.g., 4D at $84.56\%$). While reducing dimensionality typically mitigates the \emph{curse of dimensionality}, implying easier optimization, our ablation setup constructs lower-dimensional tasks by \textbf{fixing} specific environmental parameters (e.g., observation axis or light direction) to constant values. Physical analysis reveals that the optical chirality of nanohelices is highly anisotropic~\citep{Wu2025MachineLS}. The global optimum ($g \approx 2.0$) resides in a narrow manifold requiring precise alignment of the incident light vector. In 4D, 5D, and 6D settings, the fixed parameters effectively \textbf{exclude} this global optimum from the search space, imposing a hard physical ceiling on the achievable g-factor ($\mu_{max}^{\text{constrained}} < \mu_{ref}$). Since SQ is consistently calculated against the global reference ($\mu_{ref}=2.0$), the lower scores in restricted settings reflect this physical bound rather than a deficiency in search efficiency. Conversely, the 7-dimensional space, despite its volume, fully encompasses the global optimum, enabling \method to exploit the complete parameter set to reach peak chirality.

\subsection{Molecular Bio-activity Optimization (MBO)}
\label{subsec:mbo_task_benchmark}

We keep the task as the same with PiFlow~\citep{Pu2025PiFlowPS}. The agents are tasked with the objective of searching the molecular structures that has highest bio-activity, quantified by its pChEMBL value.

\subsection{Superconductor Optimization (SPO)}
\label{subsec:spo_task_benchmark}

In SPO task, we found that the searching space involves much more entities that may lead to reward hacking, i.e., leveraging the limitations of surrogate models to propose candidates with extreme values that is outside the training data scope. To avoid this issue, we constrain the searching scope of original SPO task in PiFlow~\citep{Pu2025PiFlowPS} to specifically five superconductor systems. As shown in Figure~\ref{fig:spo_systems}, we plot the $T_c$ distribution of five representative superconductors. Our task is to let agents search among given material systems to reach the superconductor formula with the highest $T_c$.

\begin{figure}[h!]
	\centering
	\includegraphics[width=0.8\textwidth]{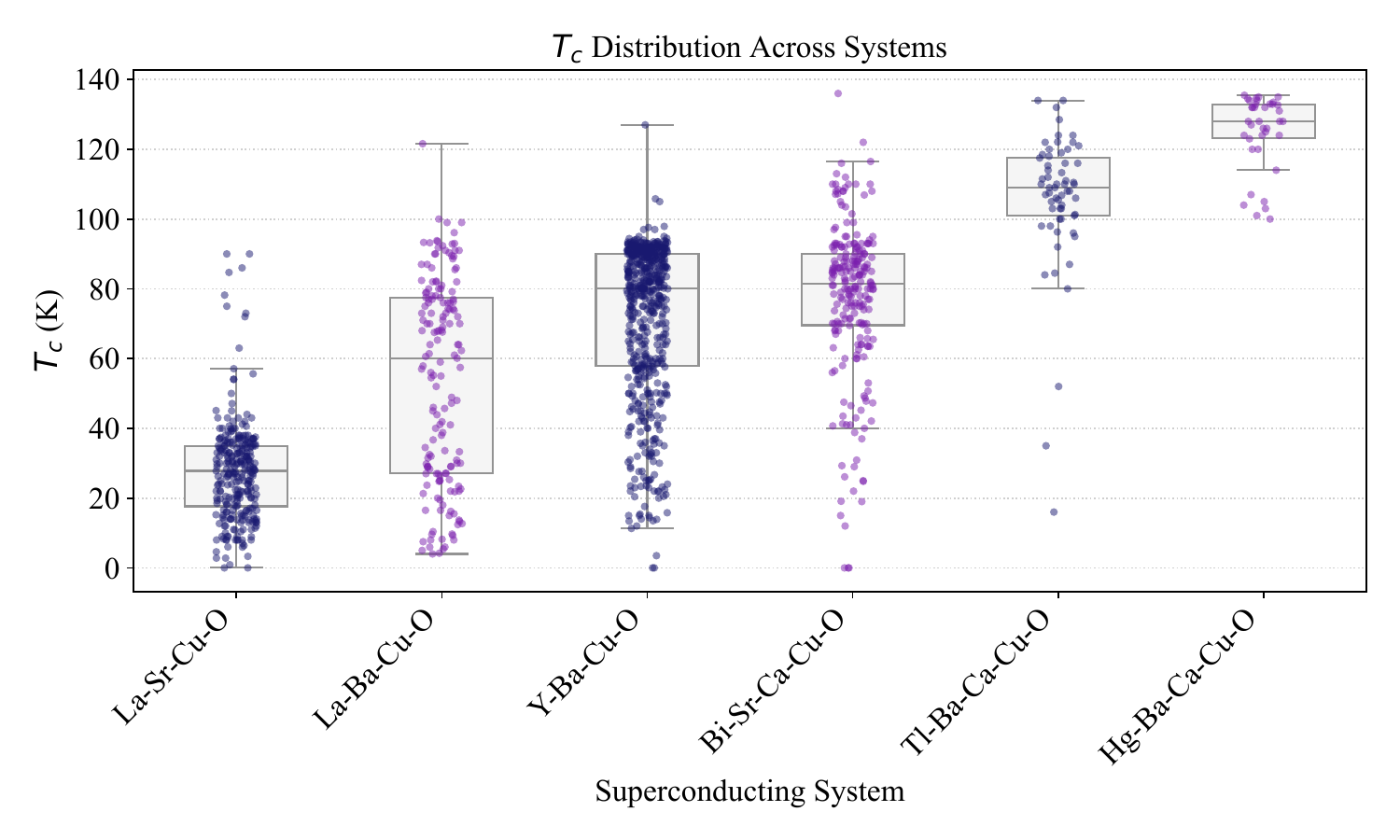}
	\caption{\textbf{Representative Systems of Superconductors Supported by Surrogate Model.} }
	\label{fig:spo_systems}
\end{figure}

\noindent \textbf{Task and Objective. } We define the exact task description below:

\begin{taskbox}
	\small
	This team work \textbf{MUST} follow the standard scientific research.
	This research task aims to discover a superconductor with a larger critical temperature (\texttt{Tc}, measured in Kelvin). All members operate within a Hypothesis-Validation mechanism. Our ultimate objective is to discover a material (within the given systems) with a \texttt{Tc} value as high as possible, approaching room temperature (298.15 K).

	\medskip
	\quad \textbf{[Important] Requirements:} \\
	The discovery process involves exploring the vast search space of elemental combinations and their specific atomic ratios by hypothesizing and testing its underlying physicochemical principles proposed by your group members. The element formula \textbf{MUSTN'T} consider any environment conditions, only \textit{Element-first \& Ratio-second} are allowed.

	\medskip
	\quad \textbf{[Important] Suggested Chemical Systems for Exploration} \\
	To accelerate discovery, we should \textbf{ONLY} focus on exploring candidates \textbf{within Copper Oxide chemical systems}: \\
	\relax[Only considering Elements (limited in each system) and Ratios (integer or float)]
	\begin{itemize}[leftmargin=*, noitemsep, topsep=2pt]
		\item Available systems are listed for you to design formulas to discover higher \texttt{Tc} superconductors:
		\item La-Sr-Cu-O System
		\item La-Ba-Cu-O System
		\item Y-Ba-Cu-O System
		\item Bi-Sr-Ca-Cu-O System
		\item Tl-Ba-Ca-Cu-O System
		\item Hg-Ba-Ca-Cu-O System
	\end{itemize}

	\medskip
	\quad \textbf{[EXPERIMENTAL REQUIREMENTS]}
	\begin{itemize}[leftmargin=*, noitemsep, topsep=2pt]
		\item \textbf{Candidate Format:} The \texttt{Candidate} \textbf{MUST} be a single string representing the material's exact Stoichiometric Formula.
		\item \textbf{Formula Content:} The formula string \textbf{MUST} only contain valid chemical element symbols and their numerical ratios.
		\item \textbf{A Note on Ratios:} The experimental measurement is deterministic and based on atomic proportions.
		\item \textbf{[STRICT]} \textbf{DO NOT} add conditions like "xxx/Nb", "xxx-Layered", etc. Only element with ratios (float) are acceptable.
	\end{itemize}

	\medskip
	\hrule
	\medskip

	\quad \textbf{[Example record of the experiment]}
	\begin{itemize}[leftmargin=*, noitemsep, topsep=2pt]
		\item \textbf{Candidate:} \texttt{"Y1Ba2Cu2.7Co0.3O6.88"} (Must follow the given systems to make hypotheses)
		\item \textbf{Property (Tc):} 30.54 K
	\end{itemize}
\end{taskbox}


\subsection{Transition Metal Complex Optimization (TMC)}
\label{subsec:tmc_task_benchmark}

To further evaluate the performance of PiEvo in the domain of materials science, we introduce the Transition Metal Complex (TMC) Optimization  task, adapted from the work of \citet{Song2025EvaluatingLL}. This task challenges the agents to discover novel transition metal compounds with optimized properties, specifically focusing on the trade-off between structural complexity and material performance.

From the perspective of hypothesizing, this task requires agent to use exact 4 molecules to form a new compound, and the property of the new compound is evaluated by the surrogate model. Original task is to retrieve from a TMC database~\citep{Song2025EvaluatingLL}. To let agent hypothesize divergently, we train a surrogate model by using all of its data corpus, including 1367499 records of given 50 ligands. Each TMC is constructed by four ligands in this pool.

\begin{figure}[h!]
	\centering
	\begin{subfigure}[b]{0.45\columnwidth}
		\centering
		\includegraphics[width=0.9\textwidth]{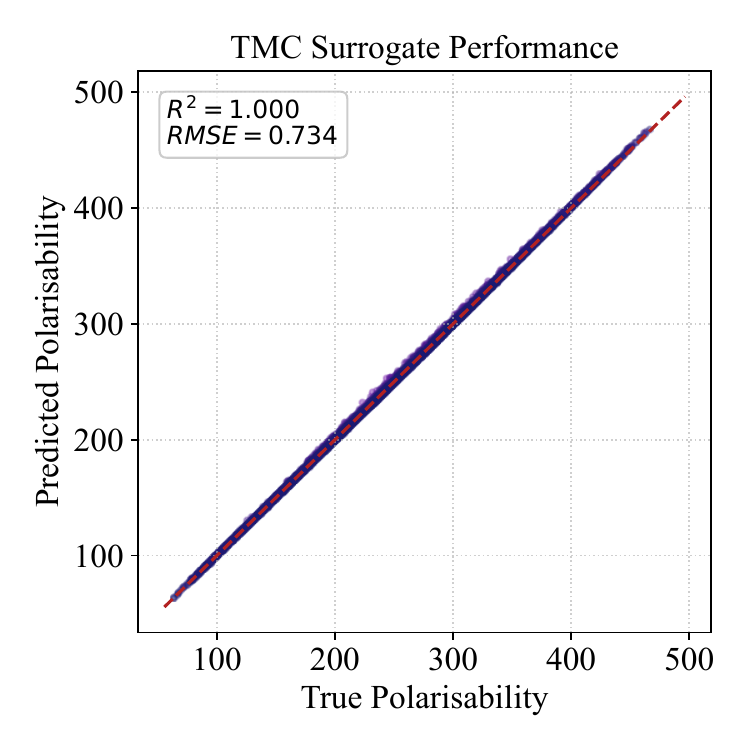}
		\caption{\textbf{Accuracy of Surrogate Model.}}
		\label{fig:tmc_surrogate_r2}
	\end{subfigure}
	\hspace{-12pt}
	\begin{subfigure}[b]{0.45\columnwidth}
		\centering
		\includegraphics[width=0.9\textwidth]{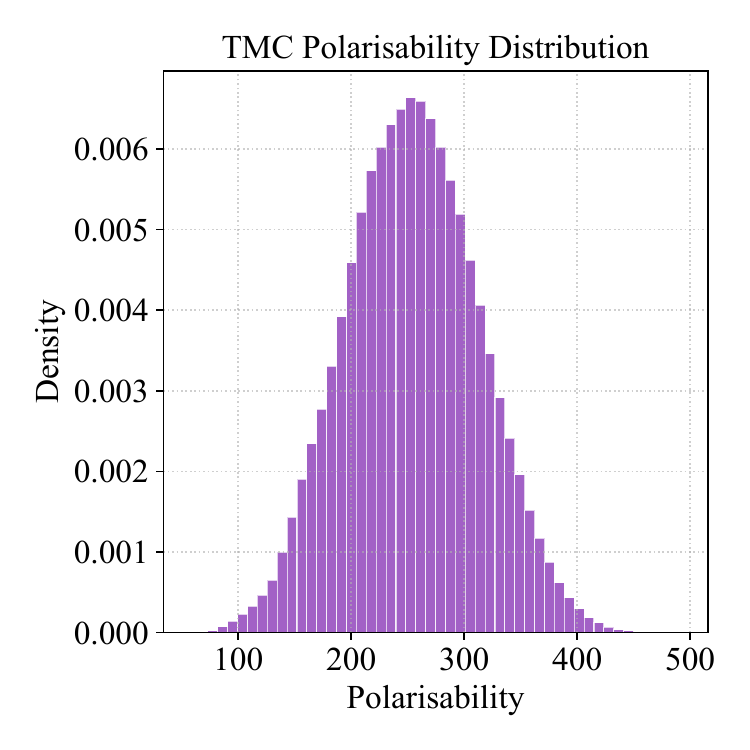}
		\caption{\textbf{Distribution of TMCs.}}
		\label{fig:tmc_distribution}
	\end{subfigure}
	\caption{\textbf{TMC Surrogate Model Accuracy and Property Distribution.}}
	\label{fig:tmc_surrogate_analysis}
\end{figure}

As shown in Figure~\ref{fig:tmc_surrogate_r2}, the surrogate model achieves an R$^2$ score of 1.00, indicating its capability to predict the properties of TMCs. Furthermore, we plot the distribution of TMCs in Figure~\ref{fig:tmc_distribution}. We found that most of polarisability, i.e., the target property of TMC, are centered on the 200--300 range, and the distribution is approximately normal.

\noindent \textbf{Task and Objective. } We define the exact task description below:

\begin{taskbox}
	\small
	The goal is to design a chemically valid, charge-neutral Transition Metal Complex (TMC) by selecting exactly four ligands from a specific candidate pool to coordinate with a central Palladium cation ($Pd^{2+}$) to maximize the polarizability (the ability of the electron cloud to deform). This task requires you to follow principles of physical organic chemistry, e.g., Hard and Soft Acids and Bases (HSAB) Theory and Electron Delocalization Theory.

	\medskip
	\textbf{1. Objective}
	\begin{itemize}[leftmargin=*, noitemsep, topsep=2pt]
		\item \textbf{Goal:} Identify the specific combination of ligands that maximizes the polarizability value (theoretical maximum value is 500.0).
		\item \textbf{Target Property:} Electronic Polarizability. Higher values are better.
		\item \textbf{Chemical Context:}
		      \begin{itemize}[noitemsep, topsep=0pt]
			      \item \textbf{Center:} Palladium (II) ion ($Pd^{2+}$).
			      \item \textbf{Ligands:} You must select 4 ligands (SMILES) from the provided pool.
		      \end{itemize}
		\item \textbf{Guiding Principles:}
		      \begin{enumerate}[noitemsep, topsep=0pt, label=\arabic*)]
			      \item Only focus on the molecular level interpretations for making hypotheses.
			      \item Do NOT mention quantum theory or unrelated physics for this task.
			      \item This task only involves molecular, atomic, and electronic properties, simply.
			      \item You could discover the mechanisms mentioned above yourself.
		      \end{enumerate}
	\end{itemize}

	\medskip
	\textbf{2. Critical Constraints} \\
	Any candidate that violates the following rules will be considered a FAILED experiment (outcome = 0).
	\begin{itemize}[leftmargin=*, noitemsep, topsep=2pt]
		\item \textbf{Charge Neutrality Rule (Strict):}
		      \begin{itemize}[noitemsep, topsep=0pt]
			      \item The Pd center has a charge of +2.
			      \item The total complex MUST be neutral (Charge = 0).
			      \item Therefore, the sum of charges of the 4 selected ligands MUST be -2.
		      \end{itemize}
		\item \textbf{Ligand Combination Logic:}
		      \begin{itemize}[noitemsep, topsep=0pt]
			      \item The provided ligand pool ONLY contains ligands with charge -1 or 0.
			      \item To satisfy the Charge Neutrality Rule ($-2$ total), you MUST select:
			            \begin{itemize}[noitemsep, topsep=0pt, label=$\bullet$]
				            \item Exactly TWO (2) ligands with charge $-1$.
				            \item Exactly TWO (2) ligands with charge 0.
			            \end{itemize}
			      \item Do not attempt any other combination (e.g., four $-0.5$ charge ligands do not exist in the pool).
		      \end{itemize}
		\item \textbf{Pool Restriction:}
		      \begin{itemize}[noitemsep, topsep=0pt]
			      \item You are strictly limited to the specific ligands provided in the table below.
			      \item No external molecules are allowed.
		      \end{itemize}
	\end{itemize}

	\medskip
	\textbf{3. Output Format Requirements} \\
	The team must output the hypothesis candidate string strictly adhering to this format:
	\begin{itemize}[leftmargin=*, noitemsep, topsep=2pt]
		\item \textbf{Prefix:} Must start with \texttt{Pd\_}.
		\item \textbf{Identifiers:} Use the exact SMILES string (e.g., \texttt{O=[C-]OC}) from the table. DO NOT use Formal Names.
		\item \textbf{Separator:} Use underscores `\_` to connect the Pd and the 4 ligand SMILES strings.
		\item \textbf{Format Template:} \texttt{Pd\_\{SMILES\}\_\{SMILES\}\_\{SMILES\}\_\{SMILES\}}
		\item \textbf{Example String:} \texttt{Pd\_c1ccccn1\_S(=O)(C)C\_C1=C[C-]=CC=C1\_[N-]=[N+]=[N-]}
		\item \textbf{The SMILES:} All used SMILES string MUST be in the given table; do not change styles or use equal format, otherwise the SMILES cannot be recognized.
	\end{itemize}

	\medskip
	\textbf{4. Limited Ligands Pool} (See Table~\ref{tab:tmc_ligands})

	\medskip
	\begin{itemize}[leftmargin=*, noitemsep, topsep=2pt]
		\item \textbf{HINT:} You could choose two from the IDs with charge=0, and other twos form the IDs with charge=$-1$ to meet with the constraint.
		\item \textbf{[Critical]} All charges of SMILES MUST be referenced from the table above; DO NOT only judge its charge from the SMILES string.
	\end{itemize}

	\medskip
	\textbf{5. Explanation of SMILES expression for molecules/ligands}
	\begin{itemize}[leftmargin=*, noitemsep, topsep=2pt]
		\item \textbf{Explicit Charge Syntax vs. Name:} BiasCharge is only defined by $+$ or $-$ symbols inside square brackets (e.g., \texttt{[O-]}, \texttt{[N+]}, \texttt{[Fe+2]}). An agent must strictly parse these tokens and ignore biases from chemical names.
		\item \textbf{Aromaticity and Electron Delocalization:} Atoms represented by lowercase letters (e.g., c, n, s, o) denote participation in an aromatic $\pi$-system. This is crucial for maximizing polarizability, as these atoms contribute significantly to the delocalized electron cloud.
		\item \textbf{Strict Valency in Square Brackets:} Atoms enclosed in square brackets (e.g., \texttt{[nH]}, \texttt{[C]}, \texttt{[Si]}) have their properties explicitly defined. If a specific H-count or charge is not written inside the bracket, it is zero.
		\item \textbf{Branching Hierarchy via Parentheses:} Parentheses \texttt{()} denote a branch connecting to the immediately preceding atom.
		\item \textbf{Ring Closure Markers:} Numeric digits (e.g., 1, 2) appearing after atoms indicate ring connection points. The agent must pair identical numbers to ``close'' the loop.
	\end{itemize}

	\medskip
	\hrule
	\medskip

	\textbf{[Last Record for Optimizing the Polarizability of TMC]}
	\begin{itemize}[leftmargin=*, noitemsep, topsep=2pt]
		\item \textbf{Hypothesis:} \texttt{"Pd\_c1ccccn1\_CP(C)C\_[Cl-]\_[Br-]"}
		\item \textbf{Tested Polarizability (outcome):} 198.45
		\item \textbf{Hypothesis:} \texttt{"Pd\_c1ccccn1\_S(=O)(C)C\_C1=C[C-]=CC=C1\_[N-]=[N+]=[N-]"}
		\item \textbf{Tested Polarizability (outcome):} 235.25
	\end{itemize}

	\medskip
	\textbf{[Task]} Based on the guiding principles and history, propose the next single best candidate string that is expected to yield the highest polarizability. Ensure it strictly follows the ``2 negative + 2 neutral'' combination rule.
\end{taskbox}

\begin{table*}[tp]
	\centering
	\scriptsize
	\caption{\textbf{Ligand Pool for TMC Optimization Task.} The pool is divided into anionic ligands (Charge = -1) and neutral ligands (Charge = 0). The TMC task requires agents to select two ligands from each category to form a TMC complex.}
	\label{tab:tmc_ligands}
	\begin{tabular}{lclc}
		\toprule
		\multicolumn{2}{c|}{\textbf{1. Ligands with Charge = -1}} & \multicolumn{2}{c}{\textbf{2. Ligands with Charge = 0}}                                            \\
		\cmidrule(lr){1-2} \cmidrule(lr){3-4}
		SMILES                                                    & Charge                                                  & SMILES                          & Charge \\
		\midrule
		\texttt{S1(=O)(=O)[N-]C(=O)c2c1cccc2}                     & -1                                                      & \texttt{n1c(cc(cc1C)C)C}        & 0      \\
		\texttt{O=[C-]OC}                                         & -1                                                      & \texttt{n1ccc(cc1)C}            & 0      \\
		\texttt{C1=C[C-]=CC=C1}                                   & -1                                                      & \texttt{[C-]\#[N+]C(C)(C)C}     & 0      \\
		\texttt{[I-]}                                             & -1                                                      & \texttt{CP(C)c1ccccc1}          & 0      \\
		\texttt{[S-]c1c(c(cc(c1F)F)F)F}                           & -1                                                      & \texttt{n1cccc(c1)Cl}           & 0      \\
		\texttt{C1CC(=O)[N-]C1=O}                                 & -1                                                      & \texttt{[C-]\#[N+]C1CCCCC1}     & 0      \\
		\texttt{O=N(=O)[O-]}                                      & -1                                                      & \texttt{n1c(cccc1C)C}           & 0      \\
		\texttt{[C-]1=CC=C(C=C1)F}                                & -1                                                      & \texttt{O}                      & 0      \\
		\texttt{[CH3-]}                                           & -1                                                      & \texttt{N\#CC}                  & 0      \\
		\texttt{[C-]1=CC=C(C=C1)C}                                & -1                                                      & \texttt{S(C)C}                  & 0      \\
		\texttt{O=N[O-]}                                          & -1                                                      & \texttt{c1ccnc(c1)N}            & 0      \\
		\texttt{[C-](F)(F)F}                                      & -1                                                      & \texttt{n1ccn(c1)C}             & 0      \\
		\texttt{[Cl-]}                                            & -1                                                      & \texttt{CN1[C]N(C)C=C1}         & 0      \\
		\texttt{[C-]\#N}                                          & -1                                                      & \texttt{n1ccc(cc1)N(C)C}        & 0      \\
		\texttt{[N-]=[N+]=[N-]}                                   & -1                                                      & \texttt{n1[nH]c(cc1C)C}         & 0      \\
		\texttt{[Br-]}                                            & -1                                                      & \texttt{[C-]\#[O+]}             & 0      \\
		\texttt{[S-]C\#N}                                         & -1                                                      & \texttt{O1CCNCC1}               & 0      \\
		\texttt{[N-]1C(=O)c2c(C1=O)cccc2}                         & -1                                                      & \texttt{NCC}                    & 0      \\
		\texttt{[C-]1=C(F)C(=C(C(=C1F)F)F)F}                      & -1                                                      & \texttt{CS(=O)C}                & 0      \\
		\texttt{c1c(C\#[C-])cccc1}                                & -1                                                      & \texttt{N}                      & 0      \\
		\texttt{[O-]c1ccccc1}                                     & -1                                                      & \texttt{c1ccccn1}               & 0      \\
		\texttt{[F-]}                                             & -1                                                      & \texttt{S(=O)(C)C}              & 0      \\
		\texttt{[S-]c1ccccc1}                                     & -1                                                      & \texttt{[C-]\#[N+]c1c(C)cccc1C} & 0      \\
		\texttt{C[C-]=O}                                          & -1                                                      & \texttt{CP(C)C}                 & 0      \\
		                                                          &                                                         & \texttt{C(C)NCC}                & 0      \\
		\bottomrule
	\end{tabular}
\end{table*}

\section{Prompt Templates in \method}
\label{sec:agent_prompts}

This section details the system prompts governing the three constituent agents in \method. As delineated in Section~\ref{sec:Methodology}, instructions are dynamically synthesized based on the evolving epistemic state---specifically, the posterior beliefs over the principle space $p_t(\mathcal{P})$ and detected anomalies $\mathcal{U}_t$. These templates calibrate agents to perform specialized cognitive operations: Principle Management ($\mathcal{A}_P$), Hypothesis Generation ($\mathcal{A}_H$), and Experimental Execution ($\mathcal{A}_E$). For readability, we present semantic instructions while omitting technical JSON schema constraints.

\subsection{Guidance for the Principle Agent}

The \pa regulates the evolution of the principle space $\mathcal{P}$, underpinning the \textit{Coherent Principle Augmentation} mechanism. Based on the anomaly detection state (Equation~\ref{eq:identify_anomalies}), \method selects one of the following instructions.

\paragraph{Diversifying Principle Space.} During initialization or principle space collapse, the system directs the agent to propose orthogonal mechanisms to maximize conceptual coverage.

\begin{taskbox}
	\small
	\textbf{[DIVERSIFYING PRINCIPLE SPACE]} \vspace{3pt}

	\textbf{EXISTING PRINCIPLES AND BELIEFS} \\
	\texttt{\{Active Principles\}} \vspace{3pt}

	\textbf{OBJECTIVE: MAXIMIZE DIVERSITY}
	\begin{itemize}[leftmargin=*, noitemsep, topsep=1pt]
		\item Propose \textbf{one novel, creative principle} to address the current problem (target: high reward).
		\item Formulate mechanisms that are \textbf{conceptually distant} from the existing set.
		\item Each principle must be self-contained, incorporating its logical premises and reasoning context.
		\item Incremental modifications or minor variants of existing principles are \textbf{strictly prohibited}.
	\end{itemize}
\end{taskbox}

\paragraph{Anomaly-Driven Augmentation.} Systemic triggers for augmentation occur when high-surprisal observations contradict current beliefs. Instructions diverge based on the confidence of the Maximum A Posteriori (MAP) principle.

\noindent \textit{(Scenario A) High Confidence with Local Anomalies:} The primary principle is valid but requires boundary refinement.

\begin{taskbox}
	\small
	\textbf{[HIGH CONFIDENCE, LOCAL ANOMALIES]} \vspace{3pt}

	\textbf{CURRENT TOP-POTENTIAL PRINCIPLE:} "\texttt{\{top\_principle\_text\}}" \vspace{2pt}

	This principle demonstrates performance but fails to account for the following anomalies: \\
	\texttt{\{anomalies\}} \vspace{3pt}

	\textbf{CURRENT PRINCIPLE SET} \\
	\texttt{\{Active Principles\}} \vspace{3pt}

	\textbf{OBJECTIVE: REFINE AND RECONCILE}
	\begin{itemize}[leftmargin=*, noitemsep, topsep=1pt]
		\item \textbf{Do NOT} replace the top principle. Refine it based on local evidence.
		\item Propose a variant or refinement by synthesizing commonalities among anomalies.
		\item Integrate specific exceptions or boundary conditions to resolve failures without compromising core validity.
	\end{itemize}
\end{taskbox}

\noindent \textit{(Scenario B) Low Confidence with Systematic Failures:} Current principles are fundamentally insufficient, necessitating new conceptual discovery.

\begin{taskbox}
	\small
	\textbf{[LOW CONFIDENCE, SYSTEMATIC FAILURES]} \vspace{3pt}

	Significant anomalies have been detected that current principles cannot explain: \\
	\texttt{\{anomalies\}} \vspace{3pt}

	\textbf{CONTEXT: INSUFFICIENT PRINCIPLES} \\
	\texttt{\{Active Principles\}} \vspace{3pt}

	\textbf{OBJECTIVE: CONCEPTUAL DISCOVERY}
	\begin{itemize}[leftmargin=*, noitemsep, topsep=1pt]
		\item Perform pattern analysis on these failures and bridge them with auxiliary knowledge.
		\item Propose a \textbf{novel, creative principle} that resolves the underlying cause of these outcomes.
		\item The new principle must maximize explanatory power across patterns missed by existing principles.
		\item To ensure interpretability, the principle \textbf{MUST} align strictly with documented observations.
	\end{itemize}
\end{taskbox}

\paragraph{Silence (No Action).} To prevent hallucination and redundancy, if no anomalies are detected and the principle set is stable, the agent is instructed to return nothing.

\subsection{Guidance for the Hypothesis Agent}

The \ha is governed by the \textit{Information-Directed Selection} (IDS) strategy, generating candidates to either exploit high-posterior principles or explore high-uncertainty regions.

\paragraph{Exploitation Mode.} When a dominant principle ($P^\star$) emerges, the agent is directed to maximize outcomes through guided exploitation.

\begin{taskbox}
	\small
	\textbf{Task Context:} \texttt{\{\_task\}} \vspace{3pt}

	\textbf{EXPERIMENTAL MEMORY (TESTED HYPOTHESES)} \\
	\texttt{\{tested candidates\}} \vspace{3pt}

	\textbf{TARGET SCIENTIFIC PRINCIPLE} \\
	"\texttt{\{top belief principle text\}}" \vspace{3pt}

	\textbf{INSTRUCTION} \\
	Propose \textbf{three} novel candidate hypotheses in JSON format following the target principle. Given our high confidence in $P^\star$, your objective is to \textbf{maximize rewards} by rigorously applying this principle to derive superior candidates. \vspace{3pt}

	\textbf{REQUIREMENTS}
	\begin{itemize}[leftmargin=*, noitemsep, topsep=1pt]
		\item Generate 3 distinct, untested hypotheses (A, B, C) with high predicted utility.
		\item \textbf{STRICTLY PROHIBITED:} Do not propose any hypothesis already present in the memory list.
		\item Utilize patterns from the exploitation memory to inform and differentiate your proposals.
	\end{itemize}
\end{taskbox}

\paragraph{Exploration Mode.} During initialization or high principle uncertainty, the agent performs broad exploration to reduce entropy in the hypothesis space.

\begin{taskbox}
	\small
	\textbf{Task Context:} \texttt{\{\_task\}} \vspace{3pt}

	\textbf{EXPERIMENTAL MEMORY (TESTED HYPOTHESES)} \\
	\texttt{\{tested candidates\}} \vspace{3pt}

	\textbf{INSTRUCTION} \\
	Provide exploratory candidate hypotheses in JSON format: \\
	\texttt{\{task description\}}
\end{taskbox}

\subsection{Guidance for the Experiment Agent}

The \ea serves as the execution layer. It is constrained to implement the specific hypothesis $h_t$ selected via IDS, ensuring high-fidelity translation from theoretical selection to empirical results.

\begin{taskbox}
	\small
	\textbf{PRIMARY TASK (EXPERIMENT EXECUTION)} \\
	\texttt{\{task\}} \vspace{5pt}

	\textbf{INSTRUCTION} \\
	Execute the assigned hypothesis using the provided tools. \vspace{3pt}

	\textbf{SELECTED HYPOTHESIS}
	\begin{itemize}
		\item \texttt{\{selected hypothesis\}}
	\end{itemize} \vspace{3pt}

	\textbf{OPERATIONAL CONSTRAINTS}
	\begin{itemize}[leftmargin=*, noitemsep, topsep=1pt]
		\item You are \textbf{only} authorized to test the specified candidate; others are strictly forbidden.
		\item Upon completion, submit results via the \texttt{EXPERIMENT\_SUBMISSION} signal.
		\item Ensure the \texttt{EXPERIMENT\_SUBMISSION} token is correctly assigned following tool output.
	\end{itemize}
\end{taskbox}

\end{document}